\title{\LARGE\bf Optimal Rates for Vector-Valued Spectral Regularization Learning Algorithms \vspace{1em}}
\author{ 
    Dimitri Meunier\thanks{Equal Contribution.} $^{,}$\thanks{Gatsby Computational Neuroscience Unit, University College London, \href{mailto:dimitri.meunier.21@ucl.ac.uk }{dimitri.meunier.21@ucl.ac.uk}, \href{mailto:arthur.gretton@gmail.com}{arthur.gretton@gmail.com},\href{mailto:zhu.li@ucl.ac.uk}{zhu.li@ucl.ac.uk}}\\
    \and
    Zikai Shen\footnotemark[1] $^{,}$\thanks{University College London, \href{mailto:zikai.shen.22@ucl.ac.uk}{zikai.shen.22@ucl.ac.uk}}\\
    \and
    Mattes Mollenhauer\thanks{Merantix Momentum, \href{mailto:mattes.mollenhauer@merantix-momentum.com}{mattes.mollenhauer@merantix-momentum.com}}\\
    \and
    Arthur Gretton\footnotemark[2]\\
    \and
    Zhu Li\footnotemark[2]
}
\date{}
\begin{document}
\maketitle
\begin{abstract}
We study theoretical properties of a broad class of regularized algorithms with vector-valued output. These \textit{spectral algorithms} include kernel ridge regression, kernel principal component regression, various
implementations of gradient descent and many more. 
Our contributions are twofold. First, we rigorously confirm
the so-called \textit{saturation effect} for ridge regression with vector-valued output by deriving a novel lower bound on learning rates; this bound is shown to be suboptimal when the smoothness of the regression function exceeds a certain level.
Second, we present the upper bound for the finite sample risk 
general vector-valued spectral algorithms, applicable to both well-specified and misspecified scenarios (where the true regression function lies outside of the hypothesis space) which
is minimax optimal in various regimes. All of our results explicitly allow the case of infinite-dimensional output variables, proving consistency of recent practical applications.
\end{abstract}

\section{Introduction}

We investigate a fundamental topic in modern machine learning---the behavior and efficiency
of learning algorithms for regression  in high-dimensional and potentially infinite-dimensional output spaces $\cY$. Given two random variables $X$ and $Y$, we seek to empirically minimize the squared expected risk
\begin{equation}
    \label{eq:risk_intro}
    \mathcal{E}(F):= \mathbb{E}\left[\|Y - F(X)\|^2_{\mathcal{Y}} \right]
\end{equation}
over functions $F$ in a \textit{reproducing kernel Hilbert space} consisting of vector-valued functions
from a topological space $\cX$ to a Hilbert space $\mathcal{Y}$.
The study of this setting as an ill-posed statistical inverse problem is well established: see e.g.\ \citealp{smale2007learning,Caponnetto2007,yao2007early,bauer2007regularization,blanchard2018optimal, fischer2020sobolev}. In this work,
we study the setting when $\mathcal{Y}$ is high- or infinite-dimensional, since it  has been less well covered by the literature,  yet has many applications in multitask regression \citep{caponnetto2008multi,baldasarre2012multi} and infinite-dimensional learning problems, including the conditional mean embedding \citep{grunewalder2012conditional,grunewalder2012modelling, park2020measure}, structured prediction \citep{ciliberto2016consistent, ciliberto2020general}, causal inference \citep{10.1093/biomet/asad042}, regression with instrumental and proximal variables \citep{singh2019kernel,mastouri2021proximal}, the estimation of linear operators and dynamical systems \citep{song2009hilbert,mollenhauer2020nonparametric,kostic2022learning,mollenhauer2022learning,kostic2023koopman}, and functional regression \citep{kadri2016functional}. Interestingly,
the aforementioned infinite-dimensional applications typically use the classical
\textit{ridge regression} algorithm. Our goal is to motivate the use of alternative learning algorithms in these settings, while providing strong theoretical guarantees.

Classically, the ill-posed problem \eqref{eq:risk_intro} is solved via regularization strategies, which are often implemented in terms
of so-called \textit{spectral filter functions}
in the context of inverse problems in Hilbert spaces~\citep{EnglHankeNeubauer2000}. When applied to the learning problem given by \eqref{eq:risk_intro}, these filter functions correspond to learning algorithms including ridge regression, a variety of different implementations of \textit{gradient descent}, \textit{principal component regression} and many more (we refer the reader to \citealp{LoGerfo2008SpectralAF}
and \citealp{baldasarre2012multi} for overviews of the real-valued and vector-valued output variable case, respectively). Algorithms based on spectral filter functions when $\cY = \mathbb{R}$ are studied extensively, see e.g.\ \cite{blanchard2018optimal, lin2020optimal}. To the best of our knowledge, the detailed behavior of this general class of algorithms has remained unknown when $\cY$ is a general Hilbert space, with the exception of a few results for special cases in the setting of ridge regression \citep{Caponnetto2007, lietal2023optimal}.

\textbf{Overview of our contributions.}
In this manuscript, we aim to theoretically understand vector-valued spectral learning algorithms. 
The contribution of our work is twofold: (i) we
rigorously confirm the \textit{saturation effect} of ridge regression for general Hilbert spaces $\cY$ (see paragraph below) in the context of lower bounds on rates for the learning problem \eqref{eq:risk_intro}
and (ii) we cover a gap in the existing literature by  providing \textit{upper rates for general 
spectral algorithms} in high- and infinite-dimensional spaces.
Our results explicitly allow  the \textit{misspecified learning case} in which the true regression function is not contained in the hypothesis space. We base our analysis on
the concept of \textit{vector-valued interpolation spaces} introduced by \cite{lietal2022optimal, lietal2023optimal}. The interpolation space norms measure the smoothness of the true regression function,
replacing typical source conditions found in the literature which only cover the well-specified case. \textit{To the best of our knowledge, these are the first bounds covering this general setting for vector-valued spectral algorithms.} 

\textbf{Saturation effect of ridge regression.}
The widely-used ridge regression algorithm is known to exhibit the so-called saturation effect:
it fails to exploit additional smoothness in the target function beyond a certain threshold.
This effect has been thoroughly investigated in the context of Tikhonov regularization in inverse problems \citep[Chapter 5]{EnglHankeNeubauer2000}, but is generally reflected only in
upper rates in the learning literature, see e.g.\ \cite{lin2020optimal,blanchard2018optimal}.
Interestingly, existing lower bounds
\citep{Caponnetto2007, blanchard2018optimal,lietal2023optimal} are usually formulated in a more general setting and do not reflect this
saturation effect, leaving a gap between upper and lower rates.
We leverage the bias-variance decomposition paradigm to lower bound the learning risk of kernel ridge regression with vector-valued output,
in order to close this gap. 

\textbf{Learning rates of vector-valued spectral algorithms.}
Motivated by the fact that the saturation effect is technically unavoidable with vector-valued ridge regression, 
we proceed to study the generalization error of popular alternative learning algorithms. In particular, we provide upper rates 
in the vector-valued setting
consistent with the known behavior of spectral algorithms in the real-valued learning
setting, based on their so-called \textit{qualification property}
\citep{blanchard2018optimal,lin2020optimal}. 
In particular, we confirm that a saturation effect can
be bypassed in high and infinite dimensions by algorithms such as
principal component regression and gradient descent, allowing 
for a better sample complexity for high-smoothness problems.
Furthermore, we study the misspecified setting and show that upper rates for spectral algorithms match the state-of-the-art upper rates for 
misspecified vector-valued ridge regression recently
obtained by \cite{lietal2023optimal}. Those rates are optimal for a wide variety of settings. Moreover, we argue that applications of vector-valued spectral algorithms are easy to implement by making use
of an extended \textit{representer theorem} based on \cite{baldasarre2012multi}, allowing for the
numerical evaluation based on empirical data---even in the infinite-dimensional case.

\textbf{Related Work.}
The saturation effect of regularization techniques in deterministic inverse problems is well-known. For example, \cite{neubauer1997converse,mathe2004saturation,herdman2011global} study the saturation effect for 
Tikhonov regularization and general spectral algorithms. In the kernel statistical learning framework, 
the general phenomenon of saturation is discussed by e.g.\ \cite{bauer2007regularization,gerfo2008spectral}. Recent work by \cite{li2023on} investigates saturation effect in the learning context by providing a lower bound on the learning rate. To the best of our knowledge, however, all studies in the statistical learning context focus on the case when $Y$ is real-valued.
General upper bounds of kernel ridge regression with real-valued or finite-dimensional $Y$ have been extensively studied in the literature (see e.g., \cite{caponnetto2007optimal,steinwart2009optimal,caponnetto2010cross,fischer2020sobolev}), where minimax optimal learning rates are derived. Recent work \cite{lietal2022optimal, lietal2023optimal} studies the infinite-dimensional output space setting with Tikhonov regularization and obtains analogous minimax optimal learning rates. For kernel learning with spectral algorithms, existing work (see e.g., \cite{bauer2007regularization,blanchard2018optimal,lin2018optimal,lin2020optimal,zhang2023spectraloptimality}) focuses on real-valued output space setting and obtains optimal upper learning rates depending on the qualification number of the spectral algorithms, 
%However, corresponding lower bounds found in these references do not include saturation effects. 
 where only \cite{zhang2023spectraloptimality} considers the misspecified learning scenario where the target regression function does not lie in the hypothesis space. However, general investigations of spectral algorithms for vector-valued output spaces are absent in the literature.

\textbf{Structure of this paper.}
This paper is structured as follows. In Section~\ref{sec:bg}, we introduce mathematical preliminaries related to reproducing kernel Hilbert spaces, vector-valued regression as well as the concept of vector-valued interpolation spaces. Section~\ref{sec:saturation} contains a brief review the so-called saturation effect and a corresponding novel lower bound for vector-valued kernel ridge regression. In Section~\ref{sec:spectral_algorithms}, we investigate general spectral learning algorithms in the context of vector-valued interpolations spaces 
and provide our main result: upper learning rates for this setting.

\section{Background and Preliminaries}\label{sec:bg}

Throughout the paper, we consider a random variable $X$ (the covariate) defined on a second countable locally compact Hausdorff space\footnote{Under additional technical assumptions, the results in this paper can also be formulated when $\cX$ is a more general topological space. However, some properties of kernels defined on $\cX$ such as the so-called \textit{$c_0$-universality} \citep{carmeli2010vector} simplify the exposition when $\cX$ is a second countable locally compact Hausdorff space.} $\cX$ endowed with its Borel $\sigma$-field $\mathcal{F}_{\cX}$, and the random variable $Y$ (the output) defined on a potentially infinite dimensional separable real Hilbert space $(\mathcal{Y}, \langle \cdot, \cdot \rangle_{\mathcal{Y}})$ endowed with its Borel $\sigma$-field $\mathcal{F}_{\mathcal{Y}}$. We let $(\Omega, \mathcal{F},\mathbb{P})$ be the underlying probability space with expectation operator $\mathbb{E}$. Let $P$ be the push-forward of $\mathbb{P}$ under $(X,Y)$ and $\pi$ and $\nu$ be the marginal distributions on $\cX$ and $\cY$, respectively; i.e., $X \sim \pi$ and $Y \sim \nu$. We use the Markov kernel $p: \cX \times \mathcal{F}_{\mathcal{Y}} \rightarrow \mathbb{R}_+$ to express
the distribution of $Y$ conditioned on $X$ as 
\[
\mathbb{P}[Y \in A|X = x] = \int_{A} p(x,dy),
\] for all $x \in \cX$ and events $A \in \mathcal{F}_{\mathcal{Y}}$,
see e.g.~\citet{dudley2002}. We introduce some notation related
to linear operators on Hilbert spaces and vector-valued integration; 
formal definitions can be found in Appendix~\ref{sec:bg_appendix}
for completeness, or we refer the reader to \citet{weidmann80linear,diestel77vector}. The spaces of Bochner square-integrable functions with respect to $\pi$ and taking values in 
$\mathcal{Y}$ are written as $L_2(\cX,\mathcal{F}_{\cX},\pi; \mathcal{Y})$, abbreviated as $L_2(\pi; \mathcal{Y})$. We obtain the classical Lebesgue spaces as $L_2(\pi):= L_2(\pi; \R)$.
Throughout the paper, we write $[F]$ or
more explicitly $[F]_\pi$
for the $\pi$-equivalence class of (potentially pointwise
defined) measurable functions from $\cX$ to $\cY$,
which we naturally interpret as elements in $L_2(\pi; \mathcal{Y})$ whenever
they are square integrable.
Let $H$ be a separable real Hilbert space with inner product $\langle \cdot, \cdot \rangle_{H}$. We write $\mathcal{L}(H,H')$ as the Banach space of bounded linear operators from $H$ to another Hilbert space $H'$, equipped with the operator norm $\|\cdot\|_{H \rightarrow H'}$. When $H=H'$, we simply write $\mathcal{L}(H)$ instead. We write $S_2(H,H')$ as the Hilbert space of Hilbert-Schmidt operators from $H$ to $H'$ and $S_1(H,H')$ as the Banach space of trace class operators (see Appendix~\ref{sec:bg_appendix} for a complete definition). 
For two Hilbert spaces $H,H'$, we say that $H$ is (continuously) embedded in $H'$ and denote it as 
$H \hookrightarrow H'$ if $H$ can be interpreted as a vector subspace of $H'$ and the inclusion operator $i: H \to H'$ 
performing the change of norms with $i x = x$ for $x \in H$ is continuous; and we say that $H$ is isometrically isomorphic to $H'$ and denote it as $H \simeq H'$ if there is a linear isomorphism between $H$ and $H'$ which is an isometry.

\textbf{Tensor Product of Hilbert Spaces:} Denote $H\otimes H'$  the tensor product of Hilbert spaces $H$, $H'$. The element $x\otimes x' \in H \otimes H'$ is treated as the linear rank-one operator $x\otimes x': H' \rightarrow H$ defined by $y' \rightarrow \langle y',x' \rangle_{H'}x $ for $y' \in H'$. 
Based on this identification, the tensor product space $H\otimes H'$ is isometrically isomorphic to the space of Hilbert-Schmidt operators from $H'$ to $H$, i.e., $H\otimes H' \simeq S_2(H',H)$. We will hereafter not make the distinction between these two spaces, and treat them as being identical.

\begin{rem}[\citealp{aubin2000applied}, Theorem 12.6.1]\label{rem:tensor_product}
Consider the Bochner space $L_2(\pi;H)$ where $H$ is a separable Hilbert space. One can show that $L_2(\pi;H)$ is isometrically identified with the tensor product space $H \otimes L_2(\pi)$, and we denote as $\Psi$ the isometric isomorphism between the two spaces. See Appendix~\ref{sec:bg_appendix} for more details on tensor product spaces and the explicit definition of $\Psi$.
\end{rem}

{\bf Scalar-valued Reproducing Kernel Hilbert Space (RKHS).} 
We let $k: \cX \times \cX \rightarrow \mathbb{R}$ be a symmetric and positive definite kernel function and $\cH$ be a vector space of functions from $\cX$ to $\mathbb{R}$, endowed with a Hilbert space structure via an inner product $\langle\cdot, \cdot\rangle_{\cH}$. We say that $k$ is a reproducing kernel of $\cH$ if and only if for all $ x \in \cX$ we have $k(\cdot,x) \in \cH$ and for all $x \in \cX$ and $f \in \cH$, we have $f(x)=\left\langle f, k(x,\cdot)\right\rangle_{\cH}$. A space $\cH$ which possesses a reproducing kernel is called a reproducing kernel Hilbert space (RKHS; see e.g.\ \citealp{berlinet2011reproducing}). We denote the canonical feature map of $\cH$ as $\phi(x) = k(\cdot,x)$. 

We require some technical assumptions on the previously defined RKHS and kernel, which we assume to be satisfied throughout the text:
\begin{enumerate}[itemsep=0em]
    \item \label{assump:separable}
    $\cH$ is separable, this is satisfied
    if $k$ is continuous, given that $\cX$ is separable;\footnote{This follows from \citet[Lemma 4.33]{steinwart2008support}. Note that the Lemma requires separability of $\cX$, which is satisfied since we assume that $\cX$ is a second countable locally compact Hausdorff space.}
    \item \label{assump:measurable}
    $k(\cdot,x)$ is measurable for $\pi$-almost all $x \in \cX$;
    \item \label{assump:bounded}
    $k(x,x) \leq \kappa^2$ for $\pi$-almost all $x \in \cX$.
\end{enumerate}
The above assumptions are not restrictive in practice, as well-known kernels such as the Gaussian, Laplace and Mat{\'e}rn kernels satisfy them on $\cX \subseteq \mathbb{R}^d$ \citep{sriperumbudur2011universality}. We now introduce some facts about the interplay between $\cH$ and $L_{2}(\pi),$ which has been extensively studied by \cite{smale2004shannon,smale2005shannon}, \cite{de2006discretization} and \cite{steinwart2012mercer}. We first define the (not necessarily injective) embedding $I_{\pi}: \cH \rightarrow L_{2}(\pi)$, mapping a function $f \in \cH$ to its $\pi$-equivalence class $[f]$. The embedding is a well-defined compact operator as long as its Hilbert-Schmidt norm is finite. In fact,
this requirement is satisfied since its Hilbert-Schmidt norm can be computed as \citep[][Lemma 2.2 \& 2.3]{steinwart2012mercer}
$\left\|I_{\pi}\right\|_{S_{2}\left(\cH, L_{2}(\pi)\right)}=\|k\|_{L_{2}(\pi)} \leq \kappa.$
The adjoint operator $S_{\pi}:=I_{\pi}^{*}: L_{2}(\pi) \rightarrow \cH$ is an integral operator with respect to the kernel $k$, i.e. for $f \in L_{2}(\pi)$ and $x \in \cX$ we have \citep[][Theorem 4.27]{steinwart2008support}
$$
\left(S_{\pi} f\right)(x)=\int_{\cX} k\left(x, x^{\prime}\right) f\left(x^{\prime}\right) \mathrm{d} \pi\left(x^{\prime}\right).
$$
Next, we define the self-adjoint, positive semi-definite and trace class integral operators
$$
L_{X}:=I_{\pi} S_{\pi}: L_{2}(\pi) \rightarrow L_{2}(\pi) \quad \text { and } \quad C_{X}:=S_{\pi} I_{\pi}: \cH \rightarrow \cH.
$$

{\bf Vector-valued Reproducing Kernel Hilbert Space (vRKHS).} Let $K: \cX \times \cX \rightarrow \mathcal{L}(\cY)$ be an operator valued positive-semidefinite (psd) kernel. Fix $K$, $x \in \cX$, and $h \in \cY$, then $\left(K_x h\right)(\cdot):=K(\cdot, x) h$
defines a function from $\cX$ to $\cY$. The completion of 
$$
\mathcal{G}_{\text {pre }}:=\operatorname{span}\left\{K_x h \mid x \in \cX, h \in \cY\right\}
$$
with inner product on $\mathcal{G}_{\text {pre }}$ defined on the elementary elements as $\left\langle K_x h, K_{x^{\prime}} h^{\prime}\right\rangle_{\mathcal{G}}:=\left\langle h, K\left(x, x^{\prime}\right) h^{\prime}\right\rangle_{\cY}$, defines a vRKHS denoted as $\mathcal{G}$. For a more complete overview of the vector-valued reproducing kernel Hilbert space, we refer the reader to \cite{carmeli2006vector}, \cite{carmeli2010vector} and \citet[][Section 2]{lietal2023optimal}. In the following, we will denote $\mathcal{G}$ as the vRKHS induced by the kernel $K: \cX \times \cX \rightarrow \mathcal{L}(\mathcal{Y})$ with 
\begin{equation} \label{eq:v_kernel}
K(x,x') := k(x,x')\operatorname{Id}_{\mathcal{Y}}, \quad x,x' \in \cX.
\end{equation}
We emphasize that this family of kernels is the de-facto standard for
high- and infinite-dimensional applications 
\citep{grunewalder2012conditional,grunewalder2012modelling, park2020measure,ciliberto2016consistent, ciliberto2020general,singh2019kernel,mastouri2021proximal,10.1093/biomet/asad042,mollenhauer2020nonparametric,kostic2022learning,mollenhauer2022learning,kostic2023koopman,kadri2016functional}
due to the crucial \textit{representer theorem} which gives a closed form
solution for the ridge regression problem based on the data.
We generalize this representer theorem to cover the general
spectral algorithm case in Proposition~\ref{prop:dual_formulation}.

\begin{rem}[General multiplicative kernel]
\label{rem:general_kernel}
    Without loss of generality, we provide our results for the vRKHS
    $\mathcal{G}$ induced by the operator-valued kernel given by
    $K(x,x') = k(x,x')\operatorname{Id}_{\mathcal{Y}}$.
    However, with suitably adjusted constants in the assumptions, 
    our results transfer directly to the more general
    vRKHS $\widetilde{\mathcal{G}}$ induced by the more general operator-valued kernel 
    \[
    \widetilde K(x,x') := k(x,x')T
    \]  
    where $T: \mathcal{Y} \to \mathcal{Y}$
    is any positive-semidefinite self-adjoint operator. The precise
    characterization of the adjusted constants is given by \citet[Section 4.1]{lietal2023optimal}.
\end{rem}

An important property of $\mathcal{G}$ is that
it is isometrically isomorphic to the space of Hilbert-Schmidt operators between $\cH$ and $\mathcal{Y}$ \citep[Corollary 1]{lietal2023optimal}. Similarly to the scalar case we can map every element in $\cG$ into its $\pi-$equivalence class in $L_2(\pi;\cY)$ and we use the shorthand notation $[F] = [F]_{\pi}$ (see Definition~\ref{def:embedding} in Appendix~\ref{sec:bg_appendix} for more details).

\begin{theo}[vRKHS isomorphism]\label{theo:operep}
For every function $F\in \mathcal{G}$ there exists a unique operator $C \in S_2(\cH, \mathcal{Y})$ such that $F(\cdot) = C\phi(\cdot) \in \mathcal{Y}$ with $\|C\|_{S_2(\cH, \mathcal{Y})} = \|F\|_{\mathcal{G}}$ and vice versa. Hence $\cG \simeq S_2(\cH, \mathcal{Y})$ and we denote the isometric isomorphism between $S_2(\cH, \mathcal{Y})$ and $\mathcal{G}$ as $\bar{\Psi}$. It follows that $\cG$ can be written as $\mathcal{G}= \left\{F: \cX \rightarrow \mathcal{Y} \mid F = C\phi(\cdot),\, C \in S_2(\cH, \mathcal{Y})\right\}$.
\end{theo}

\subsection{Vector-valued Regression} 
\label{sec:vv-regression}
We briefly recall the basic setup of regularized least squares regression with Hilbert space-valued random variables. The squared expected risk for vector-valued regression is 
\begin{equation}\label{eqn:exp_risk}
\mathcal{E}(F):= \mathbb{E}\left[\|Y - F(X)\|^2_{\mathcal{Y}} \right] = \int_{\cX \times \mathcal{Y}} \|y - F(x)\|^2_{\mathcal{Y}}p(x,dy)\pi(dx), 
\end{equation}
for measurable functions $F: \cX \rightarrow \mathcal{Y}$. The analytical minimizer of the risk over all those measurable functions is the \textit{regression function} or the \textit{conditional mean function} $F_\star \in L_2(\mathbb{\pi}; \cY)$ given by
\begin{equation*}
F_*(x):= \mathbb{E}[Y \mid X = x] = \int_{\mathcal{Y}} y \, p(x,dy), \quad x \in \cX.
\end{equation*}
Throughout the paper, we assume that $\E[\Vert Y \Vert_\cY^2] < +\infty$, i.e., the random variable $Y$ is square integrable. Note that this implies $F_\star \in L_2(\pi; \cY)$. Our focus in this work is to approximate $F_*$ with kernel-based regularized least-squares algorithms, where we pay special attention to the case when $\cY$ is of high or infinite dimension. We pick $\mathcal{G}$ as a hypothesis space of functions in which to estimate $F_*$. Note that by Theorem~\ref{theo:operep}, minimizing the functional $\mathcal{E}$ on $\cG$ is equivalent to minimizing the following functional on $S_2(\cH, \cY)$,
\begin{equation}\label{eq:risk_G}
\bar{\mathcal{E}}(C):= \mathbb{E}\left[\|Y - C\phi(X)\|^2_{\mathcal{Y}} \right].
\end{equation}
It is shown in \citet[Proposition 3.5 and Section 3.4]{mollenhauer2022learning} that the optimality condition can be written as
\begin{equation} \label{eq:normal_eq}
C_{YX} = C_*C_{X}, \qquad C_* \in S_2(\cH, \cY),
\end{equation}
where $C_{YX} := \E[Y \otimes \phi(X)]$ is the cross-covariance operator. 
As discussed in full detail by \citet{mollenhauer2022learning},
the problem \ref{eq:normal_eq} can be formulated
as a potentially ill-posed inverse problem on the space of Hilbert--Schmidt operators. As such, a regularization
is required; we introduce regularized solutions of this problem in Section~\ref{sec:spectral_algorithms} through
the classical concept of spectral filter functions.

\subsection{Vector-valued Interpolation Space and Source Condition}
\label{sec:interpolation}

We now introduce the background required in order to  characterize the smoothness of the target function $F_*,$ both in the well-specified setting ($F_* \in \cG$) and in the misspecified setting ($F_* \notin \mathcal{G}$).  We review the results of 
\cite{steinwart2012mercer} and \cite{fischer2020sobolev} in constructing scalar-valued interpolation spaces, and \cite{lietal2022optimal} in defining vector-valued interpolation spaces.

{\bf Real-valued Interpolation Space:} By the spectral theorem for self-adjoint compact operators, there exists an at most countable index set $I$, a non-increasing sequence $(\mu_i)_{i\in I} > 0$, and a family $(e_i)_{i \in I} \in \cH$, such that $\left([e_i]\right)_{i \in I}$ 
\footnote{We recall that the bracket $[\cdot]$ denotes the embedding that maps $f$ to its equivalence class $I_{\pi}(f) \in L_2(\pi)$.} is an orthonormal basis (ONB) of $\overline{\text{ran}~I_{\pi}} \subseteq L_2(\pi)$ and $(\mu_i^{1/2}e_i)_{i\in I}$ is an ONB of $\left(\operatorname{ker} I_{\pi}\right)^{\perp} \subseteq \cH$, and we have 
\begin{equation} \label{eq:SVD}
    L_{X} = \sum_{i\in I} \mu_i \langle\cdot, [e_i] \rangle_{L_2(\pi)}[e_i], \qquad C_{X} = \sum_{i \in I} \mu_i \langle\cdot, \mu_i^{\frac{1}{2}}e_i \rangle_{\cH} \mu_i^{\frac{1}{2}}e_i
\end{equation}

For $\alpha \geq 0$, the $\alpha$-interpolation space  \cite{steinwart2012mercer} is defined by
\[[\cH]^{\alpha}:=\left\{\sum_{i \in I} a_{i} \mu_{i}^{\alpha / 2}\left[e_{i}\right]:\left(a_{i}\right)_{i \in I} \in \ell_{2}(I)\right\} \subseteq L_{2}(\pi),\]
equipped with the inner product
 \[\left\langle \sum_{i \in I}a_i(\mu_i^{\alpha/2}[e_i]), \sum_{i \in I}b_i(\mu_i^{\alpha/2}[e_i]) \right\rangle_{[\cH]^{\alpha}} = \sum_{i \in I} a_i b_i,\] 
for $\left(a_{i}\right)_{i\in I}, \left(b_{i}\right)_{i\in I} \in \ell_{2}(I)$. The $\alpha$-interpolation space defines a Hilbert space. Moreover, $\left(\mu_{i}^{\alpha / 2}\left[e_{i}\right]\right)_{i \in I}$ forms an ONB of $[\cH]^{\alpha}$ and consequently $[\cH]^{\alpha}$ is a separable Hilbert space. In the following, we use the abbreviation $\|\cdot\|_{\alpha}:=\|\cdot\|_{[\cH]^{\alpha}}$.

{\bf Vector-valued Interpolation Space:} Introduced in \cite{lietal2022optimal}, vector-valued interpolation spaces generalize the notion of scalar-valued interpolation spaces to vRKHS with a kernel of the form \eqref{eq:v_kernel}. 

\begin{defn}[Vector-valued interpolation space]\label{def:inter_ope_norm} Let $k$ be a real-valued kernel with associated RKHS $\cH$ and let $[\cH]^{\alpha}$ be the real-valued interpolation space associated to $\cH$ with some $\alpha \geq 0$. The vector-valued interpolation space $[\mathcal{G}]^{\alpha}$ is defined as (refer to Remark~\ref{rem:tensor_product} for the definition of $\Psi$)
\[[\mathcal{G}]^{\alpha} 
:= \Psi\left(S_2([\cH]^{\alpha},\mathcal{Y}) \right) = \{F \mid F = \Psi(C), ~C \in S_2([\cH]^{\alpha},\mathcal{Y})\}.\]
The space $[\mathcal{G}]^{\alpha}$ is a Hilbert space equipped with the inner product $$\left\langle F, G \right\rangle_{\alpha} := \left\langle C, L\right\rangle_{S_2([\cH]^{\alpha}, \mathcal{Y})} \qquad (F,G \in [\mathcal{G}]^{\alpha}),$$ where $C = \Psi^{-1}(F),$ $L = \Psi^{-1}(G)$. For $\alpha = 0$, we retrieve $\|F\|_{0} = \|C\|_{S_2(L_2(\pi), \mathcal{Y})}.$
\end{defn}

\begin{rem}[Interpolation space inclusions]
\label{rem:vector_inter} Note that we have $F_* \in L_2(\pi;\mathcal{Y})$ since $Y \in L_2(\P; \cY)$ by assumption. Furthermore, for $0< \beta <\alpha$, \citet[][Eq.~(7)]{fischer2020sobolev} imply the inclusions\[[\mathcal{G}]^{\alpha} \hookrightarrow [\mathcal{G}]^{\beta}  \hookrightarrow [\mathcal{G}]^{0}\subseteq L_2(\pi;\mathcal{Y}).\] 
%Again, once we further assume that $supp(\pi) = \cX$ and $\cH$ is dense in $L_2(\pi)$, we have $[\mathcal{G}]^{0} = L_2(\cX,\mathcal{F}_{\cX},\pi;\mathcal{Y})$. 
Under assumptions \ref{assump:separable} to \ref{assump:bounded} and with $\cX$ being a second-countable locally compact Hausdorff space, $[\mathcal{G}]^{0} = L_2(\pi;\mathcal{Y})$ if and only if $\cH$ is dense in the space of continuous functions vanishing at infinity, equipped with the uniform norm \citep[][Remark 4]{lietal2023optimal}. 
\end{rem}

\begin{rem}[Well-specified versus misspecified setting]\label{rem:well_specified} We say that we are in the well-specified setting if $F_* \in [\cG]^{1}$. In this case, there exists $\bar{F} \in \cG$ such that $F_* = \bar{F}$ $\pi-$almost surely and $\|F_*\|_{1} = \|\bar{F}\|_{\cG}$, i.e. $F_*$ admits a representer in $\cG$ (see Remark~\ref{rem:well_specified_repr} in Appendix~\ref{sec:bg_appendix}). When $F_* \in [\cG]^{\beta}$ for $\beta < 1$, $F_*$ may not admit such a representation and we are in the misspecified setting, as $[\cG]^1 \subseteq [\cG]^{\beta}$.
\end{rem}

Definition~\ref{def:inter_ope_norm} and Remarks~\ref{rem:vector_inter} and \ref{rem:well_specified} motivate the use of following assumption on the smoothness of the target function:
there exists $\beta > 0$ and a constant $B \geq 0$ such that $F_* \in [\mathcal{G}]^{\beta}$
\begin{equation}\label{asst:src}
\|F_*\|_{\beta} \leq B. \tag{SRC}
\end{equation}
We let $C_{*} := \Psi^{-1}(F_*) \in S_2([\cH]^{\beta}, \mathcal{Y})$. \eqref{asst:src} directly generalizes the notion of a so-called Hölder-type source condition in the learning literature \citep{caponnetto2007optimal,fischer2020sobolev,lin2018optimal,lin2020optimal} and allows to characterize the misspecified learning scenario.

\subsection{Further Assumptions} \label{sec:assumptions}

In addition to \eqref{asst:src}, we require standard assumptions to obtain the precise learning rate for kernel learning algorithms. We list them below. For constants $D_2 >0$ and $p \in (0,1]$ and for all $i \in I$,
\begin{equation}\label{asst:evd}
    \mu_i \leq D_2i^{-1/p}.\tag{EVD}
\end{equation}
For constants $D_1,D_2 >0$ and $p \in (0,1)$ and for all $i \in I$,
\begin{equation}\label{asst:evd_plus}
    D_1i^{-\frac{i}{p}}\leq \mu_i \leq D_2i^{-1/p}.\tag{EVD+}
\end{equation}
\eqref{asst:evd} and \eqref{asst:evd_plus} are standard assumptions on the \textit{eigenvalue decay} of the integral operator: they describe the interplay between the marginal distribution $\pi$ and the RKHS $\cH$ (see more details in \citealp{caponnetto2007optimal,fischer2020sobolev}). \eqref{asst:evd_plus} is needed in order to establish lower bounds on the excess risk. Note that we have excluded the value $p=1$ from \eqref{asst:evd_plus}; indeed, $p=1$ is incompatible with the assumption of a bounded kernel, a fact missed by previous works and of independent interest (see Appendix, Remark~\ref{rem:p_1_bounded_issue}).

For $\alpha \in [p, 1]$, the inclusion $I^{\alpha, \infty}_{\pi}: [\cH]^{\alpha} \hookrightarrow L_{\infty}(\pi)$ is continuous, and $\exists A > 0$ such that
\begin{equation}\label{asst:emb}
\|I^{\alpha, \infty}_{\pi}\|_{[\cH]^{\alpha} \rightarrow L_{\infty}(\pi)} \leq A. \tag{EMB}
\end{equation}
Property \eqref{asst:emb} is referred to as the \textit{embedding property} in \cite{fischer2020sobolev}.
It can be shown that it holds if and only if there exists a constant $A \geq 0$ with $\sum_{i \in I} \mu_i^{\alpha} e_i^2(x) \leq A^2$ for $\pi$-almost all $x \in \cX$ \citep[Theorem 9]{fischer2020sobolev}. Since we assume $k$ to be bounded, the embedding property always hold true when $\alpha = 1$. Furthermore, \eqref{asst:emb} implies a polynomial eigenvalue decay of order $1/\alpha$, which is why we take $\alpha \geq p$. \eqref{asst:emb} is not needed when we deal with the well-specified setting, but is crucial to bound the excess risk in the misspecified setting. 
    
Finally, we assume that there are constants $\sigma, R > 0$ such that
\begin{equation}\label{asst:mom}
\int_{\mathcal{Y}}\|y- F_{*}(x)\|_{\mathcal{Y}}^q p(x,dy) \leq \frac{1}{2}q!\sigma^2R^{q-2}, \tag{MOM}
\end{equation}
is satisfied for $\pi$-almost all $x \in \cX$ and all $q \geq 2$.
The \eqref{asst:mom} condition on the Markov kernel $p(x,dy)$ is a \textit{Bernstein moment condition} used to control the noise of the observations (see \citealp{caponnetto2007optimal,fischer2020sobolev} for more details). If $Y$ is almost surely bounded, for example $\|Y\|_{\cY} \leq Y_{\infty}$ almost surely, then \eqref{asst:mom} is satisfied with $\sigma=R=2Y_{\infty}$. It is possible to prove that the Bernstein condition is equivalent to sub-exponentiality, see \citet[Remark 4.9]{mollenhauer2022learning}.

\section{Saturation Effect of Kernel Ridge Regression}
\label{sec:saturation}

The most established way of learning $F_*$ is by kernel ridge regression (KRR), which can be formulated as the following optimization problem: given a dataset $D = \{(x_i, y_i)\}_{i=1}^n$ independently and identically sampled from the joint distribution of $X$ and $Y$,
\begin{equation}\label{eq:ridge_estim}
\hat{F}_{\lambda}:= \argmin_{F \in \mathcal{G}} \frac{1}{n}\sum_{i = 1}^n \left\|y_i - F(x_i)\right\|^2_{\mathcal{Y}} + \lambda \|F\|_{\mathcal{G}}^2, %\label{eqn:vkrr_func}
\end{equation}
where $\lambda > 0$ is the regularization parameter. The generalization error of vector-valued KRR is expressed as $\hat{F}_{\lambda} - F_*$, and controlled in different norms: see \cite{lietal2023optimal} for
an extensive study. We recall here a simplified
special case of the key results obtained in this work. In the next Theorem, $\lesssim, \gtrsim$ are inequality up to positive multiplicative constants that are independent of $n$.

\begin{theo}[Upper and lower bounds for KRR in the well-specified regime] \label{theo:saturation}
Let $\hat{F}_{\lambda}$ be the KRR estimator from \eqref{eq:ridge_estim}.
Furthermore, let the conditions \eqref{asst:evd_plus}, \eqref{asst:src} and \eqref{asst:mom} be satisfied for some $0 < p \leq 1$ and $\beta \geq 1$. Then, with high probability
we have
    \begin{equation*}
        \left\|[\hat{F}_{\lambda_n}] - F_*\right\|^2_{L_2(\pi)} \lesssim n^{-\frac{\min\{\beta,2\}}{\min\{\beta,2\} + p}}
        \quad
        \text{for a choice } \lambda_n = \Theta \left(n^{-\frac{1}{\beta + p}}\right),
    \end{equation*}
    and furthermore for all learning methods 
    (i.e., measurable maps) of the form
    $D \to \hat{F}_D$, \[\|[\hat{F}_D] - F_*\|^2_{L_2(\pi)} \gtrsim n^{-\frac{\beta}{\beta+p}}.\]
\end{theo}
Theorem~\ref{theo:saturation} shows the minimax optimal learning rate for vector-valued KRR for $\beta \in [1,2]$. However, when $\beta > 2$, the obtained upper bound saturates at \smash{$n^{-\frac{2}{2+p}}$}, creating a gap with the lower bound. This phenomenon is referred to as the saturation effect of Tikhonov regularization, and has been well investigated in deterministic inverse problems \citep{neubauer1997converse}. 
In the case where $\cY$ is real-valued, \cite{li2023on} prove that the saturation effect cannot be avoided with Tikhonov regularization. Below, we give a similar but generalized bound on lower rates for the case
that $\cY$ is a Hilbert space.
For this result only, we assume that $\cX$ is a compact subset of $\mathbb{R}^d$. We give the proof in Appendix~\ref{sec:appendix_saturation}. 

\begin{theo}[Saturation of KRR]
\label{thm:saturation_effect_vector_valued KRR_maintext}
Let $\cX$ be a compact subset of $\mathbb{R}^d$.
Let $\lambda = \lambda(n)$ be an arbitrary choice of regularization parameter satisfying $\lambda(n) \to 0$ as $n \to +\infty$ and let \smash{$\hat{F}_{\lambda}$} be the KRR estimator from \eqref{eq:ridge_estim}. We assume that the noise is non-zero and bounded below, i.e. there exists $\sigma>0$, such that
$$
\int_{\mathcal{Y}}\|y- F_{*}(x)\|_{\mathcal{Y}}^2 p(x,dy) \geq \sigma^2, 
$$
is satisfied for $\pi$-almost all $x \in \cX$. We assume in addition and for this result only that $k$ is Hölder continuous (see Definition~\ref{def:holder} in the appendix), i.e., $k \in C^{\theta}(\cX \times \cX)$ for $\theta \in (0,1]$. Suppose that Assumptions~\eqref{asst:evd_plus} and \eqref{asst:src} hold with $p \in (0,1)$ and $\beta \geq 2$. For $\tau \geq 0$, for sufficiently large $n>0$, where the hidden index bound depends on $\tau$, with probability greater than $1-e^{-\tau}$, there exists some constant $c_{\tau}>0$ such that
\begin{equation*}
\mathbb{E}\left[\left\|[\hat{F}_{\lambda}]-F_{\ast}\right\|^2_{L_2(\pi; \cY)} \Big\mid x_1, \ldots, x_n\right] \geq c_{\tau} n^{-\frac{2}{2+p}}.
\end{equation*}
\end{theo}

The assumption that $k$ is Hölder continuous is crucial in lower bounding the variance with a covering number argument. Kernels satisfying this assumption include Gaussian kernels, Laplace kernels and Mat{\'e}rn kernels. Theorem~\ref{thm:saturation_effect_vector_valued KRR_maintext} clearly demonstrates that the learning rate from vector-valued KRR cannot reach the information theoretic lower rate. 

% {As discussed before, \cite{li2023on} obtain a similar lower bound in the real-valued case. However, we highlight two fundamental differences with \cite{li2023on} in the proof. While both works adopt the same bias-variance decomposition, we need to lower bound the bias and the variance term with infinite-dimensional output in our setting. Moreover, we adopt a different and simpler approach in proving the lower bound. For a detailed discussion, please refer to Remark~\ref{rem:comparison_saturation} in the Appendix.}
As discussed above, \cite{li2023on} propose a similar lower bound in the real-valued case, and we now highlight two fundamental differences with \cite{li2023on} in the proof. First, while both works adopt the same bias-variance decomposition, we need to lower bound the bias and the variance term with infinite-dimensional output in our setting. Second, we adopt a different and simpler approach in proving the lower bound, since there are a number of issues with the  proof of \cite{li2023on}, both in the treatment of the bias and of the variance. For a detailed comparison with the earlier work, and an explanation of the differences in our approach, please refer to Remark~\ref{rem:comparison_saturation} in the Appendix.

\section{Consistency and optimal rates for general spectral algorithms}
\label{sec:spectral_algorithms}

{\bf Regularized population solution}: Our goal is to regularize
\eqref{eq:normal_eq} in such a way that we get a unique and well-defined solution that provides a good approximation to $F_*$. 
We first recall the concept of a filter function 
(i.e., a function on an interval which is applied on self-adjoint operators to each individual eigenvalue via
the spectral calculus, see \citealp{EnglHankeNeubauer2000}), that will allow to define a regularization strategy. 
One may think of the following definition
as a class of functions approximating
the inversion map $x \mapsto 1/x$ while still
being defined for $x=0$ in a reasonable way.
We use the definition given by \cite{Lin_2020},
but equivalent definitions can be found throughout
the literature.

\begin{defn}[Filter function]
    \label{def:genreg_reg_fn}
Let $\Lambda\subseteq \mathbb{R}^{+}$. A family of functions $g_{\lambda}: [0,\infty)\to [0,\infty)$ indexed by $\lambda\in \Lambda$ is called a filter with qualification $\rho\geq 0$ if it satisfies the following two conditions:
\begin{enumerate}
    \item There exists a positive constant $E$ such that, for all $\lambda\in \Lambda$
    \begin{equation}
        \label{eq:filter_fn_axiom1}
        \sup_{\alpha\in[0,1]}\sup_{x\in \left[0,\kappa^2\right]}\lambda^{1-\alpha}x^{\alpha}g_{\lambda}(x)\leq E
        \end{equation}
    \item There exists a positive constant $\omega_{\rho}<\infty$ such that
    \begin{equation}
        \label{eq:qualification_def}
            \sup_{\alpha\in[0,\rho]}\sup_{\lambda\in\Lambda}\sup_{x\in \left[0,\kappa^2\right]}|r_{\lambda}(x)|x^{\alpha}\lambda^{-\alpha}\leq \omega_{\rho},
        \end{equation}
        with $r_{\lambda}(x) := 1 - g_{\lambda}(x)x$. 
\end{enumerate}
\end{defn}
Below, we give some standard examples which are discussed by e.g.\
\cite{LoGerfo2008SpectralAF,blanchard2018optimal} in the context of kernel regression with scalar output variables,
and in \cite{baldasarre2012multi} for the
vector-valued case. A variety of
additional algorithms can be expressed in terms of a filter
function.

1. \textit{Ridge regression.} From the Tikhonov filter function $g_{\lambda}(x) = (x+\lambda)^{-1}$, we obtain the known ridge regression algorithm. In this case, we have 
$E = \rho = \omega_\rho = 1$.

2. \textit{Gradient Descent.} 
From the Landweber iteration filter function
given by
\begin{equation*}
    g_k(x) := \tau \sum_{i=0}^{k-1} 
    (1-\tau x)^i
    \text{ for } k := 1/\lambda, k \in \N
\end{equation*}
we obtain the gradient descent scheme with constant step size $\tau > 0$, which corresponds to the population gradient iteration given by
$F_{k+1} := F_{k} - \tau \nabla \mathcal{E}(F_k)$ for $k \in \N$.
In this case, we have $E = 1$ and
arbitrary qualification with 
$\omega_\rho = 1$
whenever $0 < \rho \leq 1$ and
$\omega_\rho = \rho^\rho$ otherwise.
Gradient schemes with more complex update rules
can be expressed in terms of filter functions as well
\citep{muecke2019beating,lin2018optimal,lin2020optimal}.

3. \textit{Kernel principal component regression.} 
The truncation filter function
$g_\lambda(x) = x^{-1} \mathds{1}[x \geq \lambda]$ yields kernel principal component regression , corresponding to
a hard thresholding of eigenvalues at a
truncation level $\lambda$.
In this case we have
$E = \omega_\rho = 1$ for arbitrary qualification $\rho$.

{\bf Population solution}:
Given a filter function $g_{\lambda}$, we call $g_{\lambda}(C_{X})$\footnote{$g_{\lambda}(C_{X})$ is defined with the rules of spectral calculus, see Definition~\ref{def:spectral_calculus} in the Appendix.} the regularized inverse of $C_{X}$. We may think of the regularized inverse as approximating the \textit{pseudoinverse}
of $C_X$ (see e.g.\ \cite{EnglHankeNeubauer2000}) when $\lambda \to 0$. We define the regularized population solution to \eqref{eq:risk_G} as
\begin{equation} 
    \label{eq:pop_estim}
C_{\lambda} := C_{YX}g_{\lambda}(C_{X}) \in S_2(\cH, \cY), \qquad F_{\lambda}(\cdot) := C_{\lambda}\phi(\cdot) \in \cG.
\end{equation}
The solution arising from standard regularization strategies leads to well-known statistical methodologies.
We refer to \cite{EnglHankeNeubauer2000} for the background on filter functions in classical regularization theory. 

{\bf Empirical solution}: Given the dataset $D = \{(x_i, y_i)\}_{i=1}^n$, the empirical analogue of \eqref{eq:pop_estim} is 
\begin{equation} 
    \label{eq:emp_estim}
\hat{C}_{\lambda} := \hat{C}_{YX}g_{\lambda}(\hat{C}_{X}) , \qquad \hat{F}_{\lambda}(\cdot) := \hat{C}_{\lambda}\phi(\cdot) \in \cG,
\end{equation}
where $\hat{C}_{YX}$, $\hat{C}_{X}$ are empirical covariance operators define as
$$
\hat{C}_{X} := \frac{1}{n} \sum_{i=1}^n \phi(x_i) \otimes \phi(x_i) \qquad \hat{C}_{YX} := \frac{1}{n} \sum_{i=1}^n y_i \otimes \phi(x_i). 
$$
Note that ~\eqref{eq:emp_estim} is the regularized solution of the empirical inverse problem
$$
 \hat{C}_{YX} = \hat{C}\hat{C}_{X}, \qquad \hat{C} \in S_2(\cH, \cY),
$$
which arises as the optimality condition for minimizers on $\cG$ of the empirical analogue of \eqref{eqn:exp_risk}, given by
$
\mathcal{E}_n(F):= \frac{1}{n}\sum_{i=1}^{n}\|y_i - F(x_i)\|_{\mathcal{Y}}^2%\label{eqn:exp_risk_empirical}
$; see Proposition~\ref{prop:first_order} in the Appendix for a proof. 
For the vector-valued kernel given in \eqref{eq:v_kernel}, it is well-known that
\smash{$\hat{F}_{\lambda}$} can be computed in closed-form for the ridge regression estimator---even in infinite dimensions \cite{song2009hilbert}. 
For general filter functions, an extended representer theorem
is given by \cite{baldasarre2012multi} in the context
of finite-dimensional multitask learning: this approach works in infinite dimensions as well. We give
the closed form solution based on \cite{baldasarre2012multi} below
(we include the proof in Appendix~\ref{sec:aux_1}). 

\begin{prop}[Representer theorem for general spectral filter]
\label{prop:dual_formulation}
Let $(\mathbf{K})_{ij} = k(x_i,x_j)$, $1 \leq i,j \leq n$ denote the Gram matrix associated to the scalar-valued kernel $k$. We have 
\begin{equation} \label{eq:formula_dual}
\hat{F}_{\lambda}(x) = \sum_{i=1}^{n} y_i\alpha_i(x), \qquad \alpha(x) = \frac{1}{n}g_{\lambda}\left(\frac{\mathbf{K}}{n}\right)\mathbf{k}_x \in \mathbb{R}^n, \qquad (\mathbf{k}_x)_i = k(x,x_i), \quad 1 \leq i \leq n.
\end{equation}
\end{prop}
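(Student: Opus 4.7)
The natural approach is to reduce the operator-valued expression $\hat{F}_\lambda(x) = \hat{C}_{YX} g_\lambda(\hat{C}_X)\phi(x)$ to an $n$-dimensional computation by introducing the sampling operator and invoking the standard ``push-through'' identity of spectral calculus. Concretely, define the empirical sampling operator $\hat{S}: \cH \to \mathbb{R}^n$ by
\[
\hat{S}f := \tfrac{1}{\sqrt{n}}\bigl(f(x_1), \dots, f(x_n)\bigr)^{\top},
\]
whose adjoint is $\hat{S}^*\alpha = \tfrac{1}{\sqrt{n}}\sum_{i=1}^n \alpha_i \phi(x_i)$. A direct computation using the reproducing property gives
\[
\hat{S}^*\hat{S} = \hat{C}_X \;\; \text{on }\cH, \qquad \hat{S}\hat{S}^* = \tfrac{1}{n}\mathbf{K} \;\; \text{on }\mathbb{R}^n, \qquad \hat{S}\phi(x) = \tfrac{1}{\sqrt{n}}\mathbf{k}_x.
\]
Similarly, introduce the output operator $\hat{T}: \mathbb{R}^n \to \cY$, $\hat{T}\alpha := \tfrac{1}{\sqrt{n}}\sum_{i=1}^n \alpha_i y_i$, so that $\hat{C}_{YX} = \hat{T}\hat{S}$.

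The heart of the argument is the intertwining identity: for any bounded operator $A$ between Hilbert spaces and any continuous function $g$ on a compact interval containing the spectra of $A^*A$ and $AA^*$,
\[
A\, g(A^*A) = g(AA^*)\, A.
\]
This holds for monomials via $A(A^*A)^k = (AA^*)^k A$, extends to polynomials by linearity, and then to continuous $g$ by the Weierstrass approximation theorem, since both $A^*A$ and $AA^*$ have spectra contained in $[0,\|A\|^2]$. Applying this with $A = \hat{S}$ and the filter function $g_\lambda$ (continuous on $[0,\kappa^2]$ by Definition~\ref{def:genreg_reg_fn}) yields
\[
\hat{S}\,g_\lambda(\hat{C}_X) = \hat{S}\,g_\lambda(\hat{S}^*\hat{S}) = g_\lambda(\hat{S}\hat{S}^*)\,\hat{S} = g_\lambda\!\bigl(\tfrac{\mathbf{K}}{n}\bigr)\hat{S}.
\]

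Combining these ingredients,
\[
\hat{F}_\lambda(x) \;=\; \hat{T}\hat{S}\, g_\lambda(\hat{C}_X)\phi(x) \;=\; \hat{T}\, g_\lambda\!\bigl(\tfrac{\mathbf{K}}{n}\bigr) \hat{S}\phi(x) \;=\; \tfrac{1}{\sqrt{n}}\,\hat{T}\, g_\lambda\!\bigl(\tfrac{\mathbf{K}}{n}\bigr)\mathbf{k}_x.
\]
Expanding $\hat{T}$ on the vector $\beta := g_\lambda(\mathbf{K}/n)\mathbf{k}_x \in \mathbb{R}^n$ gives $\hat{T}\beta = \tfrac{1}{\sqrt{n}}\sum_{i=1}^n \beta_i y_i$, so that
\[
\hat{F}_\lambda(x) \;=\; \sum_{i=1}^n y_i \cdot \tfrac{1}{n}\bigl[g_\lambda(\mathbf{K}/n)\mathbf{k}_x\bigr]_i,
\]
which is exactly \eqref{eq:formula_dual}.

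The main technical obstacle is a clean justification of the intertwining identity for $g_\lambda$ applied through the continuous functional calculus, rather than merely for polynomials; the subtlety is that $\hat{S}$ is not self-adjoint (indeed, it maps between different Hilbert spaces), but restricting $g_\lambda$ to a compact interval containing both spectra of $\hat{S}^*\hat{S}$ and $\hat{S}\hat{S}^*$ (which coincide except possibly at $0$) and approximating uniformly by polynomials handles this cleanly. Everything else is bookkeeping with the reproducing property.
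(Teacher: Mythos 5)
Your overall structure mirrors the paper's proof exactly: introduce the (dual) sampling operator, identify $\hat{C}_X$, $\mathbf{K}$ and $\mathbf{k}_x$ with the appropriate compositions, and push $g_\lambda$ through via the intertwining identity $A\,g(A^*A) = g(AA^*)\,A$. The only structural difference is that you fold the $y_i$'s into a separate output operator $\hat{T}$, whereas the paper works directly with $\hat{C}_{YX}$ and the un-normalized sampling operator; this is cosmetic.

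There is, however, a real gap in how you justify the intertwining identity. You invoke Weierstrass approximation and assert that $g_\lambda$ is continuous ``by Definition~\ref{def:genreg_reg_fn}.'' That definition imposes no continuity requirement on $g_\lambda$, and indeed the truncation filter $g_\lambda(x) = x^{-1}\mathds{1}[x\geq \lambda]$ used for kernel principal component regression — one of the three headline examples in Section~\ref{sec:spectral_algorithms} — is discontinuous at $x=\lambda$. Uniform polynomial approximation on $[0,\kappa^2]$ is therefore unavailable in precisely one of the cases the proposition is meant to cover. The fix, which is what the paper does, is to observe that $\hat{S}$ has rank at most $n$ and hence admits a finite singular value decomposition $\hat{S} = \sum_{i=1}^m \sqrt{\sigma_i}\, u_i \otimes v_i$; then $g_\lambda(\hat{S}^*\hat{S})$ and $g_\lambda(\hat{S}\hat{S}^*)$ are defined purely in terms of the finite collection of values $g_\lambda(\sigma_i)$ (plus $g_\lambda(0)$ on the respective kernels), no continuity needed, and the identity $\hat{S}\,g_\lambda(\hat{S}^*\hat{S}) = g_\lambda(\hat{S}\hat{S}^*)\,\hat{S}$ follows by direct computation once one notes that the $g_\lambda(0)$ contribution on $\ker(\hat{S}^*\hat{S}) = \ker\hat{S}$ is annihilated by the composition. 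Your remark that the two spectra coincide ``except possibly at $0$'' is actually the right observation — you just need to exploit it via the finite SVD rather than via Weierstrass.
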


\begin{ex}[Conditional integration]
    Consider now a random variable $Z$ taking values in a topological space $\cZ$ on which we define a second RKHS $\cH' \subseteq \R^{\cZ}$ with kernel $\ell: \cZ \times \cZ \to \R$ and canonical feature map $\psi: \cZ \to \cH', z \mapsto \ell(z,\cdot)$. The conditional mean embedding \citep{song2009hilbert,grunewalder2012conditional} is defined as
$$
F_{*}(x) := \E[\psi(Z) \mid X=x], \qquad x \in \cX.
$$
We immediately see the link with vector-valued regression with $Y = \psi(Z)$ and $\cY = \cH'$. The conditional mean embedding allows us to compute the conditional expectation of any element of $\cH'$. Indeed, using the reproducing property, for $f \in \cH'$, we have for all $x \in \cX$,
$$
\E[f(Z) \mid X=x] = \langle f, \E[\psi(Z) \mid X=x] \rangle_{\cH'}.
$$
Given a dataset $\{(x_i, z_i)\}_{i=1}^n$\footnote{Note that this induces a dataset $D = \{(x_i, \psi(z_i))\}_{i=1}^n$ where we identify $y_i = \psi(z_i)$.} and an estimate of the conditional mean embedding $F_*$ with a spectral algorithm $\hat{F}_{\lambda}$ as in Eq.~\eqref{eq:emp_estim}, and substituting the formula in Eq.~\eqref{eq:formula_dual}, we obtain
$
\E[f(Z) \mid X=x] \approx \langle f, \hat{F}_{\lambda}(x) \rangle_{\cH'} = \sum_{i=1}^n \langle f, \psi(z_i)\rangle_{\cH'}\alpha_i(x) = \mathbf{f}_z^{\top}\alpha(x),
$
where $(\mathbf{f}_z)_i = f(z_i)$,  $1 \leq i \leq n.$ 
\end{ex}

{\bf Learning rates:} We now give our main result, the learning rates for the difference between $[\hat{F}_{\lambda}]$ and $F_*$ in the interpolation norm, where $F_{\lambda}$ and $\hat{F}_{\lambda}$ are
given by \eqref{eq:pop_estim} and \eqref{eq:emp_estim}
based on a general spectral filter satisfying Definition~\ref{def:genreg_reg_fn}. The proof is deferred to Section~\ref{sec:upper_rates_spectral} in the Appendix.

\begin{theo}[Upper learning rates]\label{theo:upper_rate}
%Let $\cH$ be a RKHS on $\cX$ with respect to a kernel $k$ such that assumptions \ref{assump:separable} to \ref{assump:bounded} hold. 
Let $\hat{F}_{\lambda}$ be an estimator based on a general spectral filter with qualification $\rho \geq 0$.
Furthermore, let the conditions \eqref{asst:evd}, \eqref{asst:emb},  \eqref{asst:mom} be satisfied with $0 < p \leq \alpha \leq 1$. With $0 \leq \gamma \leq 1$, if \eqref{asst:src} is satisfied with $\gamma < \beta \leq 2\rho$, we have
\begin{enumerate}
    \item in the case $\beta + p \leq \alpha$, let $\lambda_n = \Theta \left(\left(n/\log^{\theta}(n)\right)^{-\frac{1}{\alpha}}\right)$ for some $\theta > 1$, for all $\tau > \log(6)$ and sufficiently large $n \geq 1$, there is a constant $J > 0$ independent of $n$ and $\tau$ such that \[\left\|[\hat{F}_{\lambda_n}] - F_*\right\|^2_{\gamma} \leq \tau^2 J\left(\frac{n}{\log ^{\theta} n}\right)^{-\frac{\beta-\gamma}{\alpha}}\] is satisfied with $P^n$-probability not less than $1-6e^{-\tau}$. 
    \item in the case $\beta + p > \alpha$, let $\lambda_n = \Theta \left(n^{-\frac{1}{\beta + p}}\right)$, for all $\tau > \log(6)$ and sufficiently large $n \geq 1$, there is a constant $J > 0$ independent of $n$ and $\tau$ such that \[\left\|[\hat{F}_{\lambda_n}] - F_*\right\|^2_{\gamma} \leq \tau^2 J n^{-\frac{\beta-\gamma}{\beta + p}}\] is satisfied with $P^n$-probability not less than $1-6e^{-\tau}$.
\end{enumerate}
\end{theo}

Theorem~\ref{theo:upper_rate} provides the upper rate for vector-valued spectral algorithms. In particular, in combination with the lower bound in Theorem~\ref{theo:saturation}, we see that vector-valued 
spectral algorithms with qualification $\rho$ achieve an optimal learning rate when the smoothness $\beta$ of the regression function is in the range $(\alpha-p,2\rho]$. For algorithms with infinite $\rho$ such as gradient descent and principal component regression, we confirm that they can exploit smoothness of the target function just as in the real-valued setting \citep{bauer2007regularization,blanchard2018optimal,lietal2022optimal}, while not suffering from saturation. For Tikhonov regularization, where $\rho=1$, the rates recover the state-of-the-art results from \cite{lietal2023optimal}. Finally, we point out that obtaining minimax optimal learning rates for $\beta < \alpha -p$ still remains challenging even in the real-valued output scenario. Note however that for a large variety of RKHS, $\alpha$ is arbitrarily close to $p$ and we obtain optimal rates for the whole range $(0,2\rho]$: we refer to \cite{lietal2023optimal,zhang2023spectraloptimality} for a detailed discussion.
\bibliography{ref}

\begin{thebibliography}{55}
\providecommand{\natexlab}[1]{#1}
\providecommand{\url}[1]{\texttt{#1}}
\expandafter\ifx\csname urlstyle\endcsname\relax
  \providecommand{\doi}[1]{doi: #1}\else
  \providecommand{\doi}{doi: \begingroup \urlstyle{rm}\Url}\fi

\bibitem[Aubin(2000)]{aubin2000applied}
Jean-Pierre Aubin.
\newblock \emph{Applied {F}unctional {A}nalysis}.
\newblock John Wiley \& Sons, Inc., 2nd edition, 2000.

\bibitem[Baldassarre et~al.(2012)Baldassarre, Rosasco, Barla, and Verri]{baldasarre2012multi}
Luca Baldassarre, Lorenzo Rosasco, Annalisa Barla, and Alessandro Verri.
\newblock Multi-output learning via spectral filtering.
\newblock \emph{Machine Learning}, 87\penalty0 (3):\penalty0 259--301, 2012.

\bibitem[Bauer et~al.(2007)Bauer, Pereverzev, and Rosasco]{bauer2007regularization}
Frank Bauer, Sergei Pereverzev, and Lorenzo Rosasco.
\newblock On regularization algorithms in learning theory.
\newblock \emph{Journal of Complexity}, 23\penalty0 (1):\penalty0 52--72, 2007.

\bibitem[Berlinet and Thomas-Agnan(2011)]{berlinet2011reproducing}
Alain Berlinet and Christine Thomas-Agnan.
\newblock \emph{{Reproducing Kernel {H}ilbert Spaces in Probability and Statistics}}.
\newblock Springer, 2011.

\bibitem[Blanchard and M{\"u}cke(2018)]{blanchard2018optimal}
Gilles Blanchard and Nicole M{\"u}cke.
\newblock Optimal rates for regularization of statistical inverse learning problems.
\newblock \emph{Foundations of Computational Mathematics}, 18\penalty0 (4):\penalty0 971--1013, 2018.

\bibitem[Caponnetto and De~Vito(2007{\natexlab{a}})]{Caponnetto2007}
A.~Caponnetto and E.~De~Vito.
\newblock Optimal rates for the regularized least-squares algorithm.
\newblock \emph{Foundations of Computational Mathematics}, 7\penalty0 (3):\penalty0 331--368, 2007{\natexlab{a}}.

\bibitem[Caponnetto and De~Vito(2007{\natexlab{b}})]{caponnetto2007optimal}
Andrea Caponnetto and Ernesto De~Vito.
\newblock Optimal rates for the regularized least-squares algorithm.
\newblock \emph{Foundations of Computational Mathematics}, 7\penalty0 (3):\penalty0 331--368, 2007{\natexlab{b}}.

\bibitem[Caponnetto and Yao(2010)]{caponnetto2010cross}
Andrea Caponnetto and Yuan Yao.
\newblock Cross-validation based adaptation for regularization operators in learning theory.
\newblock \emph{Analysis and Applications}, 8\penalty0 (02):\penalty0 161--183, 2010.

\bibitem[Caponnetto et~al.(2008)Caponnetto, Micchelli, Pontil, and Ying]{caponnetto2008multi}
Andrea Caponnetto, Charles~A. Micchelli, Massimiliano Pontil, and Yiming Ying.
\newblock Universal multi-task kernels.
\newblock \emph{Journal of Machine Learning Research}, 9:\penalty0 1615--1646, 2008.
\newblock ISSN 1532-4435.

\bibitem[Carmeli et~al.(2006)Carmeli, De~Vito, and Toigo]{carmeli2006vector}
Claudio Carmeli, Ernesto De~Vito, and Alessandro Toigo.
\newblock Vector valued reproducing kernel {H}ilbert spaces of integrable functions and {M}ercer theorem.
\newblock \emph{Analysis and Applications}, 4\penalty0 (04):\penalty0 377--408, 2006.

\bibitem[Carmeli et~al.(2010)Carmeli, De~Vito, Toigo, and Umanit{\'a}]{carmeli2010vector}
Claudio Carmeli, Ernesto De~Vito, Alessandro Toigo, and Veronica Umanit{\'a}.
\newblock Vector valued reproducing kernel {H}ilbert spaces and universality.
\newblock \emph{Analysis and Applications}, 8\penalty0 (01):\penalty0 19--61, 2010.

\bibitem[Ciliberto et~al.(2016)Ciliberto, Rosasco, and Rudi]{ciliberto2016consistent}
Carlo Ciliberto, Lorenzo Rosasco, and Alessandro Rudi.
\newblock A consistent regularization approach for structured prediction.
\newblock \emph{Advances in Neural Information Processing Systems}, 29, 2016.

\bibitem[Ciliberto et~al.(2020)Ciliberto, Rosasco, and Rudi]{ciliberto2020general}
Carlo Ciliberto, Lorenzo Rosasco, and Alessandro Rudi.
\newblock A general framework for consistent structured prediction with implicit loss embeddings.
\newblock \emph{The Journal of Machine Learning Research}, 21\penalty0 (1):\penalty0 3852--3918, 2020.

\bibitem[De~Vito et~al.(2006)De~Vito, Rosasco, and Caponnetto]{de2006discretization}
Ernesto De~Vito, Lorenzo Rosasco, and Andrea Caponnetto.
\newblock Discretization error analysis for tikhonov regularization.
\newblock \emph{Analysis and Applications}, 4\penalty0 (01):\penalty0 81--99, 2006.

\bibitem[Diestel and Uhl(1977)]{diestel77vector}
Joe Diestel and {J.J.} Uhl.
\newblock \emph{{Vector Measures}}.
\newblock American Mathematical Society, 1977.

\bibitem[Dudley(2002)]{dudley2002}
{R.M.} Dudley.
\newblock \emph{{Real Analysis and Probability}}.
\newblock Cambridge University Press, 2nd edition edition, 2002.

\bibitem[Engl et~al.(2000)Engl, Hanke, and Neubauer]{EnglHankeNeubauer2000}
Heinz~Werner Engl, Martin Hanke, and A.~Neubauer.
\newblock \emph{Regularization of Inverse Problems}.
\newblock Kluwer, 2000.

\bibitem[Fischer and Steinwart(2020)]{fischer2020sobolev}
Simon Fischer and Ingo Steinwart.
\newblock Sobolev norm learning rates for regularized least-squares algorithms.
\newblock \emph{Journal Of Machine Learning Research}, 21:\penalty0 205--1, 2020.

\bibitem[Fujii et~al.(1993)Fujii, Fujii, Furuta, and Nakamoto]{c1d6e481-4bac-30fa-b6cd-6fdbc0837241}
Jun~Ichi Fujii, Masatoshi Fujii, Takayuki Furuta, and Ritsuo Nakamoto.
\newblock Norm inequalities equivalent to heinz inequality.
\newblock \emph{Proceedings of the American Mathematical Society}, 118\penalty0 (3):\penalty0 827--830, 1993.

\bibitem[Gerfo et~al.(2008{\natexlab{a}})Gerfo, Rosasco, Odone, Vito, and Verri]{gerfo2008spectral}
L~Lo Gerfo, Lorenzo Rosasco, Francesca Odone, E~De Vito, and Alessandro Verri.
\newblock Spectral algorithms for supervised learning.
\newblock \emph{Neural Computation}, 20\penalty0 (7):\penalty0 1873--1897, 2008{\natexlab{a}}.

\bibitem[Gerfo et~al.(2008{\natexlab{b}})Gerfo, Rosasco, Odone, Vito, and Verri]{LoGerfo2008SpectralAF}
Lorenzo~Lo Gerfo, Lorenzo Rosasco, Francesca Odone, Ernesto~De Vito, and Alessandro Verri.
\newblock Spectral algorithms for supervised learning.
\newblock \emph{Neural Compututation}, 20:\penalty0 1873--1897, 2008{\natexlab{b}}.

\bibitem[Gr{\"u}new{\"a}lder et~al.(2012{\natexlab{a}})Gr{\"u}new{\"a}lder, Lever, Baldassarre, Patterson, Gretton, and Pontil]{grunewalder2012conditional}
Steffen Gr{\"u}new{\"a}lder, Guy Lever, Luca Baldassarre, Sam Patterson, Arthur Gretton, and Massimilano Pontil.
\newblock Conditional mean embeddings as regressors.
\newblock In \emph{Proceedings of the 29th International Coference on International Conference on Machine Learning}, pages 1803–--1810, 2012{\natexlab{a}}.

\bibitem[Gr{\"u}new{\"a}lder et~al.(2012{\natexlab{b}})Gr{\"u}new{\"a}lder, Lever, Baldassarre, Pontil, and Gretton]{grunewalder2012modelling}
Steffen Gr{\"u}new{\"a}lder, Guy Lever, Luca Baldassarre, Massimilano Pontil, and Arthur Gretton.
\newblock Modelling transition dynamics in mdps with rkhs embeddings.
\newblock In \emph{Proceedings of the 29th International Conference on Machine Learning}, pages 535--542, 2012{\natexlab{b}}.

\bibitem[Herdman et~al.(2011)Herdman, Spies, and Temperini]{herdman2011global}
Terry Herdman, Ruben~D Spies, and Karina~G Temperini.
\newblock Global saturation of regularization methods for inverse ill-posed problems.
\newblock \emph{Journal of optimization theory and applications}, 148\penalty0 (1):\penalty0 164--196, 2011.

\bibitem[Kadri et~al.(2016)Kadri, Duflos, Preux, Canu, Rakotomamonjy, and Audiffren]{kadri2016functional}
Hachem Kadri, Emmanuel Duflos, Philippe Preux, St{{\'e}}phane Canu, Alain Rakotomamonjy, and Julien Audiffren.
\newblock Operator-valued kernels for learning from functional response data.
\newblock \emph{Journal of Machine Learning Research}, 17\penalty0 (20):\penalty0 1--54, 2016.

\bibitem[Kostic et~al.(2022)Kostic, Novelli, Maurer, Ciliberto, Rosasco, and Pontil]{kostic2022learning}
Vladimir Kostic, Pietro Novelli, Andreas Maurer, Carlo Ciliberto, Lorenzo Rosasco, and Massimiliano Pontil.
\newblock Learning dynamical systems via {K}oopman operator regression in reproducing kernel {H}ilbert spaces.
\newblock \emph{Advances in Neural Information Processing Systems}, 35:\penalty0 4017--4031, 2022.

\bibitem[Kostic et~al.(2023)Kostic, Lounici, Novelli, and Pontil]{kostic2023koopman}
Vladimir Kostic, Karim Lounici, Pietro Novelli, and Massimiliano Pontil.
\newblock Koopman operator learning: Sharp spectral rates and spurious eigenvalues.
\newblock \emph{arXiv preprint arXiv:2302.02004}, 2023.

\bibitem[Li et~al.(2023{\natexlab{a}})Li, Zhang, and Lin]{li2023on}
Yicheng Li, Haobo Zhang, and Qian Lin.
\newblock On the saturation effect of kernel ridge regression.
\newblock In \emph{The Eleventh International Conference on Learning Representations}, 2023{\natexlab{a}}.

\bibitem[Li et~al.(2022)Li, Meunier, Mollenhauer, and Gretton]{lietal2022optimal}
Zhu Li, Dimitri Meunier, Mattes Mollenhauer, and Arthur Gretton.
\newblock Optimal rates for regularized conditional mean embedding learning.
\newblock In \emph{Advances in Neural Information Processing Systems}, volume~35, pages 4433--4445, 2022.

\bibitem[Li et~al.(2023{\natexlab{b}})Li, Meunier, Mollenhauer, and Gretton]{lietal2023optimal}
Zhu Li, Dimitri Meunier, Mattes Mollenhauer, and Arthur Gretton.
\newblock Towards optimal {S}obolev norm rates for the vector-valued regularized least-squares algorithm.
\newblock \emph{arXiv preprint arXiv:2312.07186}, 2023{\natexlab{b}}.

\bibitem[Lin and Cevher(2018)]{lin2018optimal}
Junhong Lin and Volkan Cevher.
\newblock Optimal distributed learning with multi-pass stochastic gradient methods.
\newblock In \emph{International Conference on Machine Learning}, pages 3092--3101. PMLR, 2018.

\bibitem[Lin and Cevher(2020)]{JMLR:v21:18-041}
Junhong Lin and Volkan Cevher.
\newblock Optimal convergence for distributed learning with stochastic gradient methods and spectral algorithms.
\newblock \emph{Journal of Machine Learning Research}, 21\penalty0 (147):\penalty0 1--63, 2020.

\bibitem[Lin et~al.(2020{\natexlab{a}})Lin, Rudi, Rosasco, and Cevher]{Lin_2020}
Junhong Lin, Alessandro Rudi, Lorenzo Rosasco, and Volkan Cevher.
\newblock Optimal rates for spectral algorithms with least-squares regression over {H}ilbert spaces.
\newblock \emph{Applied and Computational Harmonic Analysis}, 48\penalty0 (3):\penalty0 868–890, 2020{\natexlab{a}}.

\bibitem[Lin et~al.(2020{\natexlab{b}})Lin, Rudi, Rosasco, and Cevher]{lin2020optimal}
Junhong Lin, Alessandro Rudi, Lorenzo Rosasco, and Volkan Cevher.
\newblock Optimal rates for spectral algorithms with least-squares regression over {H}ilbert spaces.
\newblock \emph{Applied and Computational Harmonic Analysis}, 48\penalty0 (3):\penalty0 868--890, 2020{\natexlab{b}}.

\bibitem[Mastouri et~al.(2021)Mastouri, Zhu, Gultchin, Korba, Silva, Kusner, Gretton, and Muandet]{mastouri2021proximal}
Afsaneh Mastouri, Yuchen Zhu, Limor Gultchin, Anna Korba, Ricardo Silva, Matt Kusner, Arthur Gretton, and Krikamol Muandet.
\newblock Proximal causal learning with kernels: Two-stage estimation and moment restriction.
\newblock In \emph{International Conference on Machine Mearning}, pages 7512--7523. PMLR, 2021.

\bibitem[Math{\'e}(2004)]{mathe2004saturation}
Peter Math{\'e}.
\newblock Saturation of regularization methods for linear ill-posed problems in hilbert spaces.
\newblock \emph{SIAM journal on numerical analysis}, 42\penalty0 (3):\penalty0 968--973, 2004.

\bibitem[Mollenhauer and Koltai(2020)]{mollenhauer2020nonparametric}
Mattes Mollenhauer and P{\'e}ter Koltai.
\newblock Nonparametric approximation of conditional expectation operators.
\newblock \emph{arXiv preprint arXiv:2012.12917}, 2020.

\bibitem[Mollenhauer et~al.(2022)Mollenhauer, M{\"u}cke, and Sullivan]{mollenhauer2022learning}
Mattes Mollenhauer, Nicole M{\"u}cke, and {T.J.} Sullivan.
\newblock Learning linear operators: Infinite-dimensional regression as a well-behaved non-compact inverse problem.
\newblock \emph{arXiv preprint arXiv:2211.08875}, 2022.

\bibitem[M{\"u}cke et~al.(2019)M{\"u}cke, Neu, and Rosasco]{muecke2019beating}
Nicole M{\"u}cke, Gergely Neu, and Lorenzo Rosasco.
\newblock Beating {SGD} saturation with tail-averaging and minibatching.
\newblock In \emph{Advances in Neural Information Processing Systems}, volume~32, 2019.

\bibitem[Neubauer(1997)]{neubauer1997converse}
Andreas Neubauer.
\newblock On converse and saturation results for {T}ikhonov regularization of linear ill-posed problems.
\newblock \emph{SIAM Journal On Numerical Analysis}, 34\penalty0 (2):\penalty0 517--527, 1997.

\bibitem[Park and Muandet(2020)]{park2020measure}
Junhyung Park and Krikamol Muandet.
\newblock A measure-theoretic approach to kernel conditional mean embeddings.
\newblock \emph{Advances in Neural Information Processing Systems}, 33:\penalty0 21247--21259, 2020.

\bibitem[Singh et~al.(2019)Singh, Sahani, and Gretton]{singh2019kernel}
Rahul Singh, Maneesh Sahani, and Arthur Gretton.
\newblock Kernel instrumental variable regression.
\newblock \emph{Advances in Neural Information Processing Systems}, 32, 2019.

\bibitem[Singh et~al.(2023)Singh, Xu, and Gretton]{10.1093/biomet/asad042}
Rahul Singh, Liyuan Xu, and Arthur Gretton.
\newblock Kernel methods for causal functions: dose, heterogeneous and incremental response curves.
\newblock \emph{Biometrika}, page asad042, 07 2023.
\newblock ISSN 1464-3510.

\bibitem[Smale and Zhou(2004)]{smale2004shannon}
Steve Smale and Ding-Xuan Zhou.
\newblock Shannon sampling and function reconstruction from point values.
\newblock \emph{Bulletin of the American Mathematical Society}, 41\penalty0 (3):\penalty0 279--305, 2004.

\bibitem[Smale and Zhou(2005)]{smale2005shannon}
Steve Smale and Ding-Xuan Zhou.
\newblock Shannon sampling {II}: Connections to learning theory.
\newblock \emph{Applied and Computational Harmonic Analysis}, 19\penalty0 (3):\penalty0 285--302, 2005.

\bibitem[Smale and Zhou(2007)]{smale2007learning}
Steve Smale and Ding-Xuan Zhou.
\newblock Learning theory estimates via integral operators and their approximations.
\newblock \emph{Constructive Approximation}, 26\penalty0 (2):\penalty0 153--172, 2007.

\bibitem[Song et~al.(2009)Song, Huang, Smola, and Fukumizu]{song2009hilbert}
Le~Song, Jonathan Huang, Alex Smola, and Kenji Fukumizu.
\newblock Hilbert space embeddings of conditional distributions with applications to dynamical systems.
\newblock In \emph{Proceedings of the 26th Annual International Conference on Machine Learning}, pages 961--968. ACM, 2009.

\bibitem[Sriperumbudur et~al.(2011)Sriperumbudur, Fukumizu, and Lanckriet]{sriperumbudur2011universality}
Bharath~K Sriperumbudur, Kenji Fukumizu, and Gert~RG Lanckriet.
\newblock Universality, characteristic kernels and {RKHS} embedding of measures.
\newblock \emph{Journal of Machine Learning Research}, 12\penalty0 (Jul):\penalty0 2389--2410, 2011.

\bibitem[Steinwart and Christmann(2008)]{steinwart2008support}
Ingo Steinwart and Andreas Christmann.
\newblock \emph{{Support Vector Machines}}.
\newblock Springer, 2008.

\bibitem[Steinwart and Scovel(2012)]{steinwart2012mercer}
Ingo Steinwart and Clint Scovel.
\newblock Mercer’s theorem on general domains: On the interaction between measures, kernels, and {RKHS}s.
\newblock \emph{Constructive Approximation}, 35\penalty0 (3):\penalty0 363--417, 2012.

\bibitem[Steinwart et~al.(2009)Steinwart, Hush, Scovel, et~al.]{steinwart2009optimal}
Ingo Steinwart, Don~R Hush, Clint Scovel, et~al.
\newblock Optimal rates for regularized least squares regression.
\newblock In \emph{COLT}, pages 79--93, 2009.

\bibitem[Weidmann(1980)]{weidmann80linear}
Joachim Weidmann.
\newblock \emph{Linear {O}perators in {Hilbert} {S}paces}.
\newblock Springer, 1980.

\bibitem[Yao et~al.(2007)Yao, Rosasco, and Caponnetto]{yao2007early}
Yuan Yao, Lorenzo Rosasco, and Andrea Caponnetto.
\newblock On early stopping in gradient descent learning.
\newblock \emph{Constructive Approximation}, 26\penalty0 (2):\penalty0 289--315, 2007.

\bibitem[Zhang et~al.(2023{\natexlab{a}})Zhang, Li, Lu, and Lin]{zhang2023optimality}
Haobo Zhang, Yicheng Li, Weihao Lu, and Qian Lin.
\newblock On the optimality of misspecified kernel ridge regression.
\newblock \emph{arXiv preprint arXiv:2305.07241}, 2023{\natexlab{a}}.

\bibitem[Zhang et~al.(2023{\natexlab{b}})Zhang, Li, Lu, and Lin]{zhang2023spectraloptimality}
Haobo Zhang, Yicheng Li, Weihao Lu, and Qian Lin.
\newblock On the optimality of misspecified spectral algorithms.
\newblock \emph{arXiv preprint arXiv:2303.14942}, 2023{\natexlab{b}}.

\end{thebibliography}
\newpage
\appendix

\section*{Appendices}
The appendix is organized as follows. In Section~\ref{sec:bg_appendix}, we give additional mathematical background and notations. In Section~\ref{sec:appendix_saturation}, we give the proof of Theorem~\ref{thm:saturation_effect_vector_valued KRR_maintext} and provide a technical comparison of our proof with \cite{li2023on}. In Section~\ref{sec:upper_rates_spectral}, we prove Theorem~\ref{theo:upper_rate}. Finally, in Section~\ref{sec:aux}, we provide auxiliary results used in the main proofs.

\section{Additional Background}\label{sec:bg_appendix}
\subsection{Hilbert spaces and linear operators}

\begin{defn}[Bochner $L_q-$spaces, \cite{diestel77vector}] Let $H$ be a separable Hilbert space and $\pi$ a probability measure on $\cX$. For $1 \leq q \leq \infty$, $L_q(\cX,\mathcal{F}_{\cX},\pi; H)$, abbreviated $L_q(\pi; H)$, is the space of strongly $\mathcal{F}_{\cX}-\mathcal{F}_H$ measurable and Bochner $q$-integrable functions from $\cX$ to $H$, with the norms
\[
\| f \|_{L_q(\pi;H)}^q 
= \int_{\cX} \|f  \|_H^q \, \mathrm{d}\pi, \quad 1\leq q<\infty, \qquad \| f \|_{L_{\infty}(\pi;H)} = \inf\left\{C \geq 0: \pi\{\|f\|_{H}> C\}=0\right\}.
\]
    
\end{defn}

\begin{defn}[$p$-Schatten class, e.g. \cite{weidmann80linear}] Let $H,H'$ be separable Hilbert spaces. For $1 \leq q \leq \infty$, $S_p(H,H')$, abbreviated $S_p(H)$ if $H=H'$, is the Banach space of all compact operators $C$ from $H$ to $H'$ such that $\|C\|_{S_p(H,H')} := \left\|\left(\sigma_i(C)\right)_{i\in I}\right\|_{\ell_p}$ is finite. Here $\|\left(\sigma_i(C)\right)_{i\in I}\|_{\ell_p}$ is the $\ell_p-$sequence space norm of the sequence of the strictly positive singular values of $C$ indexed by the at most countable set $I$. For $p = 2$, we retrieve the space of Hilbert-Schmidt operators, for $p = 1$ we retrieve the space of Trace Class operators, and for $p=+\infty$, $\|\cdot\|_{S_\infty(H,H')}$ corresponds to the operator norm $\|\cdot\|_{H \to H'}$.
\end{defn}

\begin{defn}[Tensor Product of Hilbert Spaces, \cite{aubin2000applied}] Let $H,H'$ be Hilbert spaces. The Hilbert space $H\otimes H'$ is the completion of the algebraic tensor product with respect to the norm induced by the inner product $\langle x_1\otimes x_1', x_2\otimes x_2'\rangle_{H\otimes H'} = \langle x_1,x_2 \rangle_H \langle x_1', x_2'\rangle_{H'}$ for $x_1,x_2 \in H$ and $x_1', x_2' \in H'$ defined on the elementary tensors
of $H\otimes H'$. This definition extends to
$\operatorname{span}\{x\otimes x'| x\in H, x'\in H'\}$ and finally to
its completion. The space $H\otimes H'$ is separable whenever both $H$ and $H'$ are separable. If $\{e_i\}_{i \in I}$ and $\{e'_j\}_{j \in J}$ are orthonormal basis in $H$ and $H'$, $\{e_i\otimes e'_j\}_{i\in I, j\in J}$ is an orthonormal basis in $H \otimes H'$. 
\end{defn}

\begin{theo}[Isometric Isomorphism between $L_2(\pi; \cY)$ and $S_2(L_2(\pi), \cY)$, Theorem 12.6.1 \cite{aubin2000applied}] \label{th:def_psi} Let $H$ be a separable Hilbert space. The Bochner space $L_2(\pi;H)$ is isometrically isomorphic to $S_2(L_2(\pi), \cY)$ and the isometric isomorphism is realized by the map $\Psi:S_2(L_2(\pi), \cY) \to L_2(\pi;H)$ acting on elementary tensors as $\Psi(f \otimes y)  = (\omega \to f(\omega)y).$
\end{theo}
    
\subsection{RKHS embbedings into \texorpdfstring{$L_2$}{L2} and Well-specifiedness}

Recall that $I_{\pi}: \cH \to L_2(\pi)$ is the embedding that maps every function in $\cH$ into its $\pi$-equivalence class in $L_2(\pi)$ and that we used the shorthand notation $[f] = I_{\pi}(f)$ for all $f \in \cH$. We define similarly $\mathcal{I}_{\pi}: \cG \to L_2(\pi; \cY)$ as the embedding that maps every function in $\cG$ into its $\pi$-equivalence class in $L_2(\pi; \cY)$.

\begin{defn}[Embedding $\cG$ into $L_2(\pi; \cY)$] \label{def:embedding} Let $\mathcal{I}_{\pi}:= I_{\cY} \otimes I_{\pi}$ be the tensor product of the operator $\operatorname{Id}_{\mathcal{Y}}$ with the operator $I_{\pi}$ (see \citet[Definition 12.4.1.]{aubin2000applied} for the definition of tensor product of operators). $\mathcal{I}_{\pi}$ maps every function in $\cG$ into its $\pi$-equivalence class in $L_2(\pi; \cY)$. We then use the shorthand notation $[F] = \mathcal{I}_{\pi}(F)$ for all $F \in \cG$. 
\end{defn}

\begin{rem}\label{rem:well_specified_repr}
    Let $\{d_j\}_{j\in J}$ be an orthonormal basis of $\cY$ and recall that $\{\sqrt{\mu_i} [e_i]\}_{i \in I}$ forms an orthonormal basis of $[\cH]^1$. Let $F \in [\cG]^{1}$. Then $F$ can be represented as the element $C := \sum_{i \in I,j \in J} a_{ij} d_j \otimes \sqrt{\mu_i}[e_i]$ in $S_2([\cH]^1,\cY)$ by definition of $[\cG]^{1}$ with $\|C\|_{1}^2 = \sum_{i,j} a_{ij}^2$. Hence defining $\bar{F} := \sum_{i \in I,j \in J} a_{ij} d_j \otimes \sqrt{\mu_i}e_i$ we have $C = \bar{C}$ $\pi-$a.e. and
    \begin{equation*}
        \|\bar{C}\|_{\cG}^2 = \sum_{i \in I,j \in J} a_{i,j}^2 = \|C\|_{1}^2 < +\infty.
    \end{equation*}
    Taking the elements identifying $\bar{C}$ in $\cG$ gives a representer $\bar{F}$ of $F$ in $\cG$. 
\end{rem}

\subsection{Additional Notations}\label{sec:add_not}

In the following, we fix $\{d_j\}_{j\in J}$ an orthonormal basis of $\cY$, where $J$ is at most countable. Recall that $\left\{\mu_i^{1/2}e_i\right\}_{i\in I}$ is an ONB of $\left(\operatorname{ker} I_{\pi}\right)^{\perp}$ in $\cH$, and $\left\{[e_i]\right\}_{i \in I}$ is an ONB of $\overline{\text{ran}~I_{\pi}}$ in $L_2(\pi)$. Let $\{\tilde{e}_i\}_{i \in I'}$ be an ONB of $\operatorname{ker} I_{\pi}$ (with $I\cap I' = \emptyset$), then  $\left\{\mu_i^{1/2}e_i\right\}_{i\in I} \cup \{\tilde{e}_i\}_{i \in I'}$ forms an ONB of $\cH$, and $\left\{d_j \otimes \mu_i^{1/2}e_i\right\}_{i\in I, j \in J} \cup \{d_j \otimes \tilde{e}_i\}_{i \in I', j\in J}$ forms an ONB of $\cY \otimes \cH \simeq \cG$.

For any Hilbert space $H$, linear operator $T: H \to H$ and scalar $\lambda >0$, we define $T_{\lambda} := T + \lambda I_{H}$. 

\section{Saturation Effect with Tikhonov Regularization}\label{sec:appendix_saturation}

In the following proofs a quantity $h_n \geq 0$ depending on $n \geq 1$, but independent of $\tau$ the confidence level, is equal to $o(1)$ if $h_n \to 0$ when $n \to +\infty$.

We will make extensive use of the following notation in the subsequent analysis.

\begin{defn}[Empirical $L_2(\pi)-$norm] \label{def:emp_norm} Denoted by  $\langle \cdot, \cdot \rangle_{2,n}$, the empirical $L_2(\pi)-$norm associated to points $\{x_i\}_{i=1}^n$ independently and identically sampled from the distribution of $X$, is defined as, for any $f,g\in\cH$,
\begin{align*}
    \langle f,g\rangle_{2,n} := \left\langle \hat{C}_{X},f\otimes g\right\rangle_{S_2(\cH)}
= \left\langle \hat{C}_{X}f,g\right\rangle_{\cH}= \left\langle \hat{C}_{X}^{\frac{1}{2}}f,\hat{C}_{X}^{\frac{1}{2}}g\right\rangle_{\cH} = \frac{1}{n} \sum_{i=1}^n f(x_i)g(x_i).
\end{align*}
This induces an inner product on $\cH$, with associated norm,
$$
    \|f\|_{2,n}^2 = \langle f,f\rangle_{2,n} = \frac{1}{n} \sum_{i=1}^n f(x_i)^2.
$$
\end{defn}

\begin{defn}\label{def:reg_map}
    Fix $x\in \cX$. The regularized canonical feature map is defined as
\begin{equation*}
    f_x(\cdot) = C_{X,\lambda}^{-1}k(x,\cdot) : \cX \to \cH.
\end{equation*}
\end{defn}

Recall from Eq.~\eqref{eq:emp_estim} that the ridge estimator $\hat{F}_{\lambda}$ defined in Eq.~\eqref{eq:ridge_estim} can be expressed as 
\begin{equation*} 
\hat{C}_{\lambda} = \hat{C}_{YX}g_{\lambda}(\hat{C}_{X}) , \qquad \hat{F}_{\lambda}(\cdot) = \hat{C}_{\lambda}\phi(\cdot) \in \cG,
\end{equation*}
where in Theorem~\ref{thm:saturation_effect_vector_valued KRR_maintext} we focus on Tikhonov regularization where $g_{\lambda}(x) = (x + \lambda)^{-1}$. In that setting we have
\begin{equation} \label{eq:r_l_tikhonov}
r_{\lambda}\left(x\right) := 1 - \frac{x}{x+\lambda} = -\frac{\lambda}{x+\lambda}.
\end{equation}

\begin{proof}[Proof of Theorem~\ref{thm:saturation_effect_vector_valued KRR_maintext}] Since $\beta \geq 2$, $F_* \in [\cG]^{\beta} \subseteq [\cG]^{1}$, therefore $F_*$ has a representer $\bar{F}$ in $\cG$ such that $F_* = \bar{F}$ $\pi$-a.e., and by Theorem~\ref{theo:operep}, $\bar{F}(\cdot) = \bar{C}\phi(\cdot)$, with $\bar{C} \in S_2(\cH, \cY)$. Define the errors $ \epsilon_i:= y_i - \bar{C}\phi(x_i)$, $i=1,\ldots,n$, that are i.i.d samples with the same distribution as $\epsilon:= Y - \bar{C}\phi(X)$. By assumption $\E\left[\|\epsilon\|_{\mathcal{Y}}^2 \mid X\right]  \geq \sigma^2$ and by definition $\E\left[\epsilon \mid X\right] = 0$. By Eq.~\eqref{eq:r_l_tikhonov}, we have
$$
r_{\lambda}\left(\hat{C}_{X}\right) := I - \hat{C}_{X}\hat{C}_{X, \lambda}^{-1} = -\lambda\hat C_{X}.
$$ 
The following \emph{bias-variance decomposition} is the essence of the proof. In the following derivation we abbreviate $S_2(L_2(\pi),\mathcal{Y})$ to $S_2$ $L_,2(\pi; \cY)$ to $L_2$ and $x_{1}, \ldots, x_n$ to $\underline{x}_n$ to save space.
\begin{align*}
\mathbb{E}\left[\left\|[\hat{F}_{\lambda}]-F_{\ast}\right\|^2_{L_2} \mid  \underline{x}_n \right] &= 
    \mathbb{E}\left[\left\|\left[\hat{C}_{YX}\hat{C}_{X, \lambda}^{-1}-\bar{C}\right]\right\|^2_{S_2} \mid \underline{x}_n \right]\\ 
    &= \mathbb{E}\left[\left\|\left[\left(\frac{1}{n}\sum_{i=1}^{n}y_i\otimes \phi(x_i)\right)\hat{C}_{X, \lambda}^{-1}-\bar{C}\right]\right\|^2_{S_2} \mid \underline{x}_n \right]\\
    &= \mathbb{E}\left[\left\|\left[\frac{1}{n}\sum_{i=1}^{n}\left(\bar{C}\phi(x_i) + \epsilon_i\right)\otimes \phi(x_i)\hat{C}_{X, \lambda}^{-1}-\bar{C}\right]\right\|^2_{S_2} \mid\underline{x}_n \right]\\
    &= \mathbb{E}\left[\left\|\left[-\bar{C}r_{\lambda}\left(\hat{C}_{X}\right) + \frac{1}{n}\sum_{i=1}^{n}\epsilon_i\otimes \left(\hat{C}_{X, \lambda}^{-1}\phi(x_i)\right)\right]\right\|^2_{S_2}\mid \underline{x}_n \right]\\
    &= \left\|\left[\bar{C}r_{\lambda}\left(\hat{C}_{X}\right)\right]\right\|^2_{S_2} + \frac{1}{n^2}\sum_{i=1}^{n}\mathbb{E}\left[\|\epsilon_i\|^2_{\mathcal{Y}} \mid x_i\right]\left\|\left[\hat{C}_{X, \lambda}^{-1}\phi(x_i)\right]\right\|_{L_2(\pi)}^2\\
    &\geq \lambda^2\left\|\left[\bar{C}\hat{C}_{X,\lambda}^{-1}\right]\right\|^2_{S_2} + \frac{\sigma^2}{n^2}\sum_{i=1}^{n}\left\|\left[\hat{C}_{X,\lambda}^{-1}\phi(x_i)\right]\right\|_{L_2(\pi)}^2.
\end{align*}
The second term is a lower bound on the \emph{variance} while the first term is a lower bound on the \emph{bias}. 

{\bf Bounding the Bias term.} The idea is to first show that the population analogue of $\left\|\left[\bar{C}\hat C_{X,\lambda}^{-1}\right]\right\|^2_{S_2(L_2(\pi),\cY)}$ can be bounded below by a non-zero constant. We can then bound the difference between the empirical and population version of $\left\|\left[\bar{C}\hat C_{X,\lambda}^{-1}\right]\right\|^2_{S_2(L_2(\pi),\cY)}$ using a concentration inequality. 
By Lemma~\ref{lma: sat_eff_bias_population_lower_bound}, for $\lambda > 0$, there is a constant $c>0$ (see Lemma~\ref{lma: sat_eff_bias_population_lower_bound} for the exact value of $c$) such that 
\begin{equation*}
\left\|\left[\bar{C}C_{X,\lambda}^{-1}\right]\right\|_{S_2(L_2(\pi),\mathcal{Y})}^2\geq c > 0.
\end{equation*}
Furthermore by Lemma \ref{lma: sat_eff_bias_concentration_inequality_main}, there is a constant $c_0 > 0$ (see Lemma~\ref{lma: sat_eff_bias_concentration_inequality_main} for the exact value of $c_0$) such that for any $\tau \geq \log(4),$ with probability at least $1-4e^{-\tau}$, for $n\geq \left(c_0\tau\right)^{(4+2p)}$ and $ 1 \geq \lambda \geq n^{-\frac{1}{2+p}}$, we have
    \begin{equation*}
        \left|\left\|[\bar{C}C_{X,\lambda}^{-1}]\right\|_{S_2(L_2(\pi),\mathcal{Y})}^2 - \left\|[\bar{C}\hat{C}_{X,\lambda}^{-1}]\right\|_{S_2(L_2(\pi),\mathcal{Y})}^2\right| = \tau^2o(1).
    \end{equation*}
    Therefore, under the same high probability,
    \begin{equation*}
        \|[\bar{C}\hat{C}_{X,\lambda}^{-1}]\|_{S_2(L_2(\pi),\mathcal{Y})}^2\geq c - \tau^2 o(1).
    \end{equation*}
    It leads to our final bound on the bias term, for a constant $\rho_2 \geq 0$ and for sufficiently large $n\geq 1$, where the hidden index bound depends on $\tau$, we have 
    \begin{equation}\label{eq: bias_term_lower_bound_constant}
        \lambda^2\left\|\left[\bar{C}\hat{C}_{X,\lambda}^{-1}\right]\right\|^2_{S_2(L_2(\pi),\mathcal{Y})} \geq \rho_1 \lambda ^2.
    \end{equation}

    {\bf Bounding the Variance Term.}  Using the norm from Definition~\ref{def:emp_norm}, we have the following chain of identities.
    \begin{equation} \label{eq:variance_other_form}
    \begin{aligned}
    \frac{\sigma^2}{n^2}\sum_{i=1}^{n}\left\|\left[\hat{C}_{X,\lambda}^{-1}\phi(x_i)\right]\right\|_{L_2(\pi)}^2 &= \frac{\sigma^2}{n^2}\sum_{i=1}^{n}\int_{\cX}\left\langle \phi(X), \hat{C}_{X,\lambda}^{-1}\phi(x_i)\right\rangle^2_{\cH}d\pi(x)\\ &= \frac{\sigma^2}{n} \int_{\cX} \|\hat{C}_{X,\lambda}^{-1}\phi(X)\|_{2,n}^2 d\pi(x).
    \end{aligned}
    \end{equation}
     Therefore it suffices to consider $\int_{\cX} \|\hat{C}_{X,\lambda}^{-1}k(x, \cdot)\|_{2,n}^2 d\pi(x)$. 
     
     Combining the result of Lemma~\ref{lma: union_bound_covering_number_regularised_kernel_function_lower_upper_f_2_n_concentration_bound} and Lemma~\ref{lma: key_conc_inequality_app_steinwart_fischer_lemma_17} we obtain that for $1 \geq \lambda \geq n^{-\frac{1}{2+p}}$ with probability at least $1 - 6e^{-\tau},$ for $n \geq (c_0\tau)^{4+2p}$, the following bounds hold simultaneously for all $x\in \cX$: 
    \begin{align}
        \left\|\hat{C}_{X}^{\frac{1}{2}}\left(\hat{C}_{X,\lambda}^{-1} - C_{X,\lambda}^{-1}\right)k(x,\cdot)\right\|_{\cH} &\leq \tau o(1) \nonumber\\
        \|[C_{X,\lambda}^{-1}k(x,\cdot)]\|_{2,n}^2 - \frac{1}{2}\|[C_{X,\lambda}^{-1}k(x,\cdot)]\|_{L_2(\pi)}^2 &\geq - \tau o(1) \nonumber\\
        \|[C_{X,\lambda}^{-1}k(x,\cdot)]\|_{2,n}^2 - \frac{3}{2}\|[C_{X,\lambda}^{-1}k(x,\cdot)]\|_{L_2(\pi)}^2 &\leq  \tau o(1).\nonumber
    \end{align}
    Fix $x \in \cX$. Using the algebraic identity $a^2-b^2 = (a-b)(2b+(a-b))$, and recalling that by Definition~\ref{def:emp_norm},
    \begin{equation*}
        \|f\|_{2,n}^2 = \left\|\hat{C}_{X}^{\frac{1}{2}}f\right\|_{\cH}^2,
    \end{equation*}
    we deduce
    \begin{align*}
       &\left| \left\|\hat{C}_{X}^{\frac{1}{2}}\hat{C}_{X,\lambda}^{-1}k(x,\cdot)\right\|_{\cH}^2- \left\|\hat{C}_{X}^{\frac{1}{2}}C_{X,\lambda}^{-1}k(x,\cdot)\right\|_{\cH}^2\right|\\
       \leq & \left\|\hat{C}_{X}^{\frac{1}{2}}\left(\hat{C}_{X,\lambda}^{-1} - C_{X,\lambda}^{-1}\right)k(x,\cdot)\right\|_{\cH} \cdot \left(\left\|\hat{C}_{X}^{\frac{1}{2}}\left(\hat{C}_{X,\lambda}^{-1} - C_{X,\lambda}^{-1}\right)k(x,\cdot)\right\|_{\cH} + 2\left\|C_{X,\lambda}^{-1}k(x,\cdot)\right\|_{2,n}\right)\\
       \leq  & \tau o(1)\left(\tau o(1) + 2\left\|C_{X,\lambda}^{-1}k(x,\cdot)\right\|_{2,n}\right).
    \end{align*}
Using Definition~\ref{def:emp_norm} again, this reads
\begin{equation*}
    \left\|\hat{C}_{X,\lambda}^{-1}k(x,\cdot)\right\|_{2,n}^2 \geq  \left\|C_{X,\lambda}^{-1}k(x,\cdot)\right\|_{2,n}^2 - \tau o(1)\left(\tau o(1) + 2\left\|C_{X,\lambda}^{-1}k(x,\cdot)\right\|_{2,n}\right).
\end{equation*}
We have
\begin{align*}
    \|C_{X,\lambda}^{-1}k(x,\cdot)]\|_{2,n}^2 &\leq  \frac{3}{2}\|[C_{X,\lambda}^{-1}k(x,\cdot)]\|_{L_2(\pi)}^2  +  \tau o(1) \leq \left(\sqrt{1.5}\|[C_{X,\lambda}^{-1}k(x,\cdot)]\|_{L_2(\pi)} + \sqrt{\tau} o(1)\right)^2.
\end{align*}
Hence,
\begin{align*}
    \left\|\hat{C}_{X,\lambda}^{-1}k(x,\cdot)\right\|_{2,n}^2 \geq& \left\|C_{X,\lambda}^{-1}k(x,\cdot)\right\|_{2,n}^2 - \tau o(1)\left(\|[C_{X,\lambda}^{-1}k(x,\cdot)]\|_{L_2(\pi)} + \sqrt{\tau}o(1) + \tau o(1)\right)\\
    \geq &\frac{1}{2}\|[C_{X,\lambda}^{-1}k(x,\cdot)]\|_{L_2(\pi)}^2 - \tau o(1)\\ &- \tau o(1)\left(\|[C_{X,\lambda}^{-1}k(x,\cdot)]\|_{L_2(\pi)} + \sqrt{\tau}o(1) + \tau o(1)\right)\\
    \geq & \frac{1}{2}\|[C_{X,\lambda}^{-1}k(x,\cdot)]\|_{L_2(\pi)}^2 - \tau^2 o(1)-  \tau o(1)\|[C_{X,\lambda}^{-1}k(x,\cdot)]\|_{L_2(\pi)}.
\end{align*}
By Lemma~\ref{lma:l_eff_dim_charac},
$$
    \int_{\cX} \left\|[C_{X,\lambda}^{-1} k(x, \cdot)]\right\|_{L_2(\pi)}^2d\pi(x) = \mathcal{N}_{2}(\lambda).
$$
Furthermore, by Jensen's inequality, 
\begin{equation*}
     \int_{\cX}\|[C_{X,\lambda}^{-1}k(x,\cdot)]\|_{L_2(\pi)}^2 \mathrm{d}\pi(x) \geq \left(\int_{\cX}\|[C_{X,\lambda}^{-1}k(x,\cdot)]\|_{L_2(\pi)} \mathrm{d}\pi(x) \right)^2.
\end{equation*}
Recall from Lemma~\ref{lma: p_effective_dim} that
\begin{equation*}
    c_{1,2}\lambda^{-p}\leq \mathcal{N}_2(\lambda)\leq c_{2,2}\lambda^{-p}.
\end{equation*}
Therefore we have
\begin{align*}
    \int_{\cX}\|[C_{X,\lambda}^{-1}k(x,\cdot)]\|_{L_2(\pi)} \mathrm{d}\pi(x)  & \leq \sqrt{c_{2,2}}\lambda^{-\frac{p}{2}}\\
    \int_{\cX}\|[C_{X,\lambda}^{-1}k(x,\cdot)]\|_{L_2(\pi)}^2 \mathrm{d}\pi(x) & \geq c_{1,2}\lambda^{-p}.
\end{align*}
Hence

\begin{equation*}
    \begin{aligned}
        \int_{\cX}\|\hat{C}_{X,\lambda}^{-1}k(x,\cdot)\|_{2,n}^2d\pi(x) & \geq \frac{c_{1,2}}{2}\lambda^{-p}- \tau^2 o(1)-  \tau o(1)\sqrt{c_{2,2}}\lambda^{-\frac{p}{2}}\\
    &\geq \left(\frac{c_{1,2}}{2} - \tau o(1)\sqrt{c_{2,2}}\right)\lambda^{-p} - \tau^2 o(1)\\
    %&\geq \rho_2\lambda^{-p}.
    \end{aligned}
\end{equation*}
Combined with Eq.~\eqref{eq:variance_other_form}, it leads to our final bound on the variance term, for a constant $\rho_2 \geq 0$ and for sufficiently large $n\geq 1$, where the hidden index bound depends on $\tau$, we have
\begin{equation}\label{eq: variance_term_lower_bound_lambda_minus_p}
    \frac{\sigma^2}{n^2}\sum_{i=1}^{n}\left\|\left[\hat{C}_{X,\lambda}^{-1}\phi(x_i)\right]\right\|_{L_2(\pi)}^2 \geq \frac{\rho_2}{n\lambda^{p}}.
\end{equation}
{\bf Putting it together.} We are now ready to assemble the lower bounds on the variance and on the bias. For a fixed confidence parameter $\tau \geq \log(10)$, for sufficiently large $n>0$, where the hidden index bound depends on $\tau$, with probability at least $1-10e^{-\tau}$, we have by Eq.~\eqref{eq: bias_term_lower_bound_constant} and
Eq.~\eqref{eq: variance_term_lower_bound_lambda_minus_p}, that for $\lambda = \lambda(n)$ satisfying $1 \geq \lambda > n^{-\frac{1-q}{2+p}}$ for some $q\in (0,\frac{1}{2})$, 
$$
\mathbb{E}\left[\left\|[\hat{F}_{\lambda}]-F_{\ast}\right\|^2_{L_2(\pi; \cY)} \mid x_1, \ldots, x_n\right] \geq \rho_1\lambda^2 + \rho_2n^{-1}\lambda^{-p}
$$
where $\rho_1,\rho_2$ have no dependence on $n$. Recall Young's inequality, for $r,q>1$ satisfying $r^{-1} + q^{-1} = 1$, we have for all $a,b \geq 0$,
\[a+b\geq r^{\frac{1}{r}}q^{\frac{1}{q}}a^{\frac{1}{r}}b^{\frac{1}{q}}.\]
We apply Young's inequality with $r^{-1} = p/(2+p)$ and $q^{-1} = 2/(2+p)$, there exists a constant $c_1>0$ such that 
\begin{align*}
    \rho_1\lambda^2 + \rho_2n^{-1}\lambda^{-p} \geq c_1 \left(\lambda^2\right)^{\frac{p}{2+p}}\left(\lambda^{-p}n^{-1}\right)^{\frac{2}{2+p}} = c_1n^{-\frac{2}{2+p}}.
\end{align*}
To conclude the proof, let $\lambda = \lambda(n)$ be an arbitrary choice of regularization parameter satisfying $\lambda(n) \to 0$. We have just covered the case $1 \geq \lambda \geq n^{-\frac{1}{2+p}}$ and the case $0 < \lambda \leq n^{-\frac{1}{2+p}}$ is covered by \cite[Section B.4]{li2023on}.
\end{proof}
\begin{lma}
\label{lma: sat_eff_bias_population_lower_bound}
    For any $\lambda \leq 1$ and $C \in S_2(\cH, \cY)$, with $C \nperp S_2(\overline{\text{ran}~S_{\pi}}, \cY)$\footnote{$\nperp$ is the notation for ``not being orthogonal to''.}, we have
    \begin{equation*}
    \left\|\left[CC_{X,\lambda}^{-1}\right]\right\|_{S_2(L_2(\pi),\mathcal{Y})}^2\geq \sum_{i\in I, j\in J}a_{ij}^2\frac{\mu_i}{(\mu_i+1)^2} > 0,
\end{equation*}
with $a_{ij}:= \langle d_j, C\sqrt{\mu_i}e_i \rangle_{\cY}$, $i\in I, j\in J$.
\end{lma}
\begin{proof}
Define $\{a_{ij}\}_{i \in I \cap I', j \in J}$ such that $a_{ij}:= \langle d_j, C\sqrt{\mu_i}e_i \rangle_{\cY}$ for $i \in I, j \in J$ and $a_{ij}:= \langle d_j, C\tilde{e}_i \rangle_{\cY}$ for $i \in I', j \in J$. Then, on one hand, since $C \in S_2(\cH, \cY),$
$$
C = \sum_{i \in I, j\in J} a_{ij}  d_j \otimes (\sqrt{\mu_i}e_i) + \sum_{i \in I', j\in J} a_{ij}  d_j \otimes \tilde{e}_i.
$$
On the other hand,
$$
C_{X,\lambda}^{-1} = \sum_{i \in I} (\mu_i + \lambda)^{-1}  (\sqrt{\mu_i}e_i) \otimes (\sqrt{\mu_i}e_i) + \lambda^{-1}\sum_{i \in I'} \tilde{e}_i \otimes \tilde{e}_i.
$$
Therefore, noting that $\tilde{e}_i=0$ $\pi-$a.e. for all $i \in I'$, we have,
\begin{align*}
   [CC_{X,\lambda}^{-1}] 
    &= \left[\sum_{i\in I,j \in J}a_{ij}(\mu_i+\lambda)^{-1}d_j\otimes (\sqrt{\mu_i}e_i) + \sum_{i \in I', j\in J} \frac{a_{ij}}{\lambda}  d_j \otimes \tilde{e}_i \right] \\
    &= \sum_{i\in I,j \in J} a_{ij}\frac{\sqrt{\mu_i}}{\mu_i+\lambda} d_j\otimes [e_i].
\end{align*}
Therefore the $S_2(L_2(\pi),\mathcal{Y})$-norm can be evaluated in closed form using Parseval's identity,
\begin{equation*}
    \left\|\left[CC_{X,\lambda}^{-1}\right]\right\|_{S_2(L_2(\pi),\mathcal{Y})}^2 = \sum_{i\in I,j \in J}a_{ij}^2\frac{\mu_i}{(\mu_i+\lambda)^2} \geq \sum_{i\in I,j \in J}a_{ij}^2\frac{\mu_i}{(\mu_i+1)^2},
\end{equation*}
where we used that $\{d_j\}_{j \in J}$ is orthonormal in $\mathcal{Y}$, $\{[e_i\}_{i \in I})$ is orthonormal in $L_2(\pi)$, and $\lambda \leq 1$. The right hand side has no dependence on $\lambda$ or $n$. Furthermore, under assumption \eqref{asst:evd_plus}, $\mu_i>0$ for all $i \in I$, therefore the right hand side term equals zero if and only if $a_{ij} = 0$ for all $i \in I,j \in J$. Since by assumption $C \nperp S_2(\overline{\text{ran}~S_{\pi}}, \cY)$, the right hand side is strictly positive.
\end{proof}

\begin{lma}
\label{lma: sat_eff_bias_concentration_inequality_main}
Suppose Assumption~\eqref{asst:evd} holds with $p \in (0,1]$. Let $C \in S_2(\cH, \cY)$ such that $[C] \in S_2([\cH]^2, \cY)$. There is a constant $c_0 > 0$ such that for any $\delta \geq \log(4),$ with probability at least $1-4e^{-\tau}$, for $n\geq \left(c_0\tau\right)^{(4+2p)}$ and $1 \geq \lambda \geq n^{-\frac{1}{2+p}}$, we have
    \begin{equation*}
        \left|\left\|[CC_{X\lambda}^{-1}]\right\|_{S_2(L_2(\pi),\mathcal{Y})}^2 - \left\|[C\hat{C}_{X\lambda}^{-1}]\right\|_{S_2(L_2(\pi),\mathcal{Y})}^2\right| \leq \tau^2 o(1)
    \end{equation*}
    We have $c_0:= 8\kappa \max\{\sqrt{c_{2,1}},1\}$) where $c_{2,1}$ is defined in Lemma~\ref{lma: p_effective_dim}. 
\end{lma}

\begin{proof}Using the identity $A^{-1}- B^{-1} = A^{-1}(B-A)B^{-1}$, we obtain 
\begin{equation*}
    C_{X,\lambda}^{-1} - \hat{C}_{X,\lambda}^{-1} = C_{X,\lambda}^{-1}(\hat{C}_{X}-C_{X})\hat{C}_{X,\lambda}^{-1}
\end{equation*}
We apply Lemma \ref{lma: lietal2022optimal_Lemma_2} with $\gamma = 0$,
\begin{align}
    \left\|\left[C\left(C_{X,\lambda}^{-1}-\hat{C}_{X,\lambda}^{-1}\right)\right]\right\|_
{S_2(L_2(\pi),\mathcal{Y})}\nonumber =& \left\|\left[C\hat{C}_{X,\lambda}^{-1}(\hat{C}_{X}-C_{X})C_{X,\lambda}^{-1}\right]\right\|_
{S_2(L_2(\pi),\mathcal{Y})}\nonumber\\
=& \left\|C\hat{C}_{X,\lambda}^{-1}(\hat{C}_{X}-C_{X})C_{X,\lambda}^{-1}C_{X}^{\frac{1}{2}}\right\|_
{S_2(\cH,\mathcal{Y})}\nonumber\\
\leq& \left\|C\hat{C}_{X,\lambda}^{-\frac{1}{2}}\right\|_{S_2(\cH,\mathcal{Y})}\left\|\hat{C}_{X,\lambda}^{-\frac{1}{2}}C_{X,\lambda}^{\frac{1}{2}}\right\|_{\cH \to \cH} \nonumber\\ &\cdot\left\|C_{X,\lambda}^{-\frac{1}{2}}(\hat{C}_{X}-C_{X})C_{X,\lambda}^{-\frac{1}{2}}\right\|_{\cH \to \cH}\left\|C_{X,\lambda}^{-\frac{1}{2}}C_{X}^{\frac{1}{2}}\right\|_{\cH \to \cH}\label{eqn: sat_eff_bias_decomposition_four_term_to_apply_steinwart_fischer}
\end{align}
We consider each of the four terms in line \eqref{eqn: sat_eff_bias_decomposition_four_term_to_apply_steinwart_fischer}. The last term is bounded above by $1$ and the first term is bounded above by $\lambda^{-\frac{1}{2}}\|C\|_{S_2(\cH,\mathcal{Y})}$. 
%$n\geq \left(8\log(4\delta^{-1})\max(\sqrt{c_4},1)\right)^{(4+2p)}$
By Lemma~\ref{lma: cordes_inequality} applied with $s = 1/2$, we have for the second term
$$
\left\|\hat{C}_{X,\lambda}^{-\frac{1}{2}}C_{X,\lambda}^{\frac{1}{2}}\right\|_{\cH \to \cH} \leq \left\|\hat{C}_{X,\lambda}^{-1}C_{X,\lambda}\right\|_{\cH \to \cH}^{\frac{1}{2}}.
$$
Then, by Lemma~\ref{lma: b_m_prop_5_dot_4}, for $\tau \geq \log(2)$,  with probability at least $1-2e^{-\tau}$, for $\sqrt{n\lambda} \geq 8\tau \kappa \sqrt{\max\{\cN(\lambda),1\}}$, we have
    \begin{equation*}
       \left\|\hat{C}_{X,\lambda}^{-1}C_{X,\lambda}\right\|_{\cH \to \cH} \leq 2.
    \end{equation*}
Since $\mathcal{N}(\lambda)\leq c_{2,1}\lambda^{-p}$ by Lemma~\ref{lma: p_effective_dim}, and $\lambda \leq 1$, it suffices to verify that $\lambda$ satisfies
\begin{align*}
    \sqrt{n\lambda} & \geq 8\tau \kappa \max\{\sqrt{c_{2,1}},1\}\lambda^{-\frac{p}{2}}.
\end{align*}
Since $\lambda\geq n^{-\frac{1}{2+p}}$ by assumption, we deduce the sufficient condition $n \geq (\tau c_0)^{2(2+p)}$, where $c_0:= 8\kappa \max\{\sqrt{c_{2,1}},1\}$.

We bound the third term using Lemma 16 \cite{JMLR:v21:18-041}. For $\tau \geq \log(2)$, with probability at least $1-2e^{-\tau}$, we have 
\begin{equation*}
    \lambda^{-\frac{1}{2}}\left\|C_{X,\lambda}^{-\frac{1}{2}}(C_{X}-\hat{C}_{X})C_{X,\lambda}^{-\frac{1}{2}}\right\|_{\cH \to \cH} \leq \frac{4\kappa^2\xi_{\delta}}{3n\lambda^{\frac{3}{2}}} + \sqrt{\frac{2\kappa^2\xi_{\delta}}{n\lambda^2}},
\end{equation*}
where we define
\begin{equation*}
    \xi_{\delta}:= \log\frac{2\kappa^2(\mathcal{N}_1(\lambda)+1)}{e^{-\tau}\|C_{X}\|_{\cH \to \cH}}.
\end{equation*}
By assumption $\lambda\geq n^{-\frac{1}{2+p}}$. We thus have
\begin{align*}
    n\lambda^{\frac{3}{2}} \geq n^{\frac{1+2p}{4+2p}} \qquad \text{ and } \qquad  n\lambda^{2} \geq n^{\frac{p}{2+p}}.
\end{align*}
On the other hand, since $1 \geq \lambda \geq n^{-\frac{1}{2+p}}$, using Lemma~\ref{lma: p_effective_dim}, we have
\begin{align*}
    \xi_{\delta} \leq \log \frac{8
    2(c_{2,1}\lambda^{-p}+1)}{e^{-\tau}\|C_{X}\|_{\cH \to \cH}} \leq \log \frac{2(c_{2,1}+1)n^{\frac{p}{2+p}}}{e^{-\tau}\|C_{X}\|_{\cH \to \cH}} \leq \log \frac{2(c_{2,l}+1)}{e^{-\tau}\|C_{X}\|_{\cH \to \cH}} + \frac{p}{2+p}\log n.
\end{align*}
The first term does not depend on $n$, and the second term is logarithmic in $n$. Putting everything together with a union bound, we get a bound on \eqref{eqn: sat_eff_bias_decomposition_four_term_to_apply_steinwart_fischer}. With probability at least $1-4e^{-\tau}$, for $n\geq \left(c_0\tau\right)^{(4+2p)}$, we have
\begin{equation*}
    \left\|\left[C\left(C_{X,\lambda}^{-1}-\hat{C}_{X,\lambda}^{-1}\right)\right]\right\|_{S_2(L_2(\pi),\mathcal{Y})}\leq \|C\|_{S_2(\cH,\mathcal{Y})}\sqrt{2}\left(\frac{4\xi_{\delta}}{3n^{\frac{0.5+p}{2+p}}} + \sqrt{\frac{2\xi_{\delta}}{n^{\frac{p}{2+p}}}}\right) = \tau o(1)
    \end{equation*}
The derivations in the proof of Lemma~\ref{lma: sat_eff_bias_population_lower_bound} show that
$$
   [CC_{X,\lambda}^{-1}] = \sum_{i\in I,j \in J} a_{ij}\frac{\sqrt{\mu_i}}{\mu_i+\lambda} d_j\otimes [e_i],
$$
with $a_{ij}:= \langle d_j, C\sqrt{\mu_i}e_i \rangle_{\cY}$, $i\in I, j\in J$. Note that since $[C] \in S_2([\cH]^2, \cY)$, we have
$$
\|[C]\|_{S_2([\cH]^2, \cY)}^2 = \left\|
\sum_{i \in I, j\in J} a_{ij}  d_j \otimes (\sqrt{\mu_i}e_i) \right\|_{S_2([\cH]^2, \cY)}^2  = \sum_{i \in I, j\in J} \frac{a_{ij}^2}{\mu_i} < +\infty.
$$
Hence,
\begin{equation}
\label{eqn: sat_eff_interpolation_norm_2_agrees_with_L_2_norm}
    \left\|[CC_{X,\lambda}^{-1}]\right\|_{S_2(L_2(\pi),\mathcal{Y})}^2 = \sum_{i\in I, j\in J }a_{ij}^2\frac{\mu_i}{(\mu_i+\lambda)^2} \leq \sum_{i\in I, j\in J}\frac{a_{ij}^2}{\mu_i} = \|[C]\|_{S_2([\cH]^2, \cY)}^2 < + \infty.
\end{equation}
Using the equality $a^2 - b^2 = (a-b)(a+b)$ and the reverse triangular inequality, we obtain the following bound, with probability at least $1-4e^{-\tau}$, for $n\geq \left(c_0 \tau \right)^{(4+2p)}$,
\begin{align*}
    &\left|\left\|[CC_{X,\lambda}^{-1}]\right\|_{S_2(L_2(\pi),\mathcal{Y})}^2 - \left\|[C\hat{C}_{X,\lambda}^{-1}]\right\|_{S_2(L_2(\pi),\mathcal{Y})}^2\right|\\ \leq& \left\|\left[C\left(C_{X,\lambda}^{-1}-\hat{C}_{X,\lambda}^{-1}\right)\right]\right\|_{S_2(L_2(\pi),\mathcal{Y})}\left(\left\|[CC_{X,\lambda}^{-1}]\right\|_{S_2(L_2(\pi),\mathcal{Y})} + \left\|[C\hat{C}_{X,\lambda}^{-1}]\right\|_{S_2(L_2(\pi),\mathcal{Y})}\right)\\
    \leq & \tau o(1)\left(2\left\|[CC_{X,\lambda}^{-1}]\right\|_{S_2(L_2(\pi),\mathcal{Y})} + \left\|\left[C\left(C_{X,\lambda}^{-1}-\hat{C}_{X,\lambda}^{-1}\right)\right]\right\|_{S_2(L_2(\pi),\mathcal{Y})}\right)\\
    \leq &\tau o(1)\left(2\|[C]\|_{S_2([\cH]^2, \cY)}^2 + \tau o(1)\right)\\
    = & \tau^2 o(1),
\end{align*}
where in the second last line we used Equation \eqref{eqn: sat_eff_interpolation_norm_2_agrees_with_L_2_norm}. 
\end{proof}

\begin{lma}
\label{lma: regularised_kernel_function_lower_upper_f_2_n_concentration_bound}
    Fix $x\in \cX$ and $f_x$ as in Definition~\ref{def:reg_map}. For $\tau \geq \log(2)$, with probability at least $1-2e^{-\tau}$ (note that this event depends on $x$), 
\begin{equation*}
    \left|\|f_x\|_{2,n}^2 - \|[f_x]\|_{L_2(\pi)}^2\right|\leq \frac{1}{2}\|[f_x]\|_{L_2(\pi)}^2 + \frac{5\tau \kappa^2}{3\lambda^2 n}.
\end{equation*}
\end{lma}
 
\begin{proof}
We start with
\begin{align*}
    \|f_x\|_{\infty} \leq \kappa\|f_x\|_{\cH}  \leq \kappa^2 \lambda^{-1}.
\end{align*}
We apply Proposition~\ref{prop:prop_c_9} to $f = f_x$, with $M=\kappa^2 \lambda^{-1}$. For $\tau \geq \log(2)$, with probability at least $1-2e^{-\tau}$,
\begin{equation*}
    \left|\|f_x\|_{2,n}^2 - \|[f_x]\|_{L_2(\pi)}^2\right|\leq \frac{1}{2}\|[f_x]\|_{L_2(\pi)}^2 + \frac{5\tau \kappa^2}{3\lambda^2 n}.
\end{equation*}
\end{proof}

\begin{lma}
    \label{lma: union_bound_covering_number_regularised_kernel_function_lower_upper_f_2_n_concentration_bound}
    Suppose that $\cX$ is a compact set in $\Rd$ and that $k \in C^{\theta}(\mathcal{X}\times \mathcal{X})$ for $\theta \in (0,1]$ (Definition~\ref{def:holder}). Assume that $1 \geq \lambda\geq n^{-\frac{1}{2+p}}$. With probability at least $1-2e^{-\tau}$, it holds for all $ x\in \cX$ simultaneously that 
   \begin{align*}
       \|C_{X,\lambda}^{-1}k(x,\cdot)\|_{2,n}^2&\geq  \frac{1}{2}\|[C_{X,\lambda}^{-1}k(x,\cdot)]\|_{L_2(\pi)}^2 - \tau o(1),\\%\label{eq: lower_concentration_bound_f_2_n_lma_statement_union_bound}\\
    \|C_{X,\lambda}^{-1}k(x,\cdot)\|_{2,n}^2&\leq  \frac{3}{2}\|[C_{X,\lambda}^{-1}k(x,\cdot)]\|_{L_2(\pi)}^2 +\tau o(1).%\label{eq: upper_concentration_bound_f_2_n_lma_statement_union_bound}
   \end{align*}
\end{lma}

\begin{proof}
The proof follows \citet[Lemma C.11]{li2023on}. As we use different notations and tracking of constants, we provide a similar proof in our setting for completeness. By Lemma \ref{lma: covering_number_regularised_kernel_functions_infinity_norm_Holder}, there exists an $\epsilon$-net $\mathcal{F}\subseteq \mathcal{K}_{\lambda} \subseteq \cH$ with respect to $\|\cdot\|_{\infty}$ such that there exists a positive constant $c$ with
    \begin{equation*}
        |\mathcal{F}| \leq c(\lambda\epsilon)^{-\frac{2d}{\theta}},
    \end{equation*}
    for $\epsilon$ to be determined later. Using Lemma~\ref{lma: regularised_kernel_function_lower_upper_f_2_n_concentration_bound} and a union bound over the finite set $\mathcal{F}$, with probability at least $1-2e^{-\tau}$, it holds simultaneously for all $f\in \mathcal{F}$ that
    \begin{equation}
        \left|\|f\|_{2,n}^2 - \|[f]\|_{L_2(\pi)}^2\right|\leq \frac{1}{2}\|[f]\|_{L_2(\pi)}^2 + \frac{5(\tau+\log(|\mathcal{F}|))\kappa^2}{3\lambda^2 n}.
        \label{eqn: simultaneous_epsilon_net_union_bound_concentration}
    \end{equation}
    We work in the event where Equation \eqref{eqn: simultaneous_epsilon_net_union_bound_concentration} holds for all $f\in \mathcal{F}$. By definition of an $\epsilon$-net and $\mathcal{K}_{\lambda}$, for any $x\in \cX$, there exists some $f\in \mathcal{F}$ such that 
    \begin{equation*}
        \|C_{X,\lambda}^{-1}k(x,\cdot)-f\|_{\infty}\leq \epsilon,
    \end{equation*}
    which in particular implies that 
    \begin{align*}
        |\|[C_{X,\lambda}^{-1}k(x,\cdot)]\|_{L_2(\pi)} - \|[f]\|_{L_2(\pi)}|&\leq \epsilon\\
        |\|C_{X,\lambda}^{-1}k(x,\cdot)\|_{2,n} - \|f\|_{2,n}|&\leq \epsilon.
    \end{align*}
    Since $\|C_{X,\lambda}^{-1}k(x,\cdot)\|_{\infty}\leq \kappa^2\lambda^{-1}$, using the algebraic identity $a^2 - b^2 = (a-b)(2b+(a-b))$, we obtain
    \begin{align*}
        |\|[C_{X,\lambda}^{-1}k(x,\cdot)]\|^2_{L_2(\pi)} - \|[f]\|^2_{L_2(\pi)}|&\leq \epsilon(2\kappa^2\lambda^{-1}+\epsilon)\\
        |\|C_{X,\lambda}^{-1}k(x,\cdot)\|^2_{2,n} - \|f\|^2_{2,n}|&\leq \epsilon(2\kappa^2\lambda^{-1} + \epsilon).
    \end{align*}
    We therefore have, 
    \begin{align*}
        \|C_{X,\lambda}^{-1}k(x,\cdot)\|_{2,n}^2 &\leq \|f\|_{2,n}^2 + \epsilon(2\kappa^2\lambda^{-1} + \epsilon)\\
        &\leq \frac{3}{2}\|[f]\|_{L_2(\pi)}^2 + \frac{5(\tau+\log(|\mathcal{F}|) \kappa^2}{3\lambda^2 n} + \epsilon(2\kappa^2\lambda^{-1} + \epsilon)\\
        &\leq \frac{3}{2}\|[C_{X,\lambda}^{-1}k(x,\cdot)]\|_{L_2(\pi)}^2 + \frac{5(\tau+\log(|\mathcal{F}|) \kappa^2}{3\lambda^2 n} + 2\epsilon(2\kappa^2\lambda^{-1} + \epsilon).
    \end{align*}
    We now choose $\epsilon = \frac{1}{n}$ and bound the error term. Recall that $1 \geq \lambda \geq n^{-\frac{1}{2+p}}$, therefore,
    \begin{align*}
        \frac{5(\tau+\log(|\mathcal{F}|) \kappa^2}{3\lambda^2 n} + 2\epsilon(2\kappa^2\lambda^{-1} + \epsilon) &\leq \frac{5(\tau+\log(|\mathcal{F}|) \kappa^2}{3}n^{-\frac{p}{2+p}} + 2\left(2\kappa^2n^{\frac{-1-p}{2+p}}+\frac{1}{n^2}\right)\\
        &\leq \frac{5\kappa^2}{3}\left(\tau + \log(c\lambda^{-\frac{2d}{\theta}}n^{\frac{2d}{\theta}})\right)n^{-\frac{p}{2+p}} + 2\left(2\kappa^2n^{\frac{-1-p}{2+p}}+\frac{1}{n^2}\right)\\
        & = \tau o(1).
    \end{align*}
\end{proof}

\begin{lma}
\label{lma: key_conc_inequality_app_steinwart_fischer_lemma_17}
    For $1 \geq \lambda \geq n^{-\frac{1}{2+p}}$, with probability at least $1 - 4e^{-\tau},$ for $n \geq (c_0\tau)^{4+2p}$,
    we have for all $x\in \cX$ simultaneously
    \begin{equation}
    \label{eqn: main_eq_lma: key_conc_inequality_app_steinwart_fischer_lemma_17}
    \left\|\hat{C}_{X}^{\frac{1}{2}}\hat{C}_{X,\lambda}^{-1}(C_{X}-\hat{C}_{X})C_{X,\lambda}^{-1}k(x,\cdot)\right\|_{\cH} = \tau o(1),
\end{equation}
    where $c_0$ is the same constant as in Lemma~\ref{lma: sat_eff_bias_concentration_inequality_main}.
\end{lma}

\begin{proof}
\begin{align}
    &\left\|\hat{C}_{X}^{\frac{1}{2}}\hat{C}_{X,\lambda}^{-1}(C_{X}-\hat{C}_{X})C_{X,\lambda}^{-1}k(x,\cdot)\right\|_{\cH}\nonumber\\
    =& \left\|\hat{C}_{X}^{\frac{1}{2}}\hat{C}_{X,\lambda}^{-\frac{1}{2}}\hat{C}_{X,\lambda}^{-\frac{1}{2}}C_{X,\lambda}^{\frac{1}{2}}C_{X,\lambda}^{-\frac{1}{2}}(C_{X}-\hat{C}_{X})C_{X,\lambda}^{-1}k(x,\cdot)\right\|_{\cH}\nonumber\\
    &\leq \left\|\hat{C}_{X}^{\frac{1}{2}}\hat{C}_{X,\lambda}^{-\frac{1}{2}}\right\|_{\cH \to \cH}\left\|\hat{C}_{X,\lambda}^{-\frac{1}{2}}C_{X,\lambda}^{\frac{1}{2}}\right\|_{\cH \to \cH} \nonumber \\ &\cdot \left\|C_{X,\lambda}^{-\frac{1}{2}}(C_{X}-\hat{C}_{X})C_{X,\lambda}^{-\frac{1}{2}}\right\|_{\cH \to \cH}\left\|C_{X,\lambda}^{-\frac{1}{2}}k(x,\cdot)\right\|_{\cH}\nonumber
\end{align}
We already saw in the proof of Lemma~\ref{lma: sat_eff_bias_concentration_inequality_main} that the first term is bounded by 1 and there is a constant $c_0>0$ such that for $\tau \geq \log(2)$, with probability at least $1-2e^{-\tau}$, for $n \geq (c_0\tau)^{4+2p}$, the second term is bounded by $\sqrt{2}$. For the third term we also saw in the proof of Lemma~\ref{lma: union_bound_covering_number_regularised_kernel_function_lower_upper_f_2_n_concentration_bound} that for $\tau \geq \log(2)$, with probability at least $1-2e^{-\tau}$, we have 
\begin{equation*}
    \lambda^{-\frac{1}{2}}\left\|C_{X,\lambda}^{-\frac{1}{2}}(C_{X}-\hat{C}_{X})C_{X,\lambda}^{-\frac{1}{2}}\right\|_{\cH \to \cH} \leq \frac{4\kappa^2\xi_{\delta}}{3n\lambda^{\frac{3}{2}}} + \sqrt{\frac{2\kappa^2\xi_{\delta}}{n\lambda^2}},
\end{equation*}
where we defined
\begin{equation*}
    \xi_{\delta}= \log\frac{2\kappa^2(\mathcal{N}_1(\lambda)+1)}{e^{-\tau}\|C_{X}\|_{\cH \to \cH}}.
\end{equation*}
Finally, the fourth term is bounded above by $\lambda^{-\frac{1}{2}}\kappa$. Note that the bound on the fourth term is independent of $x$, so it holds simultaneously for all $x\in \cX$. This is in  contrast with the setting of Lemma~\ref{lma: union_bound_covering_number_regularised_kernel_function_lower_upper_f_2_n_concentration_bound} where for each fixed $x \in \cX$ corresponds an element in the $\epsilon$-net of $\mathcal{F}$ for which we have a high probability bound, and therefore we must use a union bound in order for the bound to hold simultaneously for all $x\in \cX$ in the proof of Lemma~\ref{lma: union_bound_covering_number_regularised_kernel_function_lower_upper_f_2_n_concentration_bound}. 
As in the proof of Lemma~\ref{lma: union_bound_covering_number_regularised_kernel_function_lower_upper_f_2_n_concentration_bound} since  $1 \geq \lambda\geq n^{-\frac{1}{2+p}}$, we have 
\begin{align*}
    \xi_{\delta} \leq \log \frac{2(c_{2,l}+1)}{e^{-\tau}\|C_{X}\|_{\cH \to \cH}} + \frac{p}{2+p}\log n.
\end{align*}
In the bound on $\xi_{\delta}$ above, the first term does not depend on $n$, and the second term is logarithmic in $n$. Putting everything together by union bound, with probability at least $1-4e^{-\tau}$, for $n \geq (c_0\tau)^{4+2p}$, we have
\begin{equation*}
    \left\|\hat{C}_{X}^{\frac{1}{2}}\hat{C}_{X,\lambda}^{-1}(C_{X}-\hat{C}_{X})C_{X,\lambda}^{-1}k(x,\cdot)\right\|_{\cH} = \tau o(1).
\end{equation*}
\end{proof}

\begin{rem}[Comparison to \cite{li2023on}] \label{rem:comparison_saturation} We explicit the differences between our proof strategy and the proof strategy of  \cite{li2023on}.
\begin{itemize}
\item Scalar versus vector-valued: lower bounding the bias in our case require us to accommodate for the vector-valued setting (see Lemma~\ref{lma: sat_eff_bias_population_lower_bound}).
\item New proof of the bias: we lower bound the bias through Lemma~\ref{lma: sat_eff_bias_concentration_inequality_main}, while \cite{li2023on} obtain the lower bound in Lemma C.7; however the proof of Lemma C.7 implicitly uses the equality $\|A^{-1}\| = \|A\|^{-1}$, with $\|\cdot\|$ the operator norm, see Eq.~(69) \cite{li2023on} and the preceding equations. It holds that $\|A^{-1}\| \geq \|A\|^{-1}$, but $\|A^{-1}\| \leq \|A\|^{-1}$ may not hold in general. We therefore develop a new proof for this step, leading to  Lemma~\ref{lma: sat_eff_bias_concentration_inequality_main}.
\item New proof of the variance: we lower bound the variance in Lemma~\ref{lma: key_conc_inequality_app_steinwart_fischer_lemma_17}, while \cite{li2023on} lower bound the variance in Lemma C.12; to show Eq.~\eqref{eqn: main_eq_lma: key_conc_inequality_app_steinwart_fischer_lemma_17}, \cite{li2023on} use a covering argument involving $\mathcal{N}(\mathcal{K}_{\lambda},\|\cdot\|_{\mathcal{H}},\epsilon)$ (Lemma C.10). However, a close look at the proof of Lemma C.10 (last inequality of the proof) reveals that $\frac{\lambda_i}{\lambda + \lambda_i}$ was mistaken for $\frac{\lambda}{\lambda + \lambda_i}$ and plugging the correct term in the proof would lead to a vacuous bound. As explained in the proof of Lemma~\ref{lma: key_conc_inequality_app_steinwart_fischer_lemma_17}, we therefore develop a proof that is free of a covering number argument for this step.
\end{itemize}
\end{rem}

\section{Learning rates for spectral algorithms}\label{sec:upper_rates_spectral}

To upper bound the excess-risk, we use a decomposition involving the \emph{approximation error} expressed as $F_{\lambda} - F_{*}$ and the \emph{estimation error} expressed as $\hat{F}_{\lambda} - F_{\lambda}$.
\begin{align*}
    \left\|[\hat{F}_{\lambda}] - F_{*}\right\|_{\gamma} &\leq \|[\hat{F}_{\lambda} - F_{\lambda}]\|_{\gamma} +\left\|[F_{\lambda}] - F_{*}\right\|_{\gamma},
\end{align*}
where $\hat{F}_{\lambda}$ is the empirical estimator based on general spectral regularization (Eq.~\eqref{eq:emp_estim}) and $F_{\lambda}$ is its counterpart in population (Eq.~\eqref{eq:pop_estim}). Note that this is a different decomposition than the \emph{bias-variance decomposition} used in the proof of Theorem~\ref{thm:saturation_effect_vector_valued KRR_maintext}.

The proof structure is as follows:
\begin{enumerate}
    \item Fourier expansion \ref{seq:Fourier_Expansion}.
    \item Approximation Error \ref{seq:finite_sample_upper_bound_approximation_error}.
    \item Estimation error \ref{seq:finite_sample_upper_bound_estimation_error}
\end{enumerate}

\subsection{Fourier expansion}
\label{seq:Fourier_Expansion}

Recall the notations defined in Appendix~\ref{sec:add_not}. The family $\{d_j\}_{j \in J}$ is an ONB of $\mathcal{Y}$, the family $\{\sqrt{\mu_i}e_i\}_{i\in I}$ is an ONB of $(\ker I_{\pi})^{\perp}$ and the family $\{\tilde{e}_i\}_{i \in I'}$ is an ONB of $\ker I_{\pi}$ such that $\left\{\mu_i^{1/2}e_i\right\}_{i\in I} \cup \{\tilde{e}_i\}_{i \in I'}$ forms an ONB of $\cH$. Furthermore, recall that $\{\mu_i^{\beta/2}[e_i]\}_{i\in I}$ is an ONB of $[\cH]^{\beta}$, $\beta \geq 0$.

\begin{lma}[Fourier expansion]
\label{lma: Fourier_expansion_C_lambda}
Suppose Assumption~\eqref{asst:src} holds with $\beta \geq 0$. By definition of the vector-valued interpolation space and by Theorem~\ref{th:def_psi}, we have
    \begin{equation}
    \label{eqn: F_ast_Fourier}
       F_{\ast} = \sum_{i\in I, j \in J}a_{ij}d_j[e_i], \qquad a_{ij} = \left\langle F_*, d_j[e_i]  \right\rangle_{L_2(\pi, \mathcal{Y})}, \qquad \left\|F_{\ast}\right\|_{\beta}^2 = \sum_{i\in I, j \in J}\frac{a_{ij}^2}{\mu_i^{\beta}}. 
    \end{equation}
Then, we have the following equalities with respect to this Fourier decomposition.
\begin{enumerate}
    \item The Hilbert-Schmidt operator $C_{\lambda}\in S_2(\cH,\mathcal{Y})$, Eq.~\eqref{eq:pop_estim}, can be written as 
    \begin{equation}
\label{eq:C_lambda_decomposition}
    C_{\lambda} = \sum_{i \in I, j \in J} a_{ij} g_{\lambda}(\mu_i)\sqrt{\mu_i} d_j\otimes \sqrt{\mu_i}e_i.
\end{equation}
    \item The Hilbert-Schmidt operator $\left(C_{YX} - C_{\lambda}C_{X}\right)C_{X,\lambda}^{-\frac{1}{2}}\in S_2(\cH,\mathcal{Y})$ can be written as 
    \begin{equation}
        \left(C_{YX} - C_{\lambda}C_{X}\right)C_{X,\lambda}^{-\frac{1}{2}} = \sum_{ij}a_{ij}r_{\lambda}(\mu_i)(\mu_i+\lambda)^{-\frac{1}{2}}\sqrt{\mu_i}\left(d_j \otimes \sqrt{\mu_i} e_i\right)\label{eq:Fourier_expansion_subresult_2}
    \end{equation}
    \item The Hilbert-Schmidt operator $C_{YX}\in S_2(\cH,\mathcal{Y})$ can be written as 
    \begin{equation}
        C_{YX} = \left(\sum_{i \in I, j \in J}a_{ij}\mu_i^{-\frac{\beta}{2}}d_j\otimes \sqrt{\mu_i}e_i\right)C_{X}^{\frac{\beta+1}{2}}\label{eq:Fourier_expansion_subresult_3}
    \end{equation}
\end{enumerate}
\end{lma}
\begin{proof}
We first derive the Fourier expansion of $C_{YX}$,
\begin{align}
    C_{YX} &= \mathbb{E}_{X,Y}\left[Y\otimes \phi(X)\right]\nonumber\\
        &= \mathbb{E}_X\left[F_{\ast}(X)\otimes \phi(X)\right]\label{eq:C_YX_Fourier_expansion_tower}\\
        &= \mathbb{E}_X\left[\sum_{i \in I, j \in J}a_{i \in I, j \in J}e_i(X)d_j\otimes \phi(X)\right]\nonumber\\
        &= \mathbb{E}_X\left[\sum_{i \in I, j \in J}a_{ij}d_j\otimes \left(\sum_{k \in I}\sqrt{\mu_k}e_k(X)\sqrt{\mu_k}e_k\right)e_i(X)\right]\nonumber\\
        &= \sum_{ijk}a_{ij}\sqrt{\mu_k} \cdot \mathbb{E}_X[e_k(X)e_i(X)] \cdot d_j\otimes (\sqrt{\mu_k}e_k)\nonumber\\
        &= \sum_{i \in I, j \in J}a_{ij}\sqrt{\mu_i}d_j\otimes (\sqrt{\mu_i}e_i),\label{eq:C_YX_Fourier_expansion_orthonormality}
    \end{align}
    where in Eq.~\eqref{eq:C_YX_Fourier_expansion_tower} we used the tower property of conditional expectation and in Eq.~\eqref{eq:C_YX_Fourier_expansion_orthonormality} we used the fact that  $\{[e_i]\}_{i\in I}$ forms an orthonormal system in $L_2(\pi)$. We can manipulate Eq.~\eqref{eq:C_YX_Fourier_expansion_orthonormality} to derive Eq.~\eqref{eq:Fourier_expansion_subresult_3}, 
    \begin{align*}
        C_{YX} 
        &= \sum_{i \in I, j \in J}a_{ij}\mu_i^{\frac{1}{2}-\frac{\beta+1}{2}}d_j\otimes \left(C_{X}^{\frac{\beta+1}{2}}(\sqrt{\mu_i}e_i)\right)\\
        &= \left(\sum_{i \in I, j \in J}a_{ij}\mu_i^{-\frac{\beta}{2}}d_j\otimes \sqrt{\mu_i}e_i\right)C_{X}^{\frac{\beta+1}{2}}.
    \end{align*}
    By the spectral decomposition of $C_X$ Eq.~(\ref{eq:SVD}) and spectral calculus (Definition~\ref{def:spectral_calculus}), we have that
\begin{align}
    g_{\lambda}(C_X) &= \sum_{i \in I} g_{\lambda}(\mu_i)\sqrt{\mu_i}e_i \otimes \sqrt{\mu_i}e_i +  g_{\lambda}(0)\sum_{i \in I'}\tilde{e}_i \otimes \tilde{e}_i,\label{eq:g_lambda_C_X_spectral_calculus_Fourier}\\
    r_{\lambda}(C_X) &= \sum_{i \in I} r_{\lambda}(\mu_i)\sqrt{\mu_i}e_i \otimes \sqrt{\mu_i}e_i +  \sum_{i \in I'}\tilde{e}_i \otimes \tilde{e}_i.\label{eq:r_lambda_C_X_spectral_calculus_Fourier}
\end{align}
where we used $r_{\lambda}(0)=1$.

{\bf Proof of Eq.~\eqref{eq:C_lambda_decomposition}}. Using Eq.~\eqref{eq:C_YX_Fourier_expansion_orthonormality} and \eqref{eq:g_lambda_C_X_spectral_calculus_Fourier}, we have\begin{align}
        C_{\lambda} 
        &= \left(\sum_{i \in I, j \in J}a_{ij}\sqrt{\mu_i}d_j\otimes (\sqrt{\mu_i}e_i)\right) \left(\sum_{k\in I}g_{\lambda}(\mu_k)(\sqrt{\mu_k}e_k)\otimes (\sqrt{\mu_k}e_k) + g_{\lambda}(0)\sum_{l\in I'}\Tilde{e}_l\otimes \Tilde{e}_l\right)\nonumber\\
        &= \sum_{ijk}a_{ij}\sqrt{\mu_i}g_{\lambda}(\mu_k)\delta_{ik}d_j\otimes(\sqrt{\mu_ke_k})\label{eq:C_lambda_Fourier_ONB}\\
        &= \sum_{i \in I, j \in J}a_{ij}\sqrt{\mu_i}g_{\lambda}(\mu_i)d_j\otimes (\sqrt{\mu_i}e_i),\nonumber
\end{align}
where in Eq.~\eqref{eq:C_lambda_Fourier_ONB}, we recall the fact that $\{\sqrt{\mu_i}e_i\}_{i\in I}$ forms an ONB of $(\ker I_{\pi})^{\perp}$ and $\{\Tilde{e_i}\}_{i\in I'}$ forms an ONB of $\ker I_{\pi}$. 

{\bf Proof of Eq.~\eqref{eq:Fourier_expansion_subresult_2}}. Using Eq.~\eqref{eq:C_YX_Fourier_expansion_orthonormality} and \eqref{eq:r_lambda_C_X_spectral_calculus_Fourier}, we have
\begin{align}
        \big(C_{YX} - &C_{\lambda}C_{X}\big)C_{X,\lambda}^{-\frac{1}{2}} = C_{YX}r_{\lambda}(C_{X})C_{X,\lambda}^{-\frac{1}{2}}\nonumber\\
        &= \left(\sum_{i \in I, j \in J}a_{ij}\sqrt{\mu_i}d_j\otimes (\sqrt{\mu_i}e_i)\right)\left(\sum_{k \in I} r_{\lambda}(\mu_k)\sqrt{\mu_k}e_k \otimes \sqrt{\mu_k}e_k +  \sum_{l \in I'}\tilde{e}_l \otimes \tilde{e}_l\right)C_{X,\lambda}^{-\frac{1}{2}}\nonumber\\
        &= \left(\sum_{ijk}a_{ij}\sqrt{\mu_i}r_{\lambda}(\mu_k)d_j\otimes (\sqrt{\mu_k}e_k)\delta_{ik}\right)C_{X,\lambda}^{-\frac{1}{2}}\nonumber\\%\label{eq:r_lambda_C_X_Fourier_ONB}\\
        &= \sum_{i \in I, j \in J}a_{ij}\sqrt{\mu_i}r_{\lambda}(\mu_i)d_j\otimes (C_{X,\lambda}^{-\frac{1}{2}}(\sqrt{\mu_i}e_i))\nonumber\\
        &= \sum_{i \in I, j \in J}a_{ij}\sqrt{\mu_i}(\mu_i+\lambda)^{-\frac{1}{2}}r_{\lambda}(\mu_i)d_j\otimes (\sqrt{\mu_i}e_i),\nonumber
    \end{align}
\end{proof}

\begin{lma}
\label{lma:F_lambda_gamma_norm}
    Suppose Assumption~\eqref{asst:src} holds with $\beta \geq 0$, then the following bound is satisfied, for all $\lambda>0$ and $0\leq \gamma \leq 1$, we have
    \begin{equation*}
        \|[F_{\lambda}]\|_{\gamma}^2\leq E^2\|F_{\ast}\|_{\min\{\gamma,\beta\}}^2\lambda^{-(\gamma-\beta)_{+}}.
    \end{equation*}
    For the definition of $E$, see Eq.~\eqref{eq:filter_fn_axiom1}.
\end{lma}
\begin{proof}
    We adopt the notation of Lemma \ref{lma: Fourier_expansion_C_lambda}. By Parseval's identity and Eq.~\eqref{eq:C_lambda_decomposition}, we have
    \begin{align*}
        \|[F_{\lambda}]\|_{\gamma}^2 &= \|C_{\lambda}\|_{S_2([\mathcal{H}]^{\gamma},\mathcal{Y})}^2\\
        &= \sum_{i \in I, j \in J} a_{ij}^2 g_{\lambda}(\mu_i)^2\mu_i^{2-\gamma}.
    \end{align*}
    In the case of $\gamma\leq \beta$, we bound $g_{\lambda}(\mu_i)\mu_i\leq E$ using Eq.~\eqref{eq:filter_fn_axiom1}. Then, by Eq.~\eqref{eqn: F_ast_Fourier},
    \begin{equation*}
        \|[F_{\lambda}]\|_{\gamma}^2 \leq E^2\sum_{i \in I, j \in J}\frac{a_{ij}^2}{\mu_i^{\gamma}} = E^2\|F_{\ast}\|_{\gamma}^2.
    \end{equation*}
    In the case of $\gamma>\beta$, we apply Eq.~\eqref{eq:filter_fn_axiom1} to $g_{\lambda}(\mu_i)\mu_i^{1-\frac{\gamma-\beta}{2}} \leq E\lambda^{-\frac{\gamma-\beta}{2}}$ to obtain, using Eq.~\eqref{eqn: F_ast_Fourier} again,
    \begin{align*}
        \|[F_{\lambda}]\|_{\gamma}^2 &= \sum_{i \in I, j \in J}g_{\lambda}(\mu_i)^2\mu_i^{2-(\gamma-\beta)}\mu_i^{-\beta}a_{ij}^2\\
        &\leq E^2\lambda^{-(\gamma-\beta)}\sum_{i \in I, j \in J}\mu_i^{-\beta}a_{ij}^2\\
        &=E^2\lambda^{-(\gamma-\beta)}\|F_{\ast}\|_{\beta}^2.
    \end{align*}
\end{proof}

\begin{lma}
\label{thm: upper_bound_adaptation_zhang_theorem_15_ppl_auxiliary_bound}
    Suppose Assumption~\eqref{asst:src} holds for $0 \leq \beta\leq 2\rho$, with $\rho$ the qualification. Then, the following bound is satisfied, for all $\lambda>0$, we have
    \begin{equation*}
        \left\|\left(C_{YX} - C_{\lambda}C_{X}\right)C_{X,\lambda}^{-\frac{1}{2}}\right\|_{S_2\left(\cH,\mathcal{Y}\right)}\leq \omega_{\rho}\|F_{\ast}\|_{\beta}\lambda^{\frac{\beta}{2}}.
    \end{equation*}
    For the definition of $\omega_\rho$, see Eq.~\eqref{eq:qualification_def}.
\end{lma}

\begin{proof}
Recall that in Lemma \ref{lma: Fourier_expansion_C_lambda} we used the decomposition
    \begin{equation*}%\label{eq:F_ast_genreg}
        F_{\ast} = \sum_{i \in I, j \in J}a_{ij}d_j [e_i],
    \end{equation*}
    where Assumption \eqref{asst:src} implies that $\left\|F_{\ast}\right\|_{\beta}^2 = \sum_{ij}\frac{a_{ij}^2}{\mu_i^{\beta}}< \infty$. 
    Using Eq.~\eqref{eq:Fourier_expansion_subresult_2} in Lemma \ref{lma: Fourier_expansion_C_lambda} and Parseval’s identity w.r.t. the ONS $\{d_j\otimes \mu_i^{1/2}e_i\}_{i \in I, j \in J}$ in $S_2(\cH,\mathcal{Y})$, we have
    \begin{align}
        \left\|\left(C_{YX} - C_{\lambda}C_{X}\right)C_{X,\lambda}^{-\frac{1}{2}}\right\|_{S_2\left(\cH,\mathcal{Y}\right)} &= \left(\sum_{i \in I, j \in J}a_{ij}^2r_{\lambda}^2(\mu_i)(\mu_i+\lambda)^{-1}\mu_i\right)^{\frac{1}{2}}\nonumber\\
        &\leq \left(\sum_{i \in I, j \in J}\frac{a_{ij}^2}{\mu_i^{\beta}}r_{\lambda}^2(\mu_i)\mu_i^{\beta}\right)^{\frac{1}{2}}\nonumber\\
        &\leq \|F_{\ast}\|_{\beta}\sup_{i \in I} r_{\lambda}(\mu_i)\mu_i^{\frac{\beta}{2}}\nonumber\\
        &\leq \|F_{\ast}\|_{\beta}\omega_{\rho}\lambda^{\frac{\beta}{2}}.\nonumber%\label{eq:bound_second_term}
    \end{align}
\end{proof}

\begin{lma}
\label{lma: bound_term_ii_upper_bound_vRKHS_variance_zhang_page_32_reduce_to_scalar_case}
Suppose Assumption~\eqref{asst:src} holds with $\beta \geq 0$, then for all $\lambda > 0$, we have  
    \[\left\|C_{\lambda}r_{\lambda}\left(\hat{C}_{X}\right) {\hat{C}_{X,\lambda}}^{\frac{1}{2}}\right\|_{S_2(\cH,\mathcal{Y})} \leq B 
    \left\|\hat{C}_{X,\lambda}^{\frac{1}{2}}r_{\lambda}(\hat{C}_{X})g_{\lambda}(C_{X})C_{X}^{\frac{\beta+1}{2}}\right\|_{\cH\to\cH},\]
    where $\|F_{\ast}\|_{\beta}=B<\infty$. 
\end{lma}

\begin{proof}
Recall that Lemma \ref{lma: Fourier_expansion_C_lambda} we used the decomposition
    \begin{equation*}
        F_{\ast} = \sum_{i\in I, j\in j}a_{ij}d_j [e_i],
    \end{equation*}
    where $\left\|F_{\ast}\right\|_{\beta}^2 = \sum_{ij}\frac{a_{ij}^2}{\mu_i^{\beta}} = B^2 < \infty$. Using Eq.~\eqref{eq:Fourier_expansion_subresult_3} in Lemma \ref{lma: Fourier_expansion_C_lambda} and $C_{\lambda} = C_{YX}g_{\lambda}(C_{X})$, we have
    \begin{align*}
        \left\|C_{\lambda}r_{\lambda}\left(\hat{C}_{X}\right) {\hat{C}_{X,\lambda}}^{\frac{1}{2}}\right\|_{S_2(\cH,\mathcal{Y})} &=  \left\| \left(\sum_{ij}a_{ij}\mu_i^{-\frac{\beta}{2}}d_j\otimes \sqrt{\mu_i}e_i\right)C_{X}^{\frac{\beta+1}{2}}g_{\lambda}(C_{X})r_{\lambda}(\hat{C}_{X})\hat{C}_{X,\lambda}^{\frac{1}{2}}\right\|_{S_2(\cH,\mathcal{Y})}\\ 
        &\leq B\left\|C_{X}^{\frac{\beta+1}{2}}g_{\lambda}(C_{X})r_{\lambda}(\hat{C}_{X})\hat{C}_{X,\lambda}^{\frac{1}{2}}\right\|_{\cH\to\cH},
    \end{align*}
    where we notice that the ${S_2(\cH,\mathcal{Y})}$ norm of the first term is exactly the $\beta$ norm of $F_{\ast}$, which is given by $B$.  Recalling that $C_X,\hat{C}_X$ are self adjoint, we prove the final result by taking the adjoint and using that an operator has the same operator norm as its adjoint. 
\end{proof}

\subsection{Approximation Error}
\label{seq:finite_sample_upper_bound_approximation_error}
\begin{lma}
    \label{lma:bias_genreg_maintext}
    Let $F_{\lambda}$
    be given by Eq.~\eqref{eq:pop_estim}
    based on a general spectral filter satisfying Definition~\ref{def:genreg_reg_fn} with qualification $\rho \geq 0$. Suppose Assumption~\eqref{asst:src} holds with parameter $\beta \geq 0$ and define $\beta_{\rho} = \min\{\beta, 2\rho\}$, then the following bound is satisfied, for all $\lambda > 0$ and $0 \leq \gamma \leq \beta_{\rho}$,
\begin{equation*}%\label{eq:bias_bound}
    \left\|[F_{\lambda}] - F_{*}\right\|_{\gamma}^{2} \leq \omega_\rho^2\left\|F_{*}\right\|_{\beta_{\rho}}^{2} \lambda^{\beta_{\rho}-\gamma}. 
\end{equation*}   
\end{lma}
\begin{proof}
    
    In Eq.~\eqref{eq:pop_estim}, we defined $F_{\lambda}(\cdot) = C_{\lambda}\phi(\cdot)$. On the other hand, in Lemma \ref{lma: Fourier_expansion_C_lambda} we obtained the Fourier expansion of $C_{\lambda}$ leading to Eq.~\eqref{eq:C_lambda_decomposition}. Thus we have for $\pi-$almost all $x \in \cX$,
    \begin{equation*}
        F_{\lambda}(x) = \sum_{i \in I, j\in J}a_{ij}\mu_ig_{\lambda}(\mu_i)d_je_i(x).
    \end{equation*}
 Therefore,
$$
[F_{\lambda}] - F_{\ast} = \sum_{i \in I, j\in J}a_{ij}(1-\mu_ig_{\lambda}(\mu_i))d_j[e_i] = \sum_{i \in I, j\in J}a_{ij}r_{\lambda}(\mu_i)d_j[e_i].
$$
Suppose $\beta \leq 2\rho$, using Parseval’s identity w.r.t. the ONB $\{d_j\mu_i^{\gamma/2}[e_i]\}_{i \in I, j \in J}$ of $[\cG]^{\gamma}$, we have
\begin{align*}
    \|[F_{\lambda}] - F_{\ast}\|_{\gamma}^2 
    &= \left\|\sum_{i \in I, j\in J}\frac{a_{ij}}{\mu_i^{\gamma/2}}r_{\lambda}(\mu_i)d_j \mu_i^{\gamma/2}e_i\right\|_{\gamma}^2\\
    &= \sum_{i \in I, j\in J}\frac{a_{ij}^2}{\mu_i^{\gamma}}r_{\lambda}^2(\mu_i)\\
    &= \sum_{i \in I, j\in J}\frac{a_{ij}^2}{\mu_i^{\beta}}r_{\lambda}^2(\mu_i)\mu_i^{\beta-\gamma}\\
    &\leq \omega_{\rho}^2\lambda^{\beta-\gamma}\sum_{i \in I, j\in J}\frac{a_{ij}^2}{\mu_i^{\beta}}\\
    &= \|F_{\ast}\|_{\beta}^2\omega_{\rho}^2\lambda^{\beta-\gamma}
\end{align*}
where we used Eq.~\eqref{eq:qualification_def} in the definition of a filter function, together with  $0\leq \beta\leq 2\rho$ and $0\leq \gamma\leq \beta$, which taken together implies that $0\leq \frac{\beta-\gamma}{2}\leq \rho$. Finally, if $\beta \geq 2\rho$, then since $[\cG]^{2\rho} \subseteq [\cG]^{2\beta}$, we can perform the last derivations again with $\beta=2\rho$ to obtain the final result. 
\end{proof}

\subsection{Estimation error}
\label{seq:finite_sample_upper_bound_estimation_error}

Before proving the main results we recall two \emph{embedding properties} for the vector-valued interpolation space $[\mathcal{G}]^{\beta}$ (Definition~\ref{def:inter_ope_norm}). The first embedding property lifts the property \eqref{asst:emb} defined for the scalar-valued RKHS $[\cH]^{\alpha}$ to the vector-valued RKHS $[\cG]^{\alpha}$.
\begin{lma}[$L_{\infty}$-embedding property - Lemma 4 \cite{lietal2023optimal}]
\label{lma:vRKHS_embedding_const}
    Under \eqref{asst:emb} the inclusion operator $\mathcal{I}_{\pi}^{\alpha,\beta}:[\mathcal{G}]^{\alpha}\hookrightarrow L_{\infty}(\pi;\mathcal{Y})$ is bounded with operator norm $A$,
\end{lma}
\begin{theo}[$L_q$-embedding property - Theorem 3 \cite{lietal2023optimal}]
\label{theo:lietal2023optimal_theorem_3_lq_integrability}
    Let Assumption \eqref{asst:emb} be satisfied with parameter $\alpha\in (0,1]$. For any $\beta\in [0,\alpha)$, the inclusion map
    \begin{equation*}
        \mathcal{I}_{\pi}^{q_{\alpha,\beta}}:[\mathcal{G}]^{\beta}\hookrightarrow L_{q_{\alpha,\beta}}(\pi;\mathcal{Y})
    \end{equation*}
    is bounded, where $q_{\alpha,\beta} := \frac{2\alpha}{\alpha-\beta}$.
\end{theo}
The $L_q$-embedding property was first introduced in the scalar-valued setting in \cite{zhang2023optimality} and then lifted to the vector-valued setting by \cite{lietal2023optimal}. Its role is to replace a boundedness condition on the ground truth function $F_{\ast}$. We now explain how the $L_q$-embedding property can be combined with Assumption~\eqref{asst:emb} and a truncation technique. 

\begin{lma}
\label{lem:app_L_q_embedding_property_high_probability_F_bounded}
    Recall that $\pi$ is the marginal measure of $X$ on $\cX$. For $t \geq 0$, define the measurable set $\Omega_t$ as follows
    \[\Omega_t:=\left\{x\in \cX:\|F_{\ast}(x)\|_{\mathcal{Y}}\leq t\right\}\]
    Let $q>0$. Assume that $F_{\ast}\in L_q(\pi;\mathcal{Y})$. In other words, there exists some constant $c_q>0$ such that 
    \[\|F_{\ast}\|_{L_q(\pi;\mathcal{Y})} = \left(\int_{\cX}\|F_{\ast}(x)\|_{\mathcal{Y}}^q\mathrm{d}\pi(x)\right)^{\frac{1}{q}}= c_q < +\infty,\]
    Then we have the following conclusions
    \begin{enumerate}
        \item The $\pi$-measure of the complement of $\Omega$ can be bounded by 
        \begin{equation*}
            \pi(\{x\notin\Omega_t\})\leq \frac{c_q^q}{t^q}.
        \end{equation*}
        \item Recall that $\{x_i\}_{i=1}^n$ are i.i.d. samples distributed according to $\pi$. If $t = n^{\frac{1}{\Tilde{q}}}$ for $\Tilde{q}<q$, then we can conclude as follows. For a fixed parameter $\tau>0$, for all sufficiently large $n$, where the hidden index bound depends on $q\Tilde{q}^{-1}$ and $\tau$, we have 
        \begin{equation*}%\label{eqn:whp_allx_n_in_Omega}
        \pi^{\otimes n}\left(\cap_{i=1}^{n}\{x_i\in\Omega_t\}\right) \geq 1 - e^{-\tau}.
    \end{equation*}
    \end{enumerate}
\end{lma}
\begin{proof}
    The first claim is a straightforward application of Markov's inequality, as follows
    \begin{equation*}
        \pi(\{x\notin \Omega_t\}) = \pi\left(\|F_{\ast}(x)\|_{\mathcal{Y}}> t\right) \leq \frac{\mathbb{E}_{\pi}\left[\left\|F_{\ast}(X)\right\|_{\mathcal{Y}}^q\right]}{t^q} = \frac{c_q^q}{t^q}.
    \end{equation*}
    To show the second claim, we first evaluate the probability that there exists some $x_i$'s that lies outside $\Omega_t$,
    \begin{align*}
        \pi^{\otimes n}\left(\cup_{i=1}^{n}\{x_i\notin \Omega_t\}\right) &= 1 - \pi^{\otimes n}\left(\cap_{i=1}^{n}\{x_i\in \Omega_t\}\right)\\
        &= 1 - \pi(\{x_i\in \Omega_t\})^n\\
        &\leq 1 - \left(1 - \frac{c_q^q}{t^q}\right)^n\\
        &\leq \frac{c_q^qn}{t^q},
    \end{align*}
    where in the last inequality we used Bernoulli's inequality, which states that for $r\geq 1$ and $0\leq x\leq 1$,
    \[(1-x)^r\geq 1-rx.\]
    By assumption $t = n^{\frac{1}{q_t}}$ for some fixed $q>q_t>0$. We thus have
    \begin{align*}
        \pi^{\otimes n}\left(\cup_{i=1}^{n}\{x_i\notin \Omega_t\}\right) &\leq c_q^qn^{1-\frac{q}{q_t}}  \leq e^{-\tau},
    \end{align*}
    for sufficiently large $n$, where the hidden index bound depends on $\frac{q}{q_t}$ and $\tau$. 
\end{proof}

We adapt \citet[Lemma 5]{lietal2023optimal} to the spectral algorithms setting.
\begin{lma}
\label{lma:lietal_Lemma_5}
    Suppose Assumptions~\eqref{asst:src} and \eqref{asst:emb} hold for some $0\leq \beta\leq 2\rho$, with $\rho$ the qualification, then the following bounds are satisfied, for all $0<\lambda\leq 1,$
    \begin{align}
        \|[F_{\lambda}]-F_{\ast}\|_{L_{\infty}}^2&\leq \left(\|F_{\ast}\|_{L_{\infty}} + A\max\{E,\omega_{\rho}\}\|F_{\ast}\|_{\beta}\right)^2\lambda^{\beta-\alpha},\label{eq:lietal_Lemma_5_1}\\
        \|[F_{\lambda}]\|_{L_{\infty}}^2&\leq A^2E^2\|F_{\ast}\|_{\min\{\alpha,\beta\}}^2\lambda^{-(\alpha-\beta)_{+}},\label{eq:lietal_Lemma_5_2}
    \end{align}
\end{lma}

\begin{proof}
We use Lemma \ref{lma:vRKHS_embedding_const} and Lemma \ref{lma:F_lambda_gamma_norm} to write:
\begin{equation*}
    \|[F_{\lambda}]\|_{\infty}^2\leq A^2\|[F_{\lambda}]\|_{\alpha}^2 \leq A^2E^2\|F_{\ast}\|_{\min\{\alpha,\beta\}}^2\lambda^{-(\alpha-\beta)_{+}}.
\end{equation*}
This proves Eq. \eqref{eq:lietal_Lemma_5_2}. To show Eq. \eqref{eq:lietal_Lemma_5_1}, in the case $\beta\leq \alpha$ we use the triangle inequality, Eq. \eqref{eq:lietal_Lemma_5_2} and $\lambda\leq 1$ to obtain
\begin{align*}
    \|[F_{\lambda}] - F_{\ast}\|_{\infty} & \leq \|F_{\ast}\|_{\infty} + \|[F_{\lambda}]\|_{\infty}\\
&\leq (\|F_{\ast}\|_{\infty} + AE\|F_{\ast}\|_{\beta})\lambda^{-\frac{\alpha-\beta}{2}}.
\end{align*}
In the case $\beta>\alpha$, Eq. \eqref{eq:lietal_Lemma_5_1} is a consequence of Lemma \ref{lma:vRKHS_embedding_const} and Lemma \ref{lma:bias_genreg_maintext} with $\gamma = \alpha$ (here we use the assumption $0\leq \beta\leq 2\rho$),
\begin{equation*}
    \|[F_{\lambda}] - F_{\ast}\|_{\infty}^2 \leq A^2\|[F_{\lambda}] - F_{\ast}\|_{\alpha}^2 \leq A^2\omega_\rho^2\|F_{\ast}\|_{\beta}^2\lambda^{\beta-\alpha} \leq (\|F_{\ast}\|_{\infty} + A\omega_{\rho}\|F_{\ast}\|_{\beta})^2\lambda^{\beta-\alpha}.
    \end{equation*}
\end{proof}

We adapt \citet[Theorem 13]{zhang2023spectraloptimality} to the vector-valued setting. 
\begin{theo}
\label{theo:adaption_zhang2023_theorem_13}
    Suppose that Assumptions \eqref{asst:emb}, \eqref{asst:evd}, \eqref{asst:mom} and \eqref{asst:src} hold for $0\leq \beta \leq 2\rho$, where $\rho$ is the qualification, and $p\leq \alpha \leq 1$. Denote, for $i=1,\ldots,n$,
    \[\xi_i = \xi(x_i,y_i) = \left(\left(y_i -C_{\lambda}\phi(x_i) \right) \otimes \phi(x_i)\right) C_{X,\lambda}^{-\frac{1}{2}},\]
    and for $t\geq 0$,
    \[\Omega_t = \left\{x\in \mathcal{X}: \|F_{\ast}(x)\|_{\mathcal{Y}}\leq t\right\}\]
    Then for all $\tau \geq 1$, with probability at least $1-2e^{-\tau}$, we have
    \begin{align*}
        &\left\|\frac{1}{n}\sum_{i=1}^{n}\xi_i \mathbbm{1}\{x_i \in \Omega_t\} - \mathbb{E}[\xi(X, Y)\mathbbm{1}\{X \in \Omega_t\}]\right\|_{S_2(\cH,\mathcal{Y})}\\ &\leq \tau\left(c_1\lambda^{\frac{\beta}{2}-\alpha}n^{-1} + c_2\lambda^{-\frac{\alpha}{2}}n^{-1}(t+R+A) + \frac{c_3 \sqrt{\mathcal{N}_1(\lambda)}}{\sqrt{n}} + \frac{c_4}{\sqrt{n} \lambda^{\frac{\alpha-\beta}{2}}}\right)
    \end{align*}
    where $R$ is the constant from Assumption~\eqref{asst:mom}, and \begin{align*}
c_1 &= 8\sqrt{2}A^2\max\{E,\omega_{\rho}\}\|F_{\ast}\|_{\beta}\\
c_2&= 8\sqrt{2}A\\
c_3&= 8\sqrt{2}\sigma\\
c_4&= 8\sqrt{2}A\|F_{\ast}\|_{\beta}\omega_{\rho}
\end{align*}
where $A$ is the constant from Assumption~\eqref{asst:emb}, and $E,\omega_\rho$ are defined in Eq. \eqref{eq:filter_fn_axiom1} and \eqref{eq:qualification_def} respectively. 
\end{theo}

\begin{proof}
    We wish to apply vector-valued Bernstein's inequality, namely Theorem \ref{theo:ope_con_steinwart}. We thus compute,
    \begin{align}
        \mathbb{E}\left[\|\xi(X,Y)\mathbbm{1}\{X\in\Omega_t\}\|_{S_2(\cH,\mathcal{Y})}^m\right] &= \mathbb{E}\left[\mathbbm{1}\{X\in\Omega_t\}\left\|(Y-C_{\lambda}\phi(X))\otimes \left(C_{X,\lambda}^{-\frac{1}{2}}\phi(X) \right)\right\|_{S_2(\cH,\mathcal{Y})}^m\right]\nonumber\\
        &= \mathbb{E}\left[\mathbbm{1}\{X\in\Omega_t\}\left\|(Y-C_{\lambda}\phi(X))\right\|_{\mathcal{Y}}^m\left\|C_{X,\lambda}^{-\frac{1}{2}}\phi(X)\right\|_{\cH}^m\right]\nonumber\\
        &= \int_{\Omega_t}\left\|C_{X,\lambda}^{-\frac{1}{2}}\phi(x)\right\|_{\cH}^m\int_{\mathcal{Y}}\left\|y-C_{\lambda}\phi(x)\right\|_{\mathcal{Y}}^m\mathrm{d}p(x,\mathrm{d}y)\mathrm{d}\pi(x).\label{eq:inner_outer_integral}
    \end{align}
    First we consider the inner integral, by Assumption~\eqref{asst:mom},
    \begin{align*}
        \int_{\mathcal{Y}}\left\|(y-C_{\lambda}\phi(x))\right\|_{\mathcal{Y}}^m\mathrm{d}p(x,\mathrm{d}y) &\leq 2^{m-1}\left(\int_{\mathcal{Y}}\left\|y-F_{\ast}(x)\right\|_{\mathcal{Y}}^m + \left\|F_{\lambda}(x)-F_{\ast}(x)\right\|_{\mathcal{Y}}^m\right)\mathrm{d}p(x,\mathrm{d}y)\\
        &= m!\sigma^2(2R)^{m-2} + 2^{m-1}\left\|F_{\lambda}(x)-F_{\ast}(x)\right\|_{\mathcal{Y}}^m.
    \end{align*}
    Plugging the above inequality into Eq. \eqref{eq:inner_outer_integral}, as well as introducing the shorthand,
    \[h_x:=C_{X,\lambda}^{-\frac{1}{2}}\phi(x),\]
    we have
    \begin{align}
        \mathbb{E}\left[\|\xi(X,Y)\mathbbm{1}\{X\in\Omega_t\}\|_{S_2(\cH,\mathcal{Y})}^m\right] &\leq m!\sigma^2(2R)^{m-2}\int_{\Omega_t}\|h_x\|_{\cH}^m \mathrm{d}\pi(x)\label{eq:m_th_moment_term_one}\\ &+ 2^{m-1}\int_{\Omega_t}\|h_x\|_{\cH}^m\left\|F_{\lambda}(x)-F_{\ast}(x)\right\|_{\mathcal{Y}}^m\mathrm{d}\pi(x).\nonumber%\label{eq:m_th_moment_term_two}
    \end{align}
    We bound term \eqref{eq:m_th_moment_term_one} using Lemma \ref{theo:h_bound} and Lemma \ref{lma:l_eff_dim_charac} with $l=1$. We have,
    \begin{equation*}
        \int_{\Omega_t}\|h_x\|_{\cH}^m\mathrm{d}\pi(x)  \leq (A\lambda^{-\frac{\alpha}{2}})^{m-2}\mathcal{N}_1(\lambda).
    \end{equation*}
    Therefore we bound term \eqref{eq:m_th_moment_term_one} as follows,
    \begin{equation*}
        m!\sigma^2(2R)^{m-2}\int_{\Omega_t}\|h_x\|_{\cH}^m\mathrm{d}\pi(x) \leq m!\sigma^2\left(\frac{2AR}{\lambda^{\frac{\alpha}{2}}}\right)^{m-2}\mathcal{N}_1(\lambda).
    \end{equation*}
    If $\beta\geq \alpha$, by Assumption \eqref{asst:emb}, 
    \begin{equation*}
        \|F_{\ast}\|_{\infty}\leq A\|F_{\ast}\|_{\alpha} \leq A\|F_{\ast}\|_{\beta}.
    \end{equation*}
    Hence by Lemma \ref{lma:lietal_Lemma_5}, 
    \begin{equation*}
        \|[F_{\lambda}] - F_{\ast}\|_{\infty} \leq (\|F_{\ast}\|_{\infty} + A\max\{E,\omega_\rho\}\|F_{\ast}\|_{\beta})\lambda^{\frac{\beta-\alpha}{2}} \leq A(1+\max\{E,\omega_{\rho}\})\|F_{\ast}\|_{\beta}\lambda^{\frac{\beta-\alpha}{2}}.
    \end{equation*}

    If $\beta < \alpha$, by Lemma \ref{lma:lietal_Lemma_5}, we have for $\pi$-almost all $x\in \Omega_t$,
    \begin{equation*}
        \|F_{\ast}(x) - F_{\lambda}(x)\|_{\mathcal{Y}}\leq t + \|[F_{\lambda}]\|_{L_{\infty}(\pi;\mathcal{Y})}\leq t + AE\|F_{\ast}\|_{\beta}\lambda^{\frac{\beta-\alpha}{2}}.
    \end{equation*}
    Therefore, for all $\beta\in [0,2\rho]$,
    \begin{equation*}
        \left\|(F_{\ast} - [F_{\lambda}])\mathbbm{1}_{X\in \Omega_t}\right\|_{L_{\infty}(\pi;\mathcal{Y})} \leq t + A(1+ \max\{E,\omega_\rho\}\|F_{\ast}\|_{\beta}\lambda^{\frac{\beta-\alpha}{2}}) =: \chi(t,\lambda).
    \end{equation*}
   Using Lemma \ref{lma:l_eff_dim_charac} with $l=1$, we have,
    \begin{align*}
        &2^{m-1}\int_{\Omega_t}\|h_x\|_{\cH}^m\|F_{\ast}(x)-F_{\lambda}(x)\|_{\mathcal{Y}}^m\mathrm{d}\pi(x) \\
        \leq & 2^{m-1}\chi(t,\lambda)^{m-2}(A\lambda^{-\frac{\alpha}{2}})^{m}\|F_{\ast} - [F_{\lambda}]\|_{L_2(\pi;\mathcal{Y})}^2\\
        = & \left(\frac{2\chi(t,\lambda)A}{\lambda^{\frac{\alpha}{2}}}\right)^{m-2}\|F_{\ast}-[F_{\lambda}]\|_{L_2(\pi;\mathcal{Y})}^2\frac{2A^2}{\lambda^{\alpha}}\\
        \leq & m!\left(\frac{2\chi(t,\lambda)A}{\lambda^{\frac{\alpha}{2}}}\right)^{m-2}\|F_{\ast}-[F_{\lambda}]\|_{L_2(\pi;\mathcal{Y})}^2\frac{2A^2}{\lambda^{\alpha}}.
    \end{align*}
    Putting everything together,
    \begin{equation*}
        \mathbb{E}\left[\|\xi(X,Y)\mathbbm{1}\{X\in\Omega_t\}\|_{S_2(\cH,\mathcal{Y})}^m\right]\leq  m!\left(\frac{2(R+\chi(t,\lambda))A}{\lambda^{\frac{\alpha}{2}}}\right)^{m-2}\left(\sigma^2\mathcal{N}_1(\lambda) + \|F_{\ast}-[F_{\lambda}]\|_{L_2(\pi;\mathcal{Y})}^2\frac{2A^2}{\lambda^{\alpha}}\right).
    \end{equation*}
    We now apply Theorem \ref{theo:ope_con_steinwart} with
    \begin{align*}
        L&\leftarrow \frac{2(R+\chi(t,\lambda))A}{\lambda^{\frac{\alpha}{2}}}\\
        \sigma &\leftarrow 2\sigma\sqrt{\mathcal{N}_1(\lambda)} + \|F_{\ast}-[F_{\lambda}]\|_{L_2(\pi;\mathcal{Y})}\frac{2A}{\lambda^{\frac{\alpha}{2}}}
    \end{align*}
     We bound $\|F_{\ast}-[F_{\lambda}]\|_{L_2(\pi;\mathcal{Y})}$ using Lemma \ref{lma:bias_genreg_maintext} with $\gamma = 0$,
    \begin{equation*}
        \|F_{\ast}-[F_{\lambda}]\|_{L_2(\pi;\mathcal{Y})} \leq \omega_{\rho}\|F_{\ast}\|_{\beta}\lambda^{\frac{\beta}{2}}.
    \end{equation*}
    The conclusion is, for all $\tau \geq 1$, with probability at least $1-2e^{-\tau}$, we have
    \begin{align*}
        &\left\|\frac{1}{n}\sum_{i=1}^{n}\xi_i \mathbbm{1}\{x_i \in \Omega_t\} - \mathbb{E}[\xi(X, Y)\mathbbm{1}\{X \in \Omega_t\}]\right\|_{S_2(\cH,\mathcal{Y})}\\ &\leq 4\sqrt{2}\tau\left(\frac{2\sigma\sqrt{\mathcal{N}_1(\lambda)} + \|F_{\ast}-[F_{\lambda}]\|_{L_2(\pi;\mathcal{Y})}\frac{2A}{\lambda^{\frac{\alpha}{2}}}}{\sqrt{n}} + \frac{2(R+\chi(t,\lambda))A}{n\lambda^{\frac{\alpha}{2}}}\right) \\
        & \leq 4\sqrt{2}\tau\left(\frac{2\sigma}{\sqrt{n}}\sqrt{\mathcal{N}_1(\lambda)} + 
        \frac{2A\|F_{\ast}\|_{\beta}\omega_{\rho}}{\sqrt{n}\lambda^{\frac{\alpha-\beta}{2}}} + 
        \frac{2(R+t+A)A}{n\lambda^{\frac{\alpha}{2}}} + 
        \frac{2A^2\max\{E,\omega_{\rho}\}\|F_{\ast}\|_{\beta}}{n\lambda^{\alpha-\frac{\beta}{2}}}\right).
    \end{align*}    
\end{proof}

\begin{lma}
\label{lma:zhang2023_theorem_13_theorem_15}
    Suppose that the same assumptions and notations listed in Theorem \ref{theo:adaption_zhang2023_theorem_13} hold. \begin{enumerate}
        \item Suppose $\beta + p>\alpha$, and $\lambda \asymp n^{-\frac{1}{\beta+p}}$. For any fixed $\tau \geq 1$, with probability at least $1-2e^{-\tau}$, suppose that the truncation level $t$ satisfies
           \begin{equation*}
               t\leq n^{\frac{1}{2}\left(1+ \frac{p-\alpha}{p+\beta}\right)},
           \end{equation*}
           then there exists a constant $c>0$ such that 
           \begin{equation*}
               \left\|\frac{1}{n}\sum_{i=1}^{n}\xi_i \mathbbm{1}\{x_i \in \Omega_t\} - \mathbb{E}[\xi(X, Y)\mathbbm{1}\{X \in \Omega_t\}]\right\|_{S_2(\cH,\mathcal{Y})} \leq c \tau n^{-\frac{1}{2}\frac{\beta}{\beta+p}}.
           \end{equation*}
        \item Suppose $\beta + p\leq \alpha$, and $\lambda \asymp \left(\frac{n}{\log^{\theta}(n)}\right)^{\frac{1}{\alpha}}$ for some $\theta>1$. For any fixed $\tau \geq 1$, with probability at least $1-2e^{-\tau}$, suppose that the truncation level $t$ satisfies
           \begin{equation*}
               t\leq n^{\frac{1}{2}\left(1-\frac{\beta}{\alpha}\right)},
           \end{equation*}
           then there exists a constant $c>0$ such that 
           \begin{equation*}
               \left\|\frac{1}{n}\sum_{i=1}^{n}\xi_i \mathbbm{1}\{x_i \in \Omega\} - \mathbb{E}[\xi(X, Y)\mathbbm{1}\{X \in \Omega\}]\right\|_{\mathcal{G}} \leq c \tau \left(\frac{n}{\log^{\theta}(n)}\right)^{-\frac{\beta}{2\alpha}}
           \end{equation*}
    \end{enumerate}
\end{lma}
\begin{proof}
    Note that Theorem \ref{theo:adaption_zhang2023_theorem_13} yields the same conclusion as in the scalar-valued case proved in \citet[Theorem 13]{zhang2023spectraloptimality}. The Lemma then follows from the analysis for the scalar-valued case in the proof of \citet[Theorem 15]{zhang2023spectraloptimality}. 
\end{proof}

We adapt \citet[Theorem 15]{zhang2023optimality} to the vector-valued setting. 
\begin{theo}
\label{thm: upper_bound_adaptation_zhang_theorem_15}
    Suppose that Assumptions \eqref{asst:emb}, \eqref{asst:evd}, \eqref{asst:mom} and \eqref{asst:src} hold for $0 \leq \beta \leq 2\rho$, where $\rho$ is the qualification, and $p\leq \alpha \leq 1$. 
    
    \begin{enumerate}
        \item In the case of $\beta + p > \alpha$, choosing $\lambda \asymp n^{-\frac{1}{\beta+p}}$, for any fixed $\tau \geq \log(4)$, when $n$ is sufficiently large, with probability at least $1-4e^{-\tau}$, we have 
        \begin{equation}
        \label{eq:thm: upper_bound_adaptation_zhang_theorem_15_first_regime}
            \left\|\left(\left(\hat{C}_{YX} - C_{\lambda}\hat{C}_{X}\right) - \left(C_{YX}-C_{\lambda}C_{X}\right)\right)C_{X,\lambda}^{-\frac{1}{2}}\right\|_{S_2\left(\cH,\mathcal{Y}\right)} \leq c \tau n^{-\frac{1}{2}\frac{\beta}{\beta+p}}
        \end{equation}
        where $c$ is a constant independent of $n,\tau, \lambda$.
        \item In the case of $\beta+p\leq \alpha$, choosing $\lambda\asymp \left(\frac{n}{\log^{\theta}(n)}\right)^{-\frac{1}{\alpha}}$ for some $\theta>1$ . We make the additional assumption that there exists some $\alpha'<\alpha$ such that Assumption \eqref{asst:mom} is satisfied for $\alpha'<\alpha$. Then, for any fixed $\tau \geq \log(4)$, when $n$ is sufficiently large, where the hidden index bound depends on $\alpha-\alpha'$, with probability at least $1-4e^{-\tau}$, we have
        \begin{equation}
        \label{eq:thm: upper_bound_adaptation_zhang_theorem_15_second_regime}
            \left\|\left(\left(\hat{C}_{YX} - C_{\lambda}\hat{C}_{X}\right) - \left(C_{YX}-C_{\lambda}C_{X}\right)\right)C_{X,\lambda}^{-\frac{1}{2}}\right\|_{S_2\left(\cH,\mathcal{Y}\right)} \leq c\tau \left(\frac{n}{\log^{\theta}(n)}\right)^{-\frac{\beta}{2\alpha}}
        \end{equation}
        where $c$ is a constant independent of $n,\tau, \lambda$.
    \end{enumerate}
\end{theo}

\begin{proof}
    By assumption \eqref{asst:emb} and Theorem \ref{theo:lietal2023optimal_theorem_3_lq_integrability}, if $\beta < \alpha$, then the inclusion map
    \begin{equation*}
        \mathcal{I}_{\pi}^{q_{\alpha,\beta}}:[\mathcal{G}]^{\beta}\hookrightarrow L_{q_{\alpha,\beta}}(\pi;\mathcal{Y})
    \end{equation*}
    is bounded, where $q_{\alpha,\beta} := \frac{2\alpha}{\alpha-\beta}$. If $\beta \geq \alpha$, then by Lemma~\ref{lma:vRKHS_embedding_const} the inclusion map
    \begin{equation*}
        \mathcal{I}_{\pi}^{\infty}: [\mathcal{G}]^{\beta}\hookrightarrow L_{\infty}(\pi;\mathcal{Y})
    \end{equation*}
    is bounded and therefore $[\mathcal{G}]^{\beta}$ is continuously embedded into $L_{q}(\pi;\mathcal{Y})$ for any $q \geq 1$. In the rest of the proof, we will use $q$ to denote $q_{\alpha,\beta}$, unless otherwise specified. Furthermore, we will use $c_q = \|F_{\ast}\|_{L_q(\pi;\mathcal{Y})}$.

    We first consider the case $\beta+p>\alpha$.  We can easily verify using $\beta+p>\alpha$ that the following inequality holds
    \[\frac{1}{2}\left(1 + \frac{p-\alpha}{p+\beta}\right) > \frac{1}{2}\left(\frac{p}{p+\beta}\right) > \frac{\alpha-\beta}{2\alpha} = \frac{1}{q_{\alpha,\beta}}.\]
    Choose $t = n^{\Tilde{q}^{-1}}$, where
    \[\frac{1}{\Tilde{q}} = \frac{1}{2}\left(\frac{1}{2}\left(1 + \frac{p-\alpha}{p+\beta}\right) + \frac{1}{q}\right).\]
    We thus have
    \begin{equation*}
   n^{\frac{1}{2}\left(1 + \frac{p-\alpha}{p+\beta}\right)} > t = n^{\Tilde{q}^{-1}} > n^{\frac{1}{q_{\alpha,\beta}}}.
\end{equation*}
Thus the assumptions for both Lemma \ref{lem:app_L_q_embedding_property_high_probability_F_bounded} and Lemma \ref{lma:zhang2023_theorem_13_theorem_15} are satisfied. 

We then consider the case $\beta+p\leq \alpha$. We now apply Assumption \eqref{asst:emb} and Theorem \ref{theo:lietal2023optimal_theorem_3_lq_integrability} to $\alpha'$ instead of $\alpha$. We obtain that the inclusion map $I_{\pi}^{q_{\alpha',\beta}}$ is bounded, where we recall that $q_{\alpha',\beta}$ is defined to be $\frac{2\alpha'}{\alpha'-\beta}$. Since $x\mapsto \frac{2x}{x-\beta}$ is monotonically decreasing for $x> \beta$, we obtain the inequality
\[\frac{2\alpha'}{\alpha'-\beta} > \frac{2\alpha}{\alpha-\beta}.\]
We choose $t = n^{\frac{1}{q_{\alpha,\beta}}}$. By construction, $t$ satisfies the assumptions in Lemma \ref{lma:zhang2023_theorem_13_theorem_15}. Furthermore, the assumptions of Lemma \ref{lem:app_L_q_embedding_property_high_probability_F_bounded} are satisfied, with $F_{\ast}\in L_{q'}(\pi,\mathcal{Y})$, and $t=n^{\frac{1}{q_{\alpha,\beta}}}$. 

Having established the applicability of Lemma \ref{lma:zhang2023_theorem_13_theorem_15} and Lemma \ref{lem:app_L_q_embedding_property_high_probability_F_bounded}, let us turn our attention to proving the results of the Theorem. Denote
\[\xi(x,y) = (y-C_{\lambda}\phi(x))\otimes \left(C_{X,\lambda}^{-\frac{1}{2}}\phi(x)\right)\]
We compute
\[\mathbb{E}[\xi(x,y)] = (C_{YX} - C_{\lambda}C_X)C_{X,\lambda}^{-\frac{1}{2}} \]

1. The $\beta+p>\alpha$ case. Have
\begin{align*}
    \left\|\frac{1}{n}\sum_{i=1}^{n}\xi_i - \mathbb{E}[\xi(x,y)]\right\|_{S_2(\cH,\mathcal{Y})} &\leq \left\|\frac{1}{n}\sum_{i=1}^{n}\xi_i\mathbbm{1}\{x_i\in \Omega_t\} - \mathbb{E}[\xi(x,y)\mathbbm{1}\{x\in\Omega_t\}]\right\|_{S_2(\cH,\mathcal{Y})}\\
    +& \left\|\frac{1}{n}\sum_{i=1}^{n}\xi_i\mathbbm{1}\{x_i\in\Omega_t^c\}\right\|_{S_2(\cH,\mathcal{Y})}\\
    +& \|\mathbb{E}[\xi(x,y) \mathbbm{1}\left\{x\in\Omega^{c}_t\right\}\|_{S_2(\cH,\mathcal{Y})}
\end{align*}
We can bound the first term with probability at least $1-2e^{-\tau}$ by $c\tau \left(\frac{n}{\log^{\theta}(n)}\right)^{-\frac{\beta}{2\alpha}}$, according to Lemma \ref{lma:zhang2023_theorem_13_theorem_15}. By Lemma \ref{lem:app_L_q_embedding_property_high_probability_F_bounded}, with probability at least $1-e^{-\tau}$, for sufficiently large $n$, $x_i\in \Omega_t$ for all $i\in [n]$, whereby the second term is zero. It remains to bound the third term, where our bound will be deterministic. Using Jensen's inequality, we have,
\begin{align*}
    \|\mathbb{E}[\xi(x,y) \mathbbm{1}\left\{x\in\Omega_t^{c}\right\}]\|_{S_2(\cH,\mathcal{Y})} &\leq \mathbb{E}\left[\left\|\xi(x,y)\mathbbm{1}\{x\in\Omega_t^{c}\}\right\|_{S_2(\cH,\mathcal{Y})}\right]\\
    &= \mathbb{E}\left[\|(y-C_{\lambda}\phi(x))\mathbbm{1}\{x\notin\Omega_t\}\|_{\mathcal{Y}}\cdot \left\|C_{X,\lambda}^{-\frac{1}{2}}\phi(x)\right\|_{\cH}\right]\\
    &\leq A\lambda^{-\frac{\alpha}{2}}\mathbb{E}\left[\|(y-C_{\lambda}\phi(x))\mathbbm{1}\{x\notin\Omega_t\}\|_{\mathcal{Y}}\right]
\end{align*}
where in the third line we used Lemma \ref{lma:l_eff_dim_charac}. We first split the second term into an approximation error and a noise term using triangle inequality. 
\begin{equation*}
    \mathbb{E}\left[\|(y-C_{\lambda}\phi(x))\mathbbm{1}\{x\notin\Omega_t\}\|_{\mathcal{Y}}\right] \leq \mathbb{E}\left[\|(y-F_{\ast}(x))\mathbbm{1}\{x\notin\Omega_t\}\|_{\mathcal{Y}}\right] + \mathbb{E}\left[\|(F_{\ast}(x)-F_{\lambda}(x))\mathbbm{1}\{x\notin\Omega_t\}\|_{\mathcal{Y}}\right]
\end{equation*}
We bound the first term using the tower property of conditional expectation,
\begin{align*}
     \mathbb{E}\left[\|(y-C_{\lambda}\phi(x))\mathbbm{1}\{x\notin\Omega_t\}\|_{\mathcal{Y}}\right]&\leq  \mathbb{E}_{\pi}\left[\mathbb{E}[\|y-C_{\lambda}\phi(x)\|_{\mathcal{Y}}\mid x]\mathbbm{1}\{x\notin\Omega_t\}\right]\\
     &\leq \mathbb{E}_{\pi}\left[\mathbb{E}[\|y-C_{\lambda}\phi(x)\|_{\mathcal{Y}}^2\mid x]^{\frac{1}{2}}\mathbbm{1}\{x\notin\Omega_t\}\right]\\
     &\leq \sigma \pi(x\notin\Omega_t)\\
     &\leq \frac{\sigma c_q^q}{t^q}.
\end{align*}
where in the third inequality we used Assumption \eqref{asst:mom} with $q=2$, and in the fourth inequality we used Lemma \ref{lem:app_L_q_embedding_property_high_probability_F_bounded}. We bound the second term using Cauchy-Schwarz inequality and Lemma \ref{lma:bias_genreg_maintext} with $\gamma = 0$,
\begin{equation*}
    \mathbb{E}\left[\|(F_{\ast}(x)-F_{\lambda}(x))\mathbbm{1}\{x\notin\Omega_t\}\|_{\mathcal{Y}}\right] \leq \mathbb{P}(x\notin\Omega_t)^{\frac{1}{2}}\|F_{\ast}\|_{\beta}\omega_{\rho}\lambda^{\frac{\beta}{2}}
\end{equation*}
Therefore, using Lemma \ref{lem:app_L_q_embedding_property_high_probability_F_bounded}, we have,
\begin{equation}
\label{eqn:mai_22_third_part_bound}
    \|\mathbb{E}[\xi(x,y) \mathbbm{1}\left\{x\in\Omega_t^{c}\right\}\|_{S_2(\cH,\mathcal{Y})} \leq A\lambda^{-\frac{\alpha}{2}}\left(\frac{\sigma c_q^q}{t^q} + \frac{c_q^{\frac{q}{2}}}{t^{\frac{q}{2}}}\|F_{\ast}\|_{\beta}\omega_{\rho}\lambda^{\frac{\beta}{2}}\right).
\end{equation}
We now plug in $\lambda\asymp n^{-\frac{1}{\beta+p}}$. Recall that by construction $t > n^{\frac{1}{q}}$. Thus, 
\begin{align*}
    \lambda^{-\frac{\alpha}{2}}t^{-q} &\lesssim n^{-1}n^{\frac{\alpha}{2(\beta+p)}} < n^{-1}n^{\frac{\beta+p}{2(\beta+p)}} = n^{-\frac{1}{2}} \leq n^{-\frac{1}{2}\frac{\beta}{\beta+p}}\\
    \lambda^{\frac{\beta-\alpha}{2}}t^{-\frac{q}{2}} &\lesssim n^{-\frac{1}{2}}n^{\frac{-(\beta-\alpha)}{2(\beta+p)}} < n^{-\frac{1}{2}}n^{\frac{p}{2(\beta+p)}} = n^{-\frac{1}{2}\frac{\beta}{\beta+p}}
\end{align*}
We've therefore proved inequality \eqref{eq:thm: upper_bound_adaptation_zhang_theorem_15_first_regime}.

2. The $\beta+p\leq \alpha$ case. We proceed similarly to the $\beta+p>\alpha$ case. We have,
\begin{align*}
    \left\|\frac{1}{n}\sum_{i=1}^{n}\xi_i - \mathbb{E}[\xi(x,y)]\right\|_{S_2(\cH,\mathcal{Y})} &\leq \left\|\frac{1}{n}\sum_{i=1}^{n}\xi_i\mathbbm{1}\{x_i\in \Omega_t\} - \mathbb{E}[\xi(x,y)\mathbbm{1}\{x\in\Omega_t\}]\right\|_{S_2(\cH,\mathcal{Y})}\\
    &+ \left\|\frac{1}{n}\sum_{i=1}^{n}\xi_i\mathbbm{1}\{x_i\in\Omega_t^c\}\right\|_{S_2(\cH,\mathcal{Y})}\\
    &+ \|\mathbb{E}[\xi(x,y) \mathbbm{1}\left\{x\in\Omega_t^{c}\right\}]\|_{S_2(\cH,\mathcal{Y})}
\end{align*}
We can bound the first term with probability at least $1-2e^{-\tau}$ by $c\tau\left(\frac{n}{\log^{\theta}(n)}\right)^{-\frac{\beta}{2\alpha}}$, according to Lemma \ref{lma:zhang2023_theorem_13_theorem_15}. By Lemma \ref{lem:app_L_q_embedding_property_high_probability_F_bounded}, with probability at least $1-e^{-\tau}$, for sufficiently large $n$, $x_i\in \Omega_t$ for all $i\in [n]$, whereby the second term is zero. We bound the third term by Eq. \eqref{eqn:mai_22_third_part_bound}. We now plug in $\lambda\asymp \left(\frac{n}{\log^{\theta}(n)}\right)^{-\frac{1}{\alpha}}$. Recall that by construction $t > n^{\frac{1}{q}}$. 
Thus, 
\begin{align*}
    \lambda^{-\frac{\alpha}{2}}t^{-q} &\lesssim n^{-1}\left(\frac{n}{\log^{\theta}(n)}\right)^{\frac{1}{2}} < \left(\frac{n}{\log^{\theta}(n)}\right)^{-\frac{1}{2}} \leq  \left(\frac{n}{\log^{\theta}(n)}\right)^{-\frac{\beta}{2\alpha}}\\
    \lambda^{\frac{\beta-\alpha}{2}}t^{-\frac{q}{2}} &\lesssim n^{-\frac{1}{2}}\left(\frac{n}{\log^{\theta}(n)}\right)^{\frac{\alpha-\beta}{2\alpha}} < \left(\frac{n}{\log^{\theta}(n)}\right)^{\frac{-\beta}{2\alpha}}
\end{align*}
We have therefore proved inequality \eqref{eq:thm: upper_bound_adaptation_zhang_theorem_15_second_regime}. 
\end{proof}

We adapt \citet[Theorem 16]{zhang2023spectraloptimality} to the vector-valued setting.

\begin{theo}[Bound of estimation error]
\label{thm:bound_of_est_err}
    Suppose that assumptions \eqref{asst:emb}, \eqref{asst:evd}, \eqref{asst:mom} and \eqref{asst:src} hold for $0 \leq \beta \leq 2\rho$, where $\rho$ is the qualification, and $p\leq \alpha < 1$. For $0\leq \gamma\leq 1$, with $\gamma\leq \beta$, 
    \begin{enumerate}
        \item In the case of $\beta + p > \alpha$, by choosing $\lambda \asymp n^{-\frac{1}{\beta+p}}$, for any fixed $\tau \geq \log(4)$, when $n$ is sufficiently large, with probability at least $1-4e^{-\tau}$, we have 
        \begin{equation*}
            \|[\hat{C}_{\lambda} - C_{\lambda}]\|_{S_2([\mathcal{H}]^{\gamma},\mathcal{Y})}^2 \leq c\tau^2n^{-\frac{\beta-\gamma}{\beta+p}},
        \end{equation*}
        where $c$ is a constant independent of $n,\tau$.
        \item In the case of $\beta + p \leq \alpha$, by choosing $\lambda \asymp \left(\frac{n}{\log^{\theta}(n)}\right)^{-\frac{1}{\alpha}}$, for any fixed $\tau \geq \log(4)$, when $n$ is sufficiently large, with probability at least $1-4e^{-\tau}$, we have
        \begin{equation*}
            \|[\hat{C}_{\lambda} - C_{\lambda}]\|_{S_2([\mathcal{H}]^{\gamma},\mathcal{Y})}^2 \leq c\tau^2\left(\frac{n}{\log^{\theta}(n)}\right)^{-\frac{\beta-\gamma}{\alpha}}
        \end{equation*}
        where $c$ is a constant independent of $n,\tau$.
    \end{enumerate}
\end{theo}

\begin{proof}

Firstly, we establish the applicability of Lemma \ref{lma:zhang_lemma_12}. 

1. The $\beta+p>\alpha$ case. Have $\lambda\asymp n^{-\frac{1}{\beta+p}}$, hence 
\begin{equation*}
    n\lambda^{\alpha} \gtrsim n^{\frac{\beta+p-\alpha}{\beta+p}} 
\end{equation*}
whereas using $\lambda\leq \|C_{X}\|_{\mathcal{H}\to\mathcal{H}}$ for sufficiently large $n$, as well as Lemma \ref{lma: p_effective_dim}, 
\begin{equation*}
    8 A^{2} \tau  \log\left(2e\mathcal{N}(\lambda)\frac{\|C_{X}\|_{\cH \to \cH}+\lambda}{\|C_{X}\|_{\cH \to \cH}}\right) \leq 8A^2\tau\log(4ec_{2,1}\lambda^{-p})\lesssim 8A^2\tau\left(\log(4ec_{2,1}) + \frac{p}{\beta+p}\log(n)\right)
\end{equation*}
Therefore, for a fixed $\tau>0$,  for all sufficiently large $n$, Eq. \eqref{eq:zhang_lemma_12_key_condition} in Lemma \ref{lma:zhang_lemma_12} is satisfied. 

2. The $\beta+p\leq\alpha$ case. Have $\lambda\asymp \left(\frac{n}{\log^{\theta}(n)}\right)^{-\frac{1}{\alpha}}$ for some $\theta>1$, hence
\begin{equation*}
    n\lambda^{\alpha}\geq \log^{\theta}(n)
\end{equation*}
whereas similar to the $\beta+p>\alpha$ case, we ahve
\begin{equation*}
    8 A^{2} \tau  \log\left(2e\mathcal{N}(\lambda)\frac{\|C_{X}\|_{\cH \to \cH}+\lambda}{\|C_{X}\|_{\cH \to \cH}}\right) \lesssim 8A^2\tau\left(\log(4ec_{2,1}) + \frac{p}{\alpha}\log\left(\frac{n}{\log^{\theta}(n)}\right)\right)
\end{equation*}
Therefore, for a fixed $\tau>0$, for all sufficiently large $n$, Eq. \eqref{eq:zhang_lemma_12_key_condition} in Lemma \ref{lma:zhang_lemma_12} is satisfied. 

We thus conclude for all $\alpha \in (0,1]$, with probability $\geq 1-2e^{-\tau}$, Eq. \eqref{eq:conc_ineq_1} and \eqref{eq:conc_ineq_2} are satisfied simultaneously.

We exploit the following decomposition
    \begin{align*}        \left\|\left[\hat{C}_{\lambda} - C_{\lambda}\right]\right\|_{S_2\left([\mathcal{H}]^{\gamma},\mathcal{Y}\right)} &\leq \left\|\left(\hat{C}_{\lambda}-C_{\lambda}\right)C_{X}^{\frac{1-\gamma}{2}}\right\|_{S_2\left(\cH,\mathcal{Y}\right)}\\
        &\leq \left\|\left(\hat{C}_{\lambda}-C_{\lambda}\right){\hat{C}_{X,\lambda}}^{\frac{1}{2}}\right\|_{S_2\left(\cH,\mathcal{Y}\right)}\cdot \left\|{\hat{C}_{X,\lambda}}^{-\frac{1}{2}}C_{X,\lambda}^{\frac{1}{2}}\right\|_{\cH \to \cH}\cdot \left\|C_{X,\lambda}^{-\frac{1}{2}}C_{X}^{\frac{1-\gamma}{2}}\right\|_{\cH \to \cH}\\
        &\leq \left\|\left(\hat{C}_{\lambda}-C_{\lambda}\right){\hat{C}_{X,\lambda}}^{\frac{1}{2}}\right\|_{S_2\left(\cH,\mathcal{Y}\right)}\cdot 3\cdot \sup_{i\in\mathbb{N}}\frac{\mu_i^{\frac{1-\gamma}{2}}}{\sqrt{\mu_i+\lambda}}\\
        &\leq \left\|\left(\hat{C}_{\lambda}-C_{\lambda}\right){\hat{C}_{X,\lambda}}^{\frac{1}{2}}\right\|_{S_2\left(\cH,\mathcal{Y}\right)}\cdot 3\lambda^{-\frac{\gamma}{2}},
    \end{align*}
    where in the first inequality we used Lemma~\ref{lma: lietal2022optimal_Lemma_2}, in the third inequality we used Eq.~\eqref{eq:conc_ineq_2} and in the last inequality we used Lemma~\ref{lma:li2023_sat_effect_Prop_D_2}. We consider the following decomposition
    \begin{align*}
        \hat{C}_{\lambda} - C_{\lambda} &= \hat{C}_{\lambda} - C_{\lambda}\left(\hat{C}_{X}g_{\lambda}\left(\hat{C}_{X}\right) + r_{\lambda}\left(\hat{C}_{X}\right)\right)\\
        &= \left(\hat{C}_{YX} - C_{\lambda}\hat{C}_{X}\right)g_{\lambda}\left(\hat{C}_{X}\right) - C_{\lambda}r_{\lambda}\left(\hat{C}_{X}\right).
    \end{align*}
    Hence
    \begin{equation*}%\label{eq:var_decomp_i_ii}
        \left\|\left[\hat{C}_{\lambda} - C_{\lambda}\right]\right\|^2_{S_2\left([\mathcal{H}]^{\gamma},\mathcal{Y}\right)} \leq 18\lambda^{-\gamma}\left((I)^2+(II)^2\right),
    \end{equation*}
    where
    \begin{align*}
        (I) &=  \left\|\left(\hat{C}_{YX} - C_{\lambda}\hat{C}_{X}\right){\hat{C}_{X,\lambda}}^{\frac{1}{2}}g_{\lambda}\left(\hat{C}_{X}\right) \right\|_{S_2(\cH,\mathcal{Y})}\\
        (II) &= \left\|C_{\lambda}r_{\lambda}\left(\hat{C}_{X}\right) {\hat{C}_{X,\lambda}}^{\frac{1}{2}}\right\|_{S_2(\cH,\mathcal{Y})}.
    \end{align*}

    {\bf Term (I)}. The high level idea is to bound (I) by exploiting the first axiom of the filter function \eqref{eq:filter_fn_axiom1}, where $g_{\lambda}(\hat{C}_{X})$ is intuitively a regularized inverse of $\hat{C}_{X}$, by grouping it with $\hat{C}_{X,\lambda}$. 
    \begin{align*}
        (I) \leq &\left\|\left(\hat{C}_{YX}-C_{\lambda}\hat{C}_{X}\right)C_{X,\lambda}^{-\frac{1}{2}}\right\|_{S_2\left(\cH,\mathcal{Y}\right)}\cdot \left\|C_{X\lambda}^{\frac{1}{2}}{\hat{C}_{X,\lambda}}^{-\frac{1}{2}}\right\|_{\cH \to \cH} \cdot \left\|{\hat{C}_{X,\lambda}}g_{\lambda}\left(\hat{C}_{X}\right)\right\|_{\cH \to \cH}\\
        \leq& \left\|\left(\hat{C}_{YX}-C_{\lambda}\hat{C}_{X}\right)C_{X,\lambda}^{-\frac{1}{2}}\right\|_{S_2\left(\cH,\mathcal{Y}\right)}\cdot \sqrt{3}\sup_{t\in \left[0,\kappa^2\right]}(t+\lambda)g_{\lambda}(t)\\
        \leq & \left\|\left(\hat{C}_{YX}-C_{\lambda}\hat{C}_{X}\right)C_{X,\lambda}^{-\frac{1}{2}}\right\|_{S_2\left(\cH,\mathcal{Y}\right)}\cdot 2\sqrt{3}E.
    \end{align*}
    where the second inequality follows from Eq. \eqref{eq:conc_ineq_2}. We consider the following decomposition
    \begin{align*}
        \left\|\left(\hat{C}_{YX} - C_{\lambda}\hat{C}_{X}\right)C_{X,\lambda}^{-\frac{1}{2}}\right\|^2_{S_2\left(\cH,\mathcal{Y}\right)}
    &\leq2\left\|\left(\left(\hat{C}_{YX} - C_{\lambda}\hat{C}_{X}\right) - \left(C_{YX}-C_{\lambda}C_{X}\right)\right)C_{X,\lambda}^{-\frac{1}{2}}\right\|^2_{S_2\left(\cH,\mathcal{Y}\right)}
    \\&+ 2\left\|\left(C_{YX} - C_{\lambda}C_{X}\right)C_{X,\lambda}^{-\frac{1}{2}}\right\|^2_{S_2\left(\cH,\mathcal{Y}\right)}
    \end{align*}
    We bound the first term by Theorem \ref{thm: upper_bound_adaptation_zhang_theorem_15} and the second term by Lemma \ref{thm: upper_bound_adaptation_zhang_theorem_15_ppl_auxiliary_bound}. This yields, for $\tau \geq \log(4)$, with probability at least $1 - 4e^{-\tau}$, for some constant $c>0$ which does not depend on $n,\tau,\lambda$, 
    \begin{align*}
        \left\|\left(\hat{C}_{YX} - C_{\lambda}\hat{C}_{X}\right)C_{X,\lambda}^{-\frac{1}{2}}\right\|^2_{S_2\left(\cH,\mathcal{Y}\right)}& \leq  2\omega_{\rho}^2\|F_{\ast}\|_{\beta}^2\lambda^{\beta} + \begin{cases}
           c \tau^2n^{-\frac{\beta}{\beta+p}} & \beta+p\geq \alpha\\
           c \tau^2\left(\frac{n}{\log^{\theta}(n)}\right)^{-\frac{\beta}{\alpha}} &\beta+p<\alpha\\ 
        \end{cases}\\
        & \leq  \begin{cases}
           \tau^2(c+2\|F_{\ast}\|_{\beta}^2\omega_{\rho}^2\lambda^{\beta})n^{-\frac{\beta}{\beta+p}} & \beta+p\geq \alpha\\
           \tau^2(c+2\|F_{\ast}\|_{\beta}^2\omega_{\rho}^2\lambda^{\beta})\left(\frac{n}{\log^{\theta}(n)}\right)^{-\frac{\beta}{\alpha}} &\beta+p<\alpha\\ 
        \end{cases}\\
    \end{align*}
    where we used that $\tau>1$. So collecting all the relevant constants together, we can write the upper bound of term (I) as follows: with probability at least $1 - 4e^{-\tau}$, for some constant $c'>0$ (different from the $c$ before) which does not depend on $n,\tau,\lambda$, we have
    \begin{equation*}
        (I)\leq c'\tau\cdot \begin{cases}
            n^{-\frac{1}{2}\frac{\beta}{\beta+p}} & \beta+p\geq \alpha\\
            \left(\frac{n}{\log^{\theta}(n)}\right)^{-\frac{\beta}{2\alpha}} &\beta+p<\alpha.
        \end{cases}
    \end{equation*}

{\bf Term (II)}. Using Lemma \ref{lma: bound_term_ii_upper_bound_vRKHS_variance_zhang_page_32_reduce_to_scalar_case}, we have
\begin{equation*}
    (II)\leq B\left\|\hat{C}_{X,\lambda}^{\frac{1}{2}}r_{\lambda}(\hat{C}_{X})g_{\lambda}(C_{X})C_{X}^{\frac{\beta+1}{2}}\right\|_{\cH\to\cH}
\end{equation*}
The second term is the same as the scalar-valued case, which is bounded in Step 3 of the proof of \citet[Theorem 16]{zhang2023spectraloptimality}. We define
\begin{equation*}
    \Delta_1 := 32\max\left\{\frac{\beta-1}{2},1\right\}E\omega_{\rho}\kappa^{\beta-1}\lambda^{\frac{1}{2}}n^{-\frac{\min(\beta,3)-1}{4}}
\end{equation*}
By the proof of \citet[Theorem 16]{zhang2023spectraloptimality}, we have, with probability at least $1 - 6e^{-\tau}$
\begin{equation*}
    (II)\leq 6B\omega_{\rho}E\lambda^{\frac{\beta}{2}} + \Delta_1B\tau \mathbbm{1}\{\beta>2\}.
\end{equation*}

1. Case $\beta+p>\alpha$. In this case $\lambda\asymp n^{-\frac{1}{\beta+p}}$. We note that for $\beta>2$, $\Delta_1$ as a function of $n$ can be written as 
\begin{equation*}
    \Delta_1\asymp n^{-\frac{1}{2(\beta+p)}-\frac{\min(\beta,3)-1}{4}}
\end{equation*}
Note that
\begin{equation*}
    \frac{1}{2(\beta+p)}+\frac{\min(\beta,3)-1}{4}-\frac{\beta}{2(\beta+p)} =\frac{1}{2}\left(\frac{p}{\beta+p} + \frac{\min(\beta,3)-1}{2}\right) > 0
\end{equation*}
Hence
\begin{equation*}
    \Delta_1\lesssim n^{-\frac{\beta}{2(\beta+p)}}
\end{equation*}
Therefore we have shown that there exists some constant $c''>0$, independent of $n,\lambda,\tau$, such that with probability at least $1 - 6e^{-\tau}$, for sufficiently large $n$, 
\begin{align*}
    \left\|\left[\hat{C}_{\lambda} - C_{\lambda}\right]\right\|_{S_2\left([\mathcal{H}]^{\gamma},\mathcal{Y}\right)} \leq c'' \tau n^{-\frac{1}{2}\frac{\beta-\gamma}{\beta+p}}.
\end{align*}

2. Case $\beta+p\leq\alpha$. In this case $\beta\leq\alpha\leq 1$, and $\lambda\asymp \left(\frac{n}{\log^{\theta}(n)}\right)^{-\frac{1}{\alpha}}$. We have also shown that there exists some constant $c''>0$, independent of $n,\lambda,\tau$, such that with probability at least $1 - 6e^{-\tau}$, for sufficiently large $n$,
\begin{align*}
    \left\|\left[\hat{C}_{\lambda} - C_{\lambda}\right]\right\|_{S_2\left([\mathcal{H}]^{\gamma},\mathcal{Y}\right)} \leq c''\tau\left(\frac{n}{\log^{\theta}(n)}\right)^{-\frac{\beta-\gamma}{2\alpha}}
\end{align*}
\end{proof}

Putting together Lemma \ref{lma:bias_genreg_maintext} and Theorem \ref{thm:bound_of_est_err}, we have proved Theorem \ref{theo:upper_rate}.

\section{Auxiliary Results}\label{sec:aux}

\subsection{Spectral Calculus, Proof of Proposition~\ref{prop:dual_formulation} and Empirical Solution}\label{sec:aux_1}

\begin{defn}[Spectral Calculus; see \citealp{EnglHankeNeubauer2000}, Chapter 2.3] \label{def:spectral_calculus} Let $H$ be a Hilbert space. Consider $g: \R \to \R$ and a self-adjoint compact operator $A:H \to H$ admitting a spectral decomposition written as
$$
A = \sum_{i \in I} \lambda_i h_i \otimes h_i.
$$
We then define $g(A): H \to H$ as
$$
g(A) := \sum_{i \in I} g(\lambda_i) h_i \otimes h_i
$$
whenever this series converges in operator norm.
\end{defn}

\begin{proof}[Proof of Proposition~\ref{prop:dual_formulation}]
We define the sampling operator $S:\mathbb{R}^n\to\mathcal{H}$ and it dual $S^*:\mathcal{H} \to \mathbb{R}^n$,
\begin{align*}
     S: \mathbb{R}^n &\to \mathcal{H}, &S^{\ast}: \mathcal{H}&\to \mathbb{R}^n,\\
     \alpha &\mapsto \sum_{i=1}^{n}\alpha_i\phi(x_i)&f &\mapsto (f(x_i))^{n}_{i=1}
\end{align*}
We can verify that $\hat{C}_{X} = n^{-1}SS^{\ast}$ and $\mathbf{K} = S^{\ast}S$. Let $\mathbf{Y} = (y_i)_{i=1}^{n}\in \mathbb{R}^n$.  We have, for all $x \in \cX$,
\begin{align}
    \hat{F}_{\lambda}(x) &= \hat{C}_{\lambda}\phi(X)\nonumber\\
    &= \left(\frac{1}{n}\sum_{i=1}^{n}y_i\otimes \phi(x_i)\right)g_{\lambda}(\hat{C}_{X})\phi(X)\nonumber\\
    &= \sum_{i=1}^{n}y_i\left\langle \phi(x_i), \frac{1}{n}g_{\lambda}(n^{-1}SS^{\ast})\phi(X)\right\rangle_{\cH}\nonumber\\
    &= \mathbf{Y}^{T}S^{\ast}\left(\frac{1}{n}g_{\lambda}(n^{-1}SS^{\ast})\phi(X)\right)\label{eqn:dual_formulation_eq_1}\\
    &= \mathbf{Y}^{T}\frac{1}{n}g_{\lambda}(n^{-1}S^{\ast}S)S^{\ast}\phi(X)\label{eqn:dual_formulation_eq_2}\\
    &= \mathbf{Y}^{T}\frac{1}{n}g_{\lambda}(n^{-1}\mathbf{K})S^{\ast}\phi(X)\nonumber\\
    &= \mathbf{Y}^{T}\frac{1}{n}g_{\lambda}(n^{-1}K)\mathbf{k}_x.\nonumber
\end{align}

To go from \eqref{eqn:dual_formulation_eq_1} to \eqref{eqn:dual_formulation_eq_2}, we make the following observation. Consider the singular value decomposition of the compact operator $S$, there is $m \leq n$ such that
\[S = \sum_{i=1}^{m}\sqrt{\sigma_i}f_i\otimes e_i\]
where $(e_i)_i$, $(f_i)_i$ are orthonormal sequences in $\mathbb{R}^n$ and $\mathcal{H}$ respectively. We then have
\[SS^{\ast} = \sum_{i=1}^m\sigma_i f_i\otimes f_i,\quad S^{\ast}S = \sum_{i=1}^m\sigma_i e_i\otimes e_i.\]
Therefore, we deduce
\begin{align*}
    S^{\ast}g_{\lambda}\left(\frac{SS^{\ast}}{n}\right) &= \left(\sum_i \sqrt{\sigma_i}e_i\otimes f_i\right)\left(\sum_{j}g_{\lambda}\left(\frac{\sigma_j}{n}\right)f_j\otimes f_j\right)\\
    &= \sum_{i,j} g_{\lambda}\left(\frac{\sigma_j}{n}\right) \sqrt{\sigma_i}e_i\otimes f_j \langle f_i, f_j\rangle_{\cH}\\
    &= \sum_{i}g_{\lambda}\left(\frac{\sigma_i}{n}\right) \sqrt{\sigma_i}e_i\otimes f_i
\end{align*}
Similarly, 
\begin{align*}
    g_{\lambda}\left(\frac{S^{\ast}S}{n}\right)S^{\ast} &= \left(\sum_{j}g_{\lambda}\left(\frac{\sigma_j}{n}\right)e_j\otimes e_j\right)\left(\sum_i \sqrt{\sigma_i}e_i\otimes f_i\right)\\
    &= \sum_{i,j} g_{\lambda}\left(\frac{\sigma_j}{n}\right) \sqrt{\sigma_i}e_j\otimes f_i \langle e_i, e_j\rangle_{\mathbb{R}^n}\\
    &= \sum_{i}g_{\lambda}\left(\frac{\sigma_i}{n}\right) \sqrt{\sigma_i}e_i\otimes f_i
\end{align*}
Hence we have proved
\begin{equation*}
    S^{\ast}g_{\lambda}\left(\frac{SS^{\ast}}{n}\right) = g_{\lambda}\left(\frac{S^{\ast}S}{n}\right)S^{\ast}
\end{equation*}
as desired.
\end{proof}

\begin{prop}\label{prop:first_order}
    Any minimizer $F \in \cG$ of 
    $$
    \mathcal{E}_n(F):= \frac{1}{n}\sum_{i=1}^{n}\|y_i - F(x_i)\|_{\mathcal{Y}}^2
    $$ on $\cG$ must satisfy 
    $$\hat{C}_{YX} = \hat{C}\hat{C}_{X}, \qquad C \in S_2(\cH, \cY),$$
    where $F(\cdot) = C\phi(\cdot)$.
\end{prop}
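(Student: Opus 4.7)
The plan is to translate the optimization problem from $\cG$ to $S_2(\cH,\cY)$ via the isometric isomorphism $\bar\Psi$ of Theorem~\ref{theo:operep}, and then compute a Fr\'echet derivative on $S_2(\cH,\cY)$ and set it to zero. Since $\bar\Psi$ is an isometric isomorphism, $F\in\cG$ minimizes $\mathcal{E}_n$ if and only if the corresponding $C:=\bar\Psi^{-1}(F)\in S_2(\cH,\cY)$ minimizes the functional
\[
\bar{\mathcal{E}}_n(C) \;:=\; \frac{1}{n}\sum_{i=1}^{n}\|y_i - C\phi(x_i)\|_{\cY}^2,
\]
which is quadratic in $C$ on the Hilbert space $S_2(\cH,\cY)$.

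Next I would compute the Fr\'echet derivative of $\bar{\mathcal{E}}_n$ at $C$ in an arbitrary direction $D\in S_2(\cH,\cY)$. A direct expansion of $\|y_i-(C+tD)\phi(x_i)\|_{\cY}^2$ and differentiation at $t=0$ yields
\[
\left.\frac{d}{dt}\bar{\mathcal{E}}_n(C+tD)\right|_{t=0}
\;=\; -\frac{2}{n}\sum_{i=1}^{n}\bigl\langle y_i - C\phi(x_i),\, D\phi(x_i)\bigr\rangle_{\cY}.
\]
Using the defining identity $\langle u, Dv\rangle_{\cY} = \langle u\otimes v,\, D\rangle_{S_2(\cH,\cY)}$ for the rank-one tensor $u\otimes v$, each summand can be rewritten as an inner product on $S_2(\cH,\cY)$, giving
\[
\left.\frac{d}{dt}\bar{\mathcal{E}}_n(C+tD)\right|_{t=0}
\;=\; -2\left\langle \frac{1}{n}\sum_{i=1}^{n} y_i\otimes\phi(x_i) \;-\; C\cdot\frac{1}{n}\sum_{i=1}^{n}\phi(x_i)\otimes\phi(x_i),\; D\right\rangle_{S_2(\cH,\cY)},
\]
which by definition of $\hat C_{YX}$ and $\hat C_X$ equals $-2\langle \hat C_{YX} - C\hat C_X,\, D\rangle_{S_2(\cH,\cY)}$.

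The first-order optimality condition requires this derivative to vanish for every $D\in S_2(\cH,\cY)$, and by non-degeneracy of the inner product on $S_2(\cH,\cY)$ this forces $\hat C_{YX} = C\hat C_X$ in $S_2(\cH,\cY)$, which is the claim. The only mildly delicate step is verifying the bilinear identity $\langle u, Dv\rangle_{\cY} = \langle u\otimes v, D\rangle_{S_2(\cH,\cY)}$ for general Hilbert--Schmidt $D$; this follows by expanding $D$ in an orthonormal basis of rank-one tensors (or simply from the definition of $u\otimes v$ as a rank-one operator, together with the $S_2$ inner product), and then the rest of the argument is a routine gradient-equals-zero computation in the Hilbert space $S_2(\cH,\cY)$.
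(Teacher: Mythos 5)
Your proof is correct and follows essentially the same route as the paper's: translate the empirical objective to $S_2(\cH,\cY)$ via Theorem~\ref{theo:operep}, use the identity $\langle u, Dv\rangle_{\cY} = \langle u\otimes v, D\rangle_{S_2(\cH,\cY)}$, and set the Fr\'echet derivative to zero. The only cosmetic difference is that you compute the directional derivative directly and then rewrite it as an $S_2$ inner product, whereas the paper first rewrites the entire objective as an $S_2$ quadratic before differentiating; the content is identical.
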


\begin{proof}
By Corollary \ref{theo:operep}, it is equivalent to solve the following optimization problem on $S_2(\cH,\mathcal{Y})$,
\begin{equation*}
    \min_{C\in S_2(\cH,\mathcal{Y})}\frac{1}{n}\sum_{i=1}^{n}\| y_i-C\phi(x_i)\|_{\mathcal{Y}}^2.
\end{equation*}
Recall for a Hilbert-Schmidt operator $L\in S_2(\cH,\mathcal{Y})$, we have 
\begin{equation*}
    \langle L, a\otimes b\rangle_{S_2(\cH,\mathcal{Y})} = \langle a, Lb\rangle_{\cY}.
\end{equation*}
Using this, we re-write the objective as an inner product in $S_2(\cH,\mathcal{Y})$:
\begin{align*}
    \frac{1}{n}\sum_{i=1}^{n}\| y_i-C\phi(x_i)\|_{\mathcal{Y}}^2 &=  \frac{1}{n}\sum_{i=1}^{n}- 2\langle C,y_i\otimes \phi(x_i)\rangle_{S_2(\cH,\mathcal{Y})} + \langle C, (C\phi(x_i))\otimes \phi(x_i)\rangle_{S_2(\cH,\mathcal{Y})} + \text{ constant }\\
    &= - 2\langle C,\hat{C}_{YX}\rangle_{S_2(\cH,\mathcal{Y})} + \langle C, C\hat{C}_{X}\rangle_{S_2(\cH,\mathcal{Y})}
\end{align*}
Taking the Fréchet derivative with respect to $C$ and setting in to zero, we obtain the following first order condition 
\begin{equation*}
    \hat{C}_{YX} = C\hat{C}_{X}
\end{equation*}
\end{proof}

\subsection{Properties Related to Assumptions~\texorpdfstring{\eqref{asst:emb}}{EMB} and \texorpdfstring{\eqref{asst:evd}}{EVD}}

\begin{lma}[Lemma 13 \cite{fischer2020sobolev}]\label{theo:h_bound}
Under \eqref{asst:emb}, the following inequality is satisfied, for $\lambda>0$ and $\pi$-almost all $x \in \cX$,
$$
\left\|\left(C_{X X}+\lambda Id_{\cH} \right)^{-\frac{1}{2}} k(x, \cdot)\right\|_{\cH} \leq A\lambda^{-\frac{\alpha}{2}}.
$$
\end{lma}

\begin{defn}[$l$-effective dimension]
\label{defn: l_eff_dim}
    For $l\geq 1$, the $l$-effective dimension $\mathcal{N}_{l}:(0,\infty)\to [0,\infty)$ is defined by
    \begin{equation*}
        \mathcal{N}_{l}(\lambda):=\mathrm{Tr}\left[C_{X}^lC_{X\lambda}^{-l}\right] = \sum_{i\geq 1}\left(\frac{\mu_i}{\mu_i+\lambda}\right)^{l}
    \end{equation*}
\end{defn}
The $1$-effective dimension is widely considered in the statistical analysis of kernel ridge regression (see \cite{Caponnetto2007}, \cite{blanchard2018optimal}, \cite{lin2018optimal},  \cite{Lin_2020},  \cite{fischer2020sobolev}). The following lemma provides upper and lower bounds for the $l-$effective dimension. 

\begin{lma}
\label{lma: p_effective_dim}
Suppose Assumption~\eqref{asst:evd} holds with parameter $p \in (0,1]$, for any $\lambda \in (0,1]$, there exists a constant $c_{2,l}>0$ independent of $\lambda$ such that
$$
\mathcal{N}_{l}(\lambda) \leq c_{2,l}\lambda^{-p}.
$$
If furthermore, Assumption~\eqref{asst:evd_plus} holds with parameter $p \in (0,1)$, for any $\lambda \in (0,1]$, there exists a constant  $c_{1,l}>0$ independent of $\lambda$ such that
$$
c_{1,l}\lambda^{-p} \leq \mathcal{N}_{l}(\lambda) \leq c_{2,l}\lambda^{-p}.
$$
\end{lma}
The proof can be found in \cite{li2023on} (Proposition D.1), but as the proof is incomplete we provide a full proof for completeness. This allows us to detect that the value $p=1$ in Assumption~\eqref{asst:evd_plus} is not compatible with the assumption of a bounded kernel (see Remark~\ref{rem:p_1_bounded_issue} below). 
\begin{proof}
 \begin{align*}
        \mathcal{N}_{l}(\lambda)&\leq \sum_{i\geq 1}\left(\frac{D_2}{D_2+\lambda i^{\frac{1}{p}}}\right)^{l} \qquad (x\mapsto \frac{x}{x+\lambda}  \text{ is monotonically increasing})\\ &\leq  \int_{0}^{+\infty} \left(\frac{D_2}{D_2+\lambda x^{\frac{1}{p}}}\right)^{l}\mathrm{d}x  \qquad (i \mapsto \left(\frac{D_2}{D_2+\lambda i^{1/p}}\right)^l  \text{ is positive and decreasing})\\  &\leq\left(\frac{D_2}{D_2+\lambda}\right)^{l} + \int_{1}^{+\infty}\left(\frac{D_2}{D_2+\lambda x^{\frac{1}{p}}}\right)^{l}\mathrm{d}x  \\
        &= \left(\frac{D_2}{D_2+\lambda}\right)^{l} + \int_{1}^{+\infty} \left(\frac{D_2}{D_2+y^{\frac{1}{p}}}\right)^{l}\frac{\mathrm{d}y}{\lambda^{p}} \qquad (y^{1/p} = \lambda x^{1/p})\\
        &\leq 1 + \lambda^{-p}\int_{1}^{+\infty} \left(\frac{D_2}{D_2+y^{\frac{1}{p}}}\right)^{l}\mathrm{d}y
    \end{align*}
    Let us now consider the integral. Let us first consider $p \leq 1 < l$, 
    \begin{align*}
        \int_{1}^{\infty} \left(\frac{D_2}{D_2+y^{\frac{1}{p}}}\right)^{l}\mathrm{d}y &\leq D_2^l\int_{1}^{\infty} y^{-\frac{l}{p}} \mathrm{d}y\\
        %&\leq D_2^l\int_{1}^{\infty} \mathrm{d}y\;y^{-\frac{l}{p}}\\
        &= D_2^{l}\frac{p}{l-p}
    \end{align*}
    Therefore, using $\lambda \leq 1$, we can take $c_{2,l} = 1 + D_2^{l}\frac{p}{l-p}$. The remaining edge case $p=1=l$, is covered by \citet[Lemma 11]{fischer2020sobolev} with $c_{2,1}= \left\| C_{X}\right\|_{S_{1}(\cH)}$. For the lower bound, we proceed similarly. For $p \in (0,1)$ (and therefore $p < l$),
    \begin{align*}
        \mathcal{N}_{l}(\lambda)&\geq \sum_{i\geq 1}\left(\frac{D_1}{D_1+\lambda i^{\frac{1}{p}}}\right)^{l} \qquad (x\mapsto \frac{x}{x+\lambda}  \text{ is monotonically increasing})\\ &\geq \int_{1}^{\infty} \left(\frac{D_1}{D_1+\lambda x^{\frac{1}{p}}}\right)^{l}\mathrm{d}x  \qquad (i \mapsto \left(\frac{D_1}{D_1+\lambda i^{1/p}}\right)^l  \text{ is positive and decreasing})\\
        &= \int_{1}^{\infty} \left(\frac{D_1}{D_1+y^{\frac{1}{p}}}\right)^{l}\frac{\mathrm{d}y}{\lambda^{p}} \qquad (y^{1/p} = \lambda x^{1/p}) \\
        &\geq \lambda^{-p}\int_{1}^{\infty} \left(\frac{D_1}{D_1+1}\right)^ly^{-\frac{l}{p}}\mathrm{d}y\\
        &= \lambda^{-p}\left(\frac{D_1}{D_1+1}\right)^l\frac{p}{l-p}.
    \end{align*}
    Therefore, we can take $c_{1,l} = \left(\frac{D_1}{D_1+1}\right)^l\frac{p}{l-p}$.
\end{proof}

\begin{rem}\label{rem:p_1_bounded_issue}
We note that Assumption~\eqref{asst:evd_plus} with $p=1$ is not compatible with the assumption that $k$ is bounded (Assumption~\ref{assump:bounded}). Indeed, suppose that $\mu_i\geq D_1i^{-1}$, for all $i \geq 1$. Recall that $\{[e_i]\}_{i \geq 1}$ forms an orthonormal set in $L_2(\pi)$. By Mercer's theorem,
\begin{equation*}
    \kappa^2 \geq \int_{\cX}k(x,x)\pi(\mathrm{d}x) = \sum_{i\geq 1}\mu_j \int_{\cX}e_i(x)^2\pi(\mathrm{d}x) =\sum_{i\geq 1}\mu_i  \geq D_1 \sum_{i\geq 1} i^{-1} = +\infty,
\end{equation*}
which leads to a contradiction.
\end{rem}

\begin{lma}\label{lma:l_eff_dim_charac}
    For any $l \in [1,2]$, the following equality holds,
    $$
    \int_{\cX} \left\|[C_{X,\lambda}^{-\frac{l}{2}} k(x, \cdot)]\right\|_{2-l}^2d\pi(x) = \mathcal{N}_{l}(\lambda).
    $$
    In particular for $l=1$,
    $$
    \int_{\cX} \left\|C_{X,\lambda}^{-\frac{1}{2}} k(x, \cdot)\right\|_{\cH}^2d\pi(x) = \mathcal{N}_{1}(\lambda),
    $$
    and for $l=0$,
    $$
    \int_{\cX} \left\|[C_{X,\lambda}^{-1} k(x, \cdot)]\right\|_{L_2(\pi)}^2d\pi(x) = \mathcal{N}_{2}(\lambda).
    $$
\end{lma}

\begin{proof}
Fix $x \in \cX$. Since $k(x,\cdot) \in \cH$, and  $\left\{\mu_i^{1/2}e_i\right\}_{i\in I}$ is an ONB of $\left(\operatorname{ker} I_{\pi}\right)^{\perp}$, we have that $\pi-$almost everywhere
$$
k(x,\cdot) = \sum_{i \in I} \langle k(x,\cdot), \mu_i^{1/2}e_i \rangle_{\cH} \mu_i^{1/2}e_i = \sum_{i \in I} \mu_i e_i(x) e_i.
$$
This is Mercer's Theorem \citep{steinwart2012mercer}. On the other hand, $\pi-$almost everywhere,
$$
C_{X,\lambda}^{-l/2} = \sum_{i \in I} (\mu_i + \lambda)^{-l/2}  (\sqrt{\mu_i}e_i) \otimes (\sqrt{\mu_i}e_i).
$$
Therefore,
\begin{equation*}
   [C_{X,\lambda}^{-l/2}k(x,\cdot)] = \sum_{i \in I}\frac{\mu_i}{(\mu_i+\lambda)^{l/2}}e_i(x)e_i,
\end{equation*}
and by Parseval's identity, using that $\{\mu_i^{(2-l)/2}[e_i]\}_{i \in I}$ is an ONB of $[\cH]^{2-l}$,
\begin{equation*}
    \|[C_{X,\lambda}^{-l/2}k(x,\cdot)]\|_{2-l}^2 = \sum_{i \in I}\left(\frac{\mu_i}{\mu_i+\lambda}\right)^l e_i(x)^2
\end{equation*}
Therefore,
\begin{align*}
    \int_{\cX}\|[C_{X,\lambda}^{-l/2}k(x,\cdot)]\|_{2-l}^2 \mathrm{d}\pi(x) &= \sum_{i \in I} \left(\frac{\mu_i}{\mu_i+\lambda}\right)^l \int_{\cX}e_i(x)^2 \mathrm{d}\pi(x) = \mathcal{N}_l(\lambda),
\end{align*}
where we used that $([e_i])_{i \in I}$ forms an orthonormal set in $L_2(\pi)$. 
\end{proof}

\begin{theo}[Theorem 3  by \citealp{lietal2023optimal}]
    Let Assumption~\eqref{asst:emb} be satisfied with parameter $\alpha \in (0,1]$. For any $\beta\in (0,\alpha]$, the inclusion map $I_{\pi}^{\beta,q_{\alpha,\beta}}:[\mathcal{G}]^{\beta}\hookrightarrow L_{q_{\alpha,\beta}}(\pi,\mathcal{Y})$ is continuous, where $q_{\alpha,\beta} := \frac{2\alpha}{\alpha-\beta}$.
\end{theo}

\subsection{Concentration Inequalities}

The following Theorem is from \citet[Theorem $26$]{fischer2020sobolev}.
\begin{theo}[Bernstein's inequality]\label{theo:ope_con_steinwart}
Let $(\Omega, \mathcal{B}, P)$ be a probability space, $H$ be a separable Hilbert space, and $\xi: \Omega \rightarrow H$ be a random variable with
$$
\mathbb{E}[\|\xi\|_{H}^{m}] \leq \frac{1}{2} m ! \sigma^{2} L^{m-2}
$$
for all $m \geq 2$. Then, for $\tau \geq 1$ and $n \geq 1$, the following concentration inequality is satisfied
$$
P^{n}\left(\left(\omega_{1}, \ldots, \omega_{n}\right) \in \Omega^{n}:\left\|\frac{1}{n} \sum_{i=1}^{n} \xi\left(\omega_{i}\right)-\mathbb{E}_{P} \xi\right\|_{H}^{2} \geq 32 \frac{\tau^{2}}{n}\left(\sigma^{2}+\frac{L^{2}}{n}\right)\right) \leq 2 e^{-\tau}.
$$
In particular, for $\tau\geq 1$ and $n\geq 1$, 
$$
P^{n}\left(\left(\omega_{1}, \ldots, \omega_{n}\right) \in \Omega^{n}:\left\|\frac{1}{n} \sum_{i=1}^{n} \xi\left(\omega_{i}\right)-\mathbb{E}_{P} \xi\right\|_{H} \geq 4\sqrt{2} \frac{\tau}{\sqrt{n}}\left(\sigma+\frac{L}{\sqrt{n}}\right)\right) \leq 2 e^{-\tau}.
$$
\end{theo}

\begin{lma}
    \label{lma: b_m_prop_5_dot_4}
    Let $\tau \geq \log(2)$, with probability at least $1-2e^{-\tau}$, for $\sqrt{n\lambda} \geq 8\tau \kappa \sqrt{\max\{\cN(\lambda),1\}}$, we have
    \begin{equation*}
        \left\|\hat{C}_{X,\lambda}^{-1}C_{X,\lambda} \right\|_{\cH \to \cH}\leq 2.
    \end{equation*}
\end{lma}
\begin{proof}
     The proof is identical to \citet[Proposition 5.4]{blanchard2018optimal} with the only difference that in their setting $\kappa =1$.
\end{proof}

\begin{prop}[Proposition C.9 \citealp{li2023on}] \label{prop:prop_c_9}
Let $\pi$ be a probability measure on $\cX, f \in L_2(\pi)$ and $\|f\|_{\infty} \leq M$. Suppose we have $x_1, \ldots, x_n$ sampled i.i.d. from $\pi$. Then, for any $\tau \geq \log(2)$, the following holds with probability at least $1-2e^{-\tau}$:
$$
\frac{1}{2}\|f\|_{L_2(\pi)}^2-\frac{5 \tau M^2}{3n}  \leq \|f\|_{2, n}^2 \leq \frac{3}{2}\|f\|_{L_2(\pi)}^2+\frac{5\tau M^2}{3 n},
$$  
where $\|\cdot\|_{2, n}$ was defined in Definition~\ref{def:emp_norm}.
\end{prop}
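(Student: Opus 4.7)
The plan is to recognize this as a direct application of the classical scalar Bernstein inequality to the i.i.d.\ non-negative random variables $Z_i := f(x_i)^2$. Since $\|f\|_\infty \leq M$ we have $Z_i \in [0, M^2]$, and by construction
\begin{equation*}
\mathbb{E}[Z_i] = \|f\|_{L_2(\pi)}^2,
\qquad
\operatorname{Var}(Z_i) \leq \mathbb{E}[Z_i^2] = \mathbb{E}[f(X)^4] \leq M^2 \|f\|_{L_2(\pi)}^2,
\end{equation*}
and $\|f\|_{2,n}^2 = \tfrac{1}{n}\sum_{i=1}^n Z_i$. So the statement is exactly a two-sided concentration bound for the empirical mean of the $Z_i$ around its expectation.

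Next, I would invoke the standard Bernstein inequality for bounded i.i.d.\ summands (equivalently, Theorem~\ref{theo:ope_con_steinwart} applied in the scalar case to $\xi = Z - \mathbb{E}Z$, whose absolute moments $\mathbb{E}|Z-\mathbb{E}Z|^m \leq M^{2(m-2)}\operatorname{Var}(Z) \leq \tfrac{1}{2}m!\,\sigma^2 L^{m-2}$ with $\sigma^2 = M^2\|f\|_{L_2(\pi)}^2$ and $L = M^2$ satisfy the Bernstein moment hypothesis). This yields, for $\tau \geq \log 2$, with probability at least $1 - 2e^{-\tau}$,
\begin{equation*}
\left|\|f\|_{2,n}^2 - \|f\|_{L_2(\pi)}^2\right| \leq \sqrt{\frac{2 M^2 \|f\|_{L_2(\pi)}^2\,\tau}{n}} + \frac{2M^2\tau}{3n}.
\end{equation*}
(The form I want is the one obtained by solving the quadratic $nt^2 - \tfrac{2bt\tau}{3} - 2\sigma^2\tau = 0$ in $t$ and bounding the square root by a sum of square roots.)

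Finally, I would peel off the square-root cross-term using the elementary inequality $\sqrt{ab}\leq \tfrac{1}{2}(a+b)$ with $a = \|f\|_{L_2(\pi)}^2$ and $b = 2M^2\tau/n$, giving
\begin{equation*}
\sqrt{\frac{2 M^2 \|f\|_{L_2(\pi)}^2\,\tau}{n}} \leq \frac{1}{2}\|f\|_{L_2(\pi)}^2 + \frac{M^2\tau}{n}.
\end{equation*}
Adding the remainder term $2M^2\tau/(3n)$ produces $M^2\tau/n + 2M^2\tau/(3n) = 5M^2\tau/(3n)$, so
\begin{equation*}
\left|\|f\|_{2,n}^2 - \|f\|_{L_2(\pi)}^2\right| \leq \frac{1}{2}\|f\|_{L_2(\pi)}^2 + \frac{5M^2\tau}{3n},
\end{equation*}
and rearranging the two directions of this absolute-value bound yields the claimed two-sided inequality. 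There is no real obstacle: the only step that requires care is choosing the AM--GM weighting so that the multiplicative slack on $\|f\|_{L_2(\pi)}^2$ is exactly $1/2$ (so the factors $1/2$ and $3/2$ appear on the two sides), and then combining the two additive $\tau M^2/n$ contributions to land on the stated constant $5/3$.
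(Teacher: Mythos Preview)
Your argument is correct and is exactly the standard route: apply the scalar Bernstein inequality to $Z_i = f(x_i)^2 \in [0,M^2]$ with variance proxy $M^2\|f\|_{L_2(\pi)}^2$, then absorb the square-root cross term via AM--GM with weight chosen so the multiplicative slack is $1/2$, which makes the additive constants combine to $5/3$. The paper itself does not supply a proof of this proposition; it is quoted verbatim from \cite{li2023on}, so there is no in-paper argument to compare against.

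One small caveat: your parenthetical appeal to Theorem~\ref{theo:ope_con_steinwart} as an ``equivalent'' route is not quite accurate for the stated constants and range $\tau \geq \log 2$, since that Hilbert-space Bernstein version requires $\tau \geq 1$ and carries the looser prefactor $4\sqrt{2}$. Your actual derivation does not rely on it---you use the classical scalar Bernstein bound $P(|\bar Z - \mathbb{E}Z| \geq t) \leq 2\exp\bigl(-nt^2/(2\sigma^2 + 2bt/3)\bigr)$ with $b = M^2$---which is the right tool here and yields the exact constants in the statement. Just drop (or soften) the parenthetical.
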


\begin{lma}[Lemma 12 \citealp{zhang2023spectraloptimality}]
    \label{lma:zhang_lemma_12}
    Let Assumptions~\eqref{asst:emb}, \eqref{asst:src} and \eqref{asst:mom} be satisfied. For $\tau\geq 1$, if $\lambda$ and $n$ satisfy that
    \begin{equation}
    \label{eq:zhang_lemma_12_key_condition}
        n \geq 8 A^{2} \tau \lambda^{-\alpha} \log\left(2e\mathcal{N}(\lambda)\frac{\|C_{X}\|_{\cH \to \cH}+\lambda}{\|C_{X}\|_{\cH \to \cH}}\right)
    \end{equation}
    then the following operator norm bounds are satisfied with probability not less than $1 - 2e^{-\tau}$
    \begin{align}
        \left\|C_{X,\lambda}^{-\frac{1}{2}}\hat{C}_{X,\lambda}^{\frac{1}{2}}\right\|_{\cH \to \cH}^2&\leq 2,\label{eq:conc_ineq_1}\\
        \left\|C_{X,\lambda}^{\frac{1}{2}}{\hat{C}_{X,\lambda}}^{-\frac{1}{2}}\right\|_{\cH \to \cH}^2&\leq 3.\label{eq:conc_ineq_2}
    \end{align}
\end{lma}

\subsection{Miscellaneous results}

\begin{lma}[Cordes inequality \cite{c1d6e481-4bac-30fa-b6cd-6fdbc0837241}]
    \label{lma: cordes_inequality}
    Let $A,B$ be two positive bounded linear operators on a separable Hilbert space $H$ and $s \in [0,1]$. Then
    \begin{equation*}
        \left\|A^sB^s\right\|_{H \to H}\leq \|A\|_{H \to H}^s\|B\|_{H \to H }^s
    \end{equation*}
\end{lma}

\begin{lma}[Lemma 25 \cite{fischer2020sobolev}]
\label{lma:li2023_sat_effect_Prop_D_2}
    For $\lambda>0$ and $s\in [0,1]$, we have
    \[\sup_{t\geq 0}\frac{t^s}{t+\lambda}\leq \lambda^{s-1}\]
\end{lma}

We recall the following basic Lemma from \citet[Lemma 2]{lietal2022optimal}. 
\begin{lma}
\label{lma: lietal2022optimal_Lemma_2}
    For $0\leq \gamma \leq 1$ and $F\in \mathcal{G}$, the inequality
    \begin{equation*}
        \|[F]\|_{\gamma} \leq \left\|CC_{X}^{\frac{1-\gamma}{2}}\right\|_{S_2(\cH,\mathcal{Y})}
    \end{equation*}
    holds, where $C = \bar{\Psi}^{-1}(F)\in S_2(\cH,\mathcal{Y})$. If, in addition, $\gamma < 1$ or $C\perp \mathcal{Y}\otimes \ker I_{\pi}$ is satisfied, then the result is an equality.
\end{lma}

\begin{defn}\label{def:holder}
 Let $\cX \subseteq \mathbb{R}^d$ be a compact set and $\theta \in (0,1]$. For a function $f: \cX \rightarrow \mathbb{R}$, we introduce the Hölder semi-norm
$$
[f]_{\theta, \cX}:=\sup _{x, y \in \cX, x \neq y} \frac{|f(x)-f(y)|}{\|x-y\|^\theta},
$$
where $\|\cdot\|$ represents the usual Euclidean norm. Then, we define the Hölder space
$$
C^\theta(\cX):=\left\{f: \cX \rightarrow \mathbb{R} \mid[f]_{\theta, \cX} < +\infty \right\},
$$
which is equipped with the norm
$$
\|f\|_{C^\theta(\cX)}:=\sup _{x \in \cX}|f(x)|+[f]_{\theta, \cX}.
$$
\end{defn}
The next lemma is used to prove Lemma~\ref{lma: covering_number_regularised_kernel_functions_infinity_norm_Holder} below. It appears in \citet[Lemma A.3]{li2023on}, albeit the use of an erroneous equality in their proof:  $\|k(x,\cdot) - k(y,\cdot)\|_{\cH}^2 = k(x,x)k(y,y) - k(x,y)^2$. We therefore provide our own proof of this result.

\begin{lma} \label{lma: holder_semi_norm_rkhs_norm}
    Assume that $\cH$ is an RKHS over a compact set $\mathcal{X}\subseteq\mathbb{R}^d$ associated with a kernel $k \in C^{\theta}(\mathcal{X}\times \mathcal{X})$ for $\theta \in (0,1]$. Then, we have $\mathcal{H} \subseteq C^{\frac{\theta}{2}}(\mathcal{X})$ and 
    \[[f]_{\frac{\theta}{2},\mathcal{X}} \leq \sqrt{2[k]_{\theta,\mathcal{X}\times \mathcal{X}}}\|f\|_{\cH}\]
\end{lma}
\begin{proof}
    For all $(x, y) \in \cX$ and $f \in \cH$, by the reproducing property and Cauchy–Schwarz inequality,
    $$
    |f(x) - f(y)|  = |\langle k(x,\cdot) - k(y,\cdot), f\rangle_{\cH}| \leq  \|f\|_{\cH}\|k(x,\cdot) - k(y,\cdot)\|_{\cH}.
    $$
    Then, using $k \in C^{\theta}(\mathcal{X}\times \mathcal{X})$, we obtain
    $$
    \|k(x,\cdot) - k(y,\cdot)\|_{\cH}^2 = k(x,x) + k(y,y) - 2k(x,y) \leq 2 [k]_{\theta,\mathcal{X}\times \mathcal{X}}\|x-y\|^{\alpha},
    $$
    which concludes the proof.
\end{proof}

We derive as a corollary a quantitative upper bound on the $\epsilon$-covering number of the the set of (spectral) regularized kernel basis function with respect to the $\|\cdot\|_{\infty}$ norm. 
\begin{lma}[Lemma C.10 \citealp{li2023on}]
\label{lma: covering_number_regularised_kernel_functions_infinity_norm_Holder}
Assume that $\cH$ is an RKHS over a compact set $\mathcal{X}\subseteq\mathbb{R}^d$ associated with a kernel $k \in C^{\theta}(\mathcal{X}\times \mathcal{X})$ for $\theta \in (0,1]$. Assume that $k(x,x)\leq \kappa^2$ for all $x\in \mathcal{X}$. Then, we have that for all $\epsilon > 0$, 
    \begin{align}
        \mathcal{N}(\mathcal{K}_{\lambda},\|\cdot\|_{\infty},\epsilon)&\leq c(\lambda\epsilon)^{-\frac{2d}{\theta}}\label{eq:covering_number_infty_norm}
    \end{align}
    where $\mathcal{K}_{\lambda}:= \left\{C_{X,\lambda}^{-1}k(x,\cdot)\right\}_{x\in \mathcal{X}}$, and $c$ is a positive constant which does not depend on $\lambda,\epsilon$ and only depends on $\kappa$ and $[k]_{\theta,\mathcal{X}\times \mathcal{X}}$. $\mathcal{N}(\mathcal{K}_{\lambda},\|\cdot\|_{\infty},\epsilon)$ denotes the $\epsilon-$covering number of the set $\mathcal{K}_{\lambda}$ in the norm $\|\cdot\|_{\infty}$ (see \citet[Definition 6.19]{steinwart2008support} for the definition of covering numbers).
\end{lma}
\end{document}